\documentclass{article}

\usepackage[margin=1in]{geometry} %
\usepackage{natbib}               %

\usepackage[utf8]{inputenc} %
\usepackage[T1]{fontenc}    %
\usepackage{hyperref}       %
\usepackage{url}            %
\usepackage{booktabs}       %
\usepackage{amsfonts}       %
\usepackage{nicefrac}       %
\usepackage{microtype}      %
\usepackage[table]{xcolor}  %

\ifdefined
\else
\newtheorem{theorem}{\bf{Theorem}}
\newtheorem{lemma}{\bf{Lemma}}
\newtheorem{proposition}{\bf{Proposition}}

\newtheorem{definition}{\bf{Definition}}

\def\QED{~\rule[-1pt]{5pt}{5pt}\par\medskip}
\newenvironment{proof}{\emph{Proof.}}{\hfill\QED}

\fi

\newenvironment{theorem*}[1][]{%
    \par\addvspace{\topsep}%
    \noindent{\bfseries Theorem\ifx&#1&\relax\else\ (#1)\fi\quad}\itshape%
    \ignorespaces%
}{%
    \par\addvspace{\topsep}%
}

\newenvironment{lemma*}[1][]{%
    \par\addvspace{\topsep}%
    \noindent{\bfseries Lemma\ifx&#1&\relax\else\ (#1)\fi\quad}\itshape%
    \ignorespaces%
}{%
    \par\addvspace{\topsep}%
}

\newenvironment{proposition*}[1][]{%
    \par\addvspace{\topsep}%
    \noindent{\bfseries Proposition\ifx&#1&\relax\else\ (#1)\fi\quad}\itshape%
    \ignorespaces%
}{%
    \par\addvspace{\topsep}%
}

\newenvironment{corollary*}[1][]{%
    \par\addvspace{\topsep}%
    \noindent{\bfseries Corollary\ifx&#1&\relax\else\ (#1)\fi\quad}\itshape%
    \ignorespaces%
    }{%
    \par\addvspace{\topsep}%
}

\newenvironment{assumption*}[1][]{%
    \par\addvspace{\topsep}%
    \noindent{\bfseries Assumption\ifx&#1&\relax\else\ (#1)\fi\quad}\itshape%
    \ignorespaces%
}{%
    \par\addvspace{\topsep}%
}

\newenvironment{remark*}[1][]{%
    \par\addvspace{\topsep}%
    \noindent{\bfseries Remark\ifx&#1&\relax\else\ (#1)\fi\quad}\itshape%
    \ignorespaces%
}{%
    \par\addvspace{\topsep}%
}

\DeclareMathAlphabet{\altmathcal}{OMS}{cmsy}{m}{n} %

\newcommand{\xmath}[1] {\ensuremath{#1}\xspace}
\newcommand{\blmath}[1] {\xmath{\bm{#1}}}

\newcommand{\vw}{\blmath{w}}

\newcommand{\mc}{\mathcal}

\newcommand{\bb}{\mathbb}

\newcommand{\indicator}[1]{\mathbbm 1_{#1}}

\newcommand{\expect}[2]{\bb E_{#1}\left[ #2 \right]}

\newcommand{\half}{\mbox{$\frac12$}}

\renewcommand{\mathbf}{\boldsymbol}

\usepackage{amsmath}
\usepackage{enumitem}
\usepackage{multirow}
\usepackage{tablefootnote}
\usepackage{bm}
\usepackage{bbm}
\usepackage{amssymb}
\usepackage{xspace}
\usepackage{tikz} %
\usetikzlibrary{arrows.meta, positioning}
\usepackage[colorinlistoftodos]{todonotes} %
\usepackage{pgfplots} %
\usepgfplotslibrary{groupplots}
\pgfplotsset{compat=1.17}
\usepackage{graphicx}
\usepackage{subcaption}
\usepackage{graphicx}
\usepackage{cleveref}
\usepackage{float}
\usepackage{wrapfig}
\usepackage{tcolorbox}

\usepackage{arydshln}   %
\usepackage{booktabs}

\usepackage{pgfplotstable}

\usepackage{mathtools} %
\newlength{\expfigw}\newlength{\expfigh}
\AtBeginDocument{\setlength{\expfigw}{0.6\textwidth}\setlength{\expfigh}{0.45\textwidth}}

\definecolor{greenLight}{RGB}{231, 243, 231}  %
\definecolor{greenDark}{RGB}{204, 229, 204}   %

\definecolor{lb}{RGB}{240, 245, 255}
\definecolor{db}{RGB}{220, 230, 255}
\definecolor{lr}{RGB}{255, 240, 220}
\definecolor{dr}{RGB}{255, 230, 200}
\definecolor{ld}{RGB}{240, 220, 255}
\definecolor{dd}{RGB}{220, 200, 255}

\title{
    \textbf{Retrieval-Based In-Context Learning: \\ A Domain Adaptation Framework}
    }

\author{
  Yilun Zhu\thanks{University of Michigan. \texttt{allanzhu@umich.edu}} \quad
  Naihao Deng\thanks{University of Michigan. \texttt{dnaihao@umich.edu}} \quad
  Yingcong Li\thanks{New Jersey Institute of Technology. \texttt{yingcong.li@njit.edu}} \quad
Naichen Shi\thanks{Northwestern University. \texttt{naichen.shi@northwestern.edu}} \quad
  Clayton Scott\thanks{University of Michigan. \texttt{clayscot@umich.edu}}
}
\date{}

\begin{document}

\maketitle

\begin{abstract}

In-context retrieval (ICR) is a retrieval-based form of in-context
learning (ICL) in which demonstrations are retrieved from a source database based on similarity to the query,
rather than sampled independently.
In this work, we formulate ICR as a type of domain adaptation problem, where the source distribution $P$ of the database may differ from the target distribution $Q$ of the test query-label pair.
We investigate the performance of ICR under a flexible class of distributional shifts that substantially extends prior work \citep{li2024fine,guo2025retrieval}, and establish theoretical guarantees that quantify the benefits and pitfalls of this learning paradigm.
Our theory is verified by experiments on synthetic and language tasks.

\end{abstract}

\section{Introduction}

In-context learning (ICL) is the ability of a pretrained model to predict the label of a query point using input-output demonstrations provided in its context, without updating the model's parameters. In most theoretical models of ICL, these demonstrations are modeled as i.i.d. draws from the same distribution that  governs the query-label pair~\citep{garg2022can,guo2024how}. Throughout this paper, ICL refers to this standard setting with i.i.d. demonstrations.

In this work, we consider retrieval-based ICL, in which the input-output demonstrations are selected as a function of the query point. In particular, given access to a large database of demonstrations, a retrieval algorithm selects examples whose feature representations are similar to that of the query point. It has been observed empirically that such retrieved demonstrations can substantially improve performance over randomly sampled demonstrations~\citep{liu2022makes,rubin2022learning,izacard2022atlasfewshotlearningretrieval,huang2024ravenincontextlearningretrievalaugmented}. Following recent work, we refer to this setting as \emph{in-context retrieval} (ICR)~\citep{luo2024context,guo2025retrieval}.

Prior work has begun to explain the benefits of ICR theoretically, but several important regimes remain unexplored. In particular, existing statistical models~\citep{li2024fine,guo2025retrieval} are restricted to linear-Gaussian settings. Moreover, while these models allow the retrieved demonstrations and the query-label pair to be governed by different marginal distributions, the regression function remains the same: in other words, they model \emph{covariate shift}, but not a \emph{posterior drift} that changes the optimal predictor.

We introduce a nonparametric statistical framework for binary classification that casts ICR as a domain adaptation problem admitting both covariate shift and posterior drift. The source domain governs the corpus of demonstrations, the target domain governs the query-label pair, and the predictor operates on a query-dependent subset of retrieved source examples. Throughout, we treat the pretrained model as frozen and study the statistical effect of the demonstrations selected for its context, abstracting away from
transformer-level mechanisms for implementing in-context learning.

Within this framework, we characterize the performance of ICR in terms of four key quantities: the context size $n$, the corpus size $m$, the degree of posterior drift, and distributional properties of the domains (in particular, the smoothness of the class posterior). Under a nearest-neighbor model of retrieval and prediction, we establish a finite-$(m,n)$ excess risk bound for ICR.
This bound yields two main conclusions. First, the \emph{benefit of retrieval}: given a sufficiently large corpus size $m$,
the ICR bound decays faster in $n$ than the ICL rate.
Second, the \emph{price of posterior drift}: when the source and target posteriors disagree, the ICR excess risk contains an irreducible bias that no learning rule with access only to the source corpus can uniformly avoid. Consequently, ICR can be asymptotically inconsistent, whereas ICL remains consistent.

Thus, ICR is preferable under a few-shot regime (small $n$) with a sufficiently large corpus and limited posterior drift; otherwise, using i.i.d.\ target demonstrations is the safer choice.
We support these findings with experiments on synthetic data and a sentiment-analysis task with annotator disagreement.

\section{Related work}

\paragraph{In-context learning.}

In-context learning (ICL) became prominent with GPT-3~\citep{brown2020language} and is
now a standard mechanism for adapting language models at inference time
~\citep{dong-etal-2024-survey,agarwal2024many}.
Beyond natural language, this success has also extended to other modalities, such as tabular data, where tabular foundation models \citep{hollmann2022tabpfn,hollmann2025accurate,qu2025tabicl,ma2025tabdpt} represent a compelling application of ICL.
A growing theoretical literature
analyzes ICL through the behavior of transformers, implicit inference, and task generalization~\citep{ahn2023transformers,zhang2024trained,xie2021explanation,li2023transformers,yang2024context,bai2023transformers}, and
related work has also examined more challenging regimes such as
out-of-distribution tasks, test-time training, and noisy in-context demonstrations
~\citep{kwon2025out,gozeten2025test,cheng2025exploring}.
These studies provide useful tools for understanding ICL with i.i.d. demonstrations, but they do not study the retrieval-based setup.

\paragraph{Retrieval-augmented generation.}
Retrieval-augmented generation (RAG) has become a central paradigm for enriching
language models with external knowledge by conditioning generation on retrieved
text~\citep{lewis2020retrieval,izacard2021leveraging,borgeaud2022improving, es-etal-2024-ragas, gupta2024survey}, and has seen applications beyond language-specific tasks \citep{chen2025empowering}.
Recent work has improved many components of RAG systems~\citep{Chen2024BGEMM, li2024llama2vec, xu2024recomp, asai2023selfrag, li2024chainofknowledge, xiang2024certifiably}.
Most of the literature on RAG is empirical and method-driven.
In this work, we provide theoretical insight into RAG by analyzing ICR, where query-dependent in-context demonstrations are retrieved.

\paragraph{In-context retrieval.}

The basic motivation for in-context retrieval (ICR) is that the empirical performance of ICL depends heavily on the choice of in-context examples~\citep{liu2022makes}. In particular, retrieval-based prompt selection has been shown to outperform randomly sampled demonstrations~\citep{rubin2022learning}.
This idea has been broadly validated across a range of settings \citep{luo2024context}, including %
in-domain labeled data~\citep{izacard2022atlasfewshotlearningretrieval,huang2024ravenincontextlearningretrievalaugmented,ye2023compositional},
cross-domain or cross-lingual retrieval~\citep{cheng2023uprise,shi2022xricl},
and LLM-generated demonstrations~\citep{zhang2022automatic,li2023mot}.

Most prior work on ICR is empirical, with limited theoretical understanding of when and why retrieval helps. Two recent works begin to fill this gap: \citet{li2024fine} model demonstrations whose features are $\alpha$-correlated with the query and show that higher correlation yields improved ICR performance, while \citet{guo2025retrieval} analyze the covariate shift between i.i.d.\ training-time demonstrations and retrieved test-time demonstrations. Both works study linear-regression settings where source and target share the same regression function.
In this work, in contrast, we develop a nonparametric framework for binary classification that admits general domain shift and characterizes when retrieval helps versus hurts.

Additional connections to the literature are made below, including to domain adaptation theory.

\section{A statistical framework for in-context retrieval}
\label{sec:icr_framework}

This section develops the statistical models underlying our analysis. We cast standard in-context learning as conventional classification, and in-context retrieval as domain adaptation with a retrieval-based learner.
We then revisit the two closest prior theory works \citep{li2024fine,guo2025retrieval} within this general framework.

\subsection{
    ICL
    as conventional classification}

We begin by recalling the standard statistical learning setup for binary classification, with feature space $\mc{X}\subseteq\mathbb{R}^d$ and label space $\mc{Y}=\{0,1\}$. Let $D$ be a probability distribution on $\mc{X}\times\mc{Y}$, referred to as a \emph{domain}. The $X$-marginal is denoted by $D_X$ and the class posterior distribution $Y|X$ is fully characterized by the
\emph{posterior probability function} (or \emph{posterior})
\begin{align*}
    \eta_D(x) := D(Y=1\mid X=x).
\end{align*}
A \emph{classifier} is a function $f:\mc{X}\to\{0,1\}$, whose \emph{risk} (with respect to $D$) is
\begin{align*}
    R_D(f) := \mathbb{P}_{(X,Y)\sim D}\bigl(f(X)\neq Y\bigr).
\end{align*}
A Bayes classifier $f_D^*$ is one that minimizes $R_D$, and the canonical Bayes classifier is
\begin{align*}
    f_D^*(x) := \indicator{\eta_D(x)\ge\half}.
\end{align*}
The \emph{excess risk} of $f$ is $\mc{E}_D(f):=R_D(f)-R_D^*$,
where $R_D^*:=R_D(f_D^*)$ is the Bayes risk.

A \emph{learner} is a function $\hat f:\mc{X}\times(\mc{X}\times\mc{Y})^n\to\{0,1\}$ that takes as input a
query
point $x$ and labeled training data $S_n = \{ (X_i,Y_i) \}_{i=1}^n$ and outputs a label. For fixed $S_n$, the map $f_{S_n} := \hat f(\cdot, S_n)$ is a conventional classifier: the \emph{classifier induced by} $S_n$. The performance of the learner is measured by the expected excess risk
\begin{equation}
    \label{eqn:expexcessrisk}
    \mc{E}_D(\hat{f}) := \expect{S_n\sim D^{\otimes n}}{R_D(f_{S_n}) - R_D^*},
\end{equation}
where
$S_n\sim D^{\otimes n}$ indicates that $(X_i,Y_i) \overset{\mathrm{i.i.d.}}{\sim} D$.

Assuming a frozen, pretrained model, we view standard ICL as conventional classification where the learner $\hat f$ is the frozen model and the training data $S_n$ are the in-context demonstrations. Throughout, the domain governing the query and the ICL demonstrations is called the \emph{target} domain and denoted $Q$, and we denote the test query-label pair by $(X_q,Y_q)$; the data-generation process of ICL is:
\begin{align*}
    \text{Demonstrations:} \ \
        & S_n=\{(X_i,Y_i)\}_{i=1}^n,
        \ (X_i,Y_i)\overset{\mathrm{i.i.d.}}{\sim} Q, \\
    \text{Query-label:} \ \
        & (X_q,Y_q)\sim Q,\quad\text{independent of }S_n.
\end{align*}
In particular, the performance of $\hat f$ is measured by \Cref{eqn:expexcessrisk} with $D=Q$, and we write $\mc{E}_Q^{\mathrm{ICL}}(\hat{f}) := \mc{E}_Q(\hat{f})$ for the expected excess risk of ICL.

\subsection{
    ICR
    as retrieval-based domain adaptation}

\emph{Domain adaptation} is a variation on standard classification in which there are two probability distributions $P$ and $Q$ on $\mc{X}\times\mc{Y}$, referred to as the \emph{source} and \emph{target} domains. In domain adaptation, the training data $S_n$ come from $P$, but the test points follow $Q$. A learner is still a function $\hat f:\mc{X}\times(\mc{X}\times\mc{Y})^n\to\{0,1\}$, but now its performance is measured with respect to
\begin{align*}
    \mc{E}_{P,Q}(\hat{f}) := \expect{S_n\sim P^{\otimes n}}{R_Q(f_{S_n}) - R_Q^*} .
\end{align*}
In our setting, the target domain $Q$ governs the test query-label pair $(X_q,Y_q)$, while $P$ governs the source corpus from which demonstrations are retrieved.

A \emph{domain shift} occurs when $P\neq Q$. We focus on two particular types of domain shift, which can be expressed in terms of the feature marginals $P_X$ and $Q_X$ and class posterior distributions $P_{Y|X}$ and $Q_{Y|X}$ of the source and target. In \emph{covariate shift}, $P_X\neq Q_X$ \citep{sugiyama2007covariate}, while in \emph{posterior drift}, $P_{Y\mid X}\neq Q_{Y\mid X}$ \citep{scott2019generalized,cai2021transfer,maity2021linear, zhu2024label, kirev2025pac}.
\footnote{Note that posterior drift is similar to the notion of \emph{concept drift}, although a concept typically connotes a deterministic classifier. The term \emph{conditional shift} has also been applied, although we find posterior drift more descriptive.}
These two types of domain shift may co-occur.

We view ICR as domain adaptation with a retrieval-based learner. Given $m \ge n$, an \emph{$(m,n)$-retriever} is a function $T_n: \mc{X} \times (\mc{X}\times\mc{Y})^m \to (\mc{X}\times\mc{Y})^n$ that takes as input a query point $x_q$ and a source corpus of demonstrations $\mathcal{D}_m = \{ (X_i,Y_i) \}_{i=1}^m$, and selects the $n$ demonstrations $S_n(x_q) := T_n(x_q,\mathcal{D}_m)$ in $\mathcal{D}_m$ whose feature vectors are most similar to $x_q$ (e.g., in $\ell_2$ distance). A \emph{retrieval-based learner} $\hat f_T$ is then a function of the form
\begin{align*}
    \hat f_T(x, \mathcal{D}_m) = \hat f\bigl(x, T_n(x,\mathcal{D}_m)\bigr) = \hat f(x, S_n(x)),
\end{align*}
where $\hat f$ is any conventional learner as described above. Note that for a fixed corpus $\mathcal{D}_m$, the map $f_{\mathcal{D}_m} := \hat f_T(\cdot, \mathcal{D}_m)$ is the classifier induced by the corpus, analogous to $f_{S_n}$ above. Although the corpus has size $m$, this classifier uses only $n \ll m$ demonstrations as context for any given query. In summary, the data-generation process of ICR is:
\begin{align*}
    \text{Source corpus:} \quad
        & \mathcal{D}_m=\{(X_i,Y_i)\}_{i=1}^m,\quad
          (X_i,Y_i)\overset{\mathrm{i.i.d.}}{\sim} P, \\
    \text{Query-label:} \quad
        & (X_q,Y_q)\sim Q,\quad\text{independent of }\mathcal{D}_m, \\
    \text{Retrieve:} \quad
        & S_n(X_q) = T_n(X_q,\mathcal{D}_m)\subseteq\mathcal{D}_m.
\end{align*}

A retrieval-based learner is itself a learner, but its ``training data'' is the source corpus $\mathcal{D}_m$ rather than $S_n$. Its expected excess risk therefore integrates out $\mathcal{D}_m$:
\begin{align*}
    \mc{E}_Q^{\mathrm{ICR}}(\hat{f})
    := \mc{E}_{P,Q}(\hat{f}_T)
    = \expect{\mathcal{D}_m\sim P^{\otimes m}}{R_Q(f_{\mathcal{D}_m}) - R_Q^*}.
\end{align*}

In summary,
ICL and ICR differ in where the demonstrations come from ($Q$ versus $P$) and how they are selected (i.i.d.\ sampling versus retrieval). Figure~\ref{fig:icl_icr} contrasts the two learning paradigms of ICL and ICR.

\begin{figure}[htbp]
\centering
\resizebox{0.6\linewidth}{!}{%
\begin{tikzpicture}[
  every node/.style = {font=\normalsize},
  flowbox/.style = {draw, rounded corners=4pt, minimum width=2.4cm,
                    minimum height=0.65cm, align=center, thick, fill=lb},
  pred/.style = {draw, rounded corners=4pt, minimum width=2.4cm,
                 minimum height=0.65cm, align=center, thick, fill=greenLight},
  arr/.style = {-{Stealth[length=2.5mm]}, thick},
]

\node[font=\normalsize\bfseries, anchor=east] at (-0.3, 0.8) {ICL};
\node[flowbox] (icl_s) at (1.4, 0.8) {randomly sample \\ $S_n$};
\node[flowbox] (icl_x) at (4.7, 0.8) {draw query\\$X_q$};
\node[pred] (icl_f) at (7.9, 0.8) {predict\\$\hat{f}(X_q,S_n)$};
\node[font=\normalsize] at (3.15, 0.8) {\&};
\draw[arr] (icl_x) -- (icl_f);

\node[font=\normalsize\bfseries, anchor=east] at (-0.3, -0.8) {ICR};
\node[flowbox] (icr_x) at (1.4, -0.8) {draw query\\$X_q$};
\node[flowbox] (icr_s) at (4.7, -0.8) {retrieve \\ $S_n(X_q)$};
\node[pred] (icr_f) at (7.9, -0.8) {predict\\$\hat{f}(X_q,S_n(X_q))$};
\draw[arr] (icr_x) -- (icr_s);
\draw[arr] (icr_s) -- (icr_f);

\end{tikzpicture}
}%
\caption{
Standard ICL vs ICR.
  }
\label{fig:icl_icr}
\end{figure}

\subsection{Connections to prior ICR and domain adaptation theory}

The two prior theoretical works on ICR most closely related to ours are \citet{li2024fine} and
\citet{guo2025retrieval}.
However, both consider Gaussian linear regression ($\mc{Y}=\mathbb{R}$) as their data model, whereas our analysis
targets binary classification ($\mc{Y}=\{0,1\}$).
Despite the difference in statistical model, each can be situated within our
$(P,Q)$ framework.

\paragraph{\citet{li2024fine}.}
Their data model is Gaussian linear regression with a single latent weight vector
$\vw$ shared between the query and all demonstrations:
\begin{gather*}
    X_q \sim \mathcal{N}(0, I_d), \qquad
    \vw \sim \mathcal{N}(0, I_d), \qquad
    Y_q = \vw^\top X_q + \varepsilon_q, \\
    X_i \mid X_q \sim \mathcal{N}(\alpha X_q, (1-\alpha^2) I_d), \qquad
    Y_i = \vw^\top X_i + \varepsilon_i,
    \qquad i=1,\dots,n,
\end{gather*}
where $\alpha \in [0,1]$ controls retrieval similarity and $\varepsilon_q,\varepsilon_i$'s are random noise.
In our $(P,Q)$ language:
$Q = \mathcal{N}(0,I_d)\otimes\mathcal{N}(\vw^\top\cdot,\sigma^2)$
(i.e.\ $Q_X=\mathcal{N}(0,I_d)$ and $Q_{Y\mid X=x}=\mathcal{N}(\vw^\top x,\sigma^2)$).
Marginalizing $X_i$ over $X_q$ gives $P_X=\mathcal{N}(0,I_d)=Q_X$, and the
posterior is the same: $P_{Y\mid X}=Q_{Y\mid X}$.
Thus $P=Q$;
so their model captures the effect of retrieval, but not a domain shift.

\paragraph{\citet{guo2025retrieval}.}
They also use a Gaussian linear model with a shared latent weight vector, but focus on
the covariate shift induced by retrieval offsets:
\begin{gather*}
    X_q \sim \mathcal{N}(0, I_d), \qquad
    \vw \sim \mathcal{N}(0, I_d), \qquad
    Y_q = \vw^\top X_q + \varepsilon_q, \\
    X_i = X_q + r_i, \qquad
    Y_i = \vw^\top X_i + \varepsilon_i,
    \qquad i=1,\dots,n,
\end{gather*}
where $r_i$ is the retrieval residual and $\varepsilon_q,\varepsilon_i$'s are random noise.
In our language: $Q_X=\mathcal{N}(0,I_d)$, while the marginal of $X_i$ is shifted
by $r_i$, so $P_X\neq Q_X$ in general.
Nevertheless, the regression function is shared,
$\mathbb{E}[Y\mid X=x]=\vw^\top x$ under $P$ and $Q$; the retrieved labels may carry a different noise variance, which leaves the optimal predictor unchanged.
The effect they study is therefore \emph{covariate shift} ($P_X\neq Q_X$) together with label noise,
not a change in the optimal predictor.

\paragraph{Limitations of prior ICR theory.}
Although both papers study ICR, they focus primarily on the relevance between the features $X_i$ and $X_q$, and assume that demonstrations and query share the same regression function.
In domain adaptation terms, there is \emph{no posterior drift} between $P$ and $Q$ that alters the optimal predictor.
In practice, however,
the source corpus may come from a different distribution with
$P_{Y\mid X}\neq Q_{Y\mid X}$ (where $f^*_P \neq f^*_Q$).
In addition, both papers adopt isotropic Gaussian models for $P$ and $Q$, whereas our focus is on nonparametric classification.

\paragraph{Relation to existing domain adaptation theory.}
The primary technical tool we use to model domain adaptation is the notion of relative signal strength (RSS) introduced by \citet{zhu2024label}. RSS extends the prior model of \citet{cai2021transfer} by allowing the source and target Bayes classifiers to differ on a set of positive measure. Similarly, \citet{fan2025robust} quantify source unreliability through an \emph{ambiguity level} that weights the target mass of unreliable regions by the target margin. As seen below in Lemma~\ref{lem:fundamental-reduction}, integrating the pointwise risk inequality implied by RSS gives a risk-level inequality in the spirit of the transfer exponent of \citet{hanneke2019value} and the modulus of transfer of \citet{hanneke2025adaptive}.
The RSS-based framework differs substantially from discrepancy-based approaches to domain adaptation \citep{ben2006analysis,mansour2009domain}, which characterize domain relatedness through global distributional discrepancies rather than pointwise relationships between the source and target posteriors. Finally, none of the aforementioned works consider \emph{retrieval-based learners}, the defining feature of ICR.

\section{Performance guarantee for in-context retrieval}
Our analysis proceeds in two steps. First, we convert the target-domain excess risk to a source-domain excess risk and establish a lower bound showing that the bias induced by posterior drift is unavoidable; these results are independent of any particular learning algorithm.
Second, we derive rates of convergence for a nearest-neighbor type retrieval-based learner for ICR, and contrast with the rates of a comparable nearest-neighbor learner for ICL.

\subsection{Relative signal strength and CSPD}
\label{subsec:rss-model-class}

The central challenge of ICR is that retrieved demonstrations come from $P$,
whose posterior $\eta_P$ may differ from the target $\eta_Q$.
We quantify this mismatch via the concept of relative signal strength of \citet{zhu2024label}.
For $D \in \{P,Q\}$, let the support of $D_X$ be denoted $\Omega_D$.
\vspace{-0.5em}

\begin{definition}[Relative Signal Strength (RSS)]
\label{def:rss}
For domains $P$ and $Q$,
define the
\emph{relative signal strength} at $x\in\mc{X}$ as
\begin{equation}
\label{eqn:rss}
    \mc{M}(x;P,Q)
    = \begin{cases}
        \max\!\left\{\frac{\eta_P(x)-\half}{\eta_Q(x)-\half},\,0\right\},
        & x \in \Omega_Q \cap \Omega_P,\\[4pt]
        0, & \text{otherwise}.
      \end{cases}
\end{equation}
On $\Omega_Q \cap \Omega_P$, we use the convention $\mc{M}(x;P,Q):=+\infty$ whenever $\eta_Q(x)=\half$.

For $\kappa>0$, define the \emph{high-RSS region}
$\mc{A}_\kappa(P, Q):=\{x\in\mc{X}:\mc{M}(x;P,Q)\ge\kappa\}$.
\end{definition}
$\mc{A}_\kappa(P,Q)$ is a subset of the feature space $\mc{X}$ where source data is reliable for predicting target labels; larger $\kappa$ means greater reliability.
If $(\eta_P(x)-\half)(\eta_Q(x)-\half)<0$, i.e., source and target favor opposite labels, then $\mc{M}(x;P,Q)=0$, so $x\in\mc{A}_\kappa^c$ for every $\kappa>0$.
Outside the support of $Q_{X}$, the RSS is defined to be zero. This is a natural convention in the context of ICR, where a large, diverse source database may cover regions beyond the target domain.

We combine RSS with a covariate shift assumption: $Q_X\ll P_X$ and
\begin{align}
    \lambda^{-1} \;\leq\; \frac{dQ_X}{dP_X}(x) \;\leq\; \lambda,
    \qquad \text{for } P_X\text{-a.e. } x\in\Omega_Q,
    \label{eq:density-ratio}
\end{align}
for some $\lambda\ge 1$.
That is, the source and target $X$-marginals are within a factor of $\lambda$ of each other at every point of the target support $\Omega_Q$, except possibly on a set that the source never samples from.
A sufficient condition for this to hold is that both $P$ and $Q$ have Lebesgue densities that are bounded above and below, and the support of the source distribution covers that of the target.

\begin{definition}[CSPD distribution class]
\label{def:model-class}

For $\varepsilon\in[0,1]$, $\kappa>0$, and $\lambda\ge 1$, let
\[
    \Pi(\varepsilon,\kappa,\lambda)
    :=
    \Bigl\{(P,Q) :\;
      Q_X\!\bigl(\mc{A}_\kappa(P,Q)\bigr)\ge 1-\varepsilon
      \;\text{ and }\;
      Q_X\ll P_X,\;\;\lambda^{-1}\le\tfrac{dQ_X}{dP_X}(x)\le\lambda
      \;\text{ for } P_X\text{-a.e. } x\in\Omega_Q\Bigr\}.
\]
\end{definition}
We refer to $\Pi(\varepsilon,\kappa,\lambda)$ as the \emph{Covariate Shift with Posterior Drift} (CSPD) distribution class.

Here $\varepsilon$ upper-bounds the target mass of the low-RSS region (where source labels may be unreliable); $\kappa$ is the RSS threshold; $\lambda$ bounds the covariate shift between $P_X$ and $Q_X$.

The following lemma relates the target excess risk to the source excess risk.

\begin{lemma}[Risk transfer]
\label{lem:fundamental-reduction}
If $(P,Q)\in\Pi(\varepsilon,\kappa,\lambda)$, then for any classifier $f:\mc{X}\to\{0,1\}$,
\begin{align*}
    R_Q(f) - R_Q^*
    \;\leq\;
    \varepsilon + \frac{\lambda}{\kappa}\,\bigl(R_P(f) - R_P^*\bigr).
\end{align*}
\end{lemma}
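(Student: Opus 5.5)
The plan is to use the standard pointwise expression of excess risk and then split the target support into the high-RSS region $\mc{A}_\kappa:=\mc{A}_\kappa(P,Q)$ and its complement. For any classifier $f$ and domain $D$,
\begin{align*}
    R_D(f)-R_D^* = \int \bigl|2\eta_D(x)-1\bigr|\,\indicator{f(x)\neq f_D^*(x)}\,dD_X(x).
\end{align*}
Applying this with $D=Q$, we split the integral over $\mc{A}_\kappa$ and $\mc{A}_\kappa^c$. On $\mc{A}_\kappa^c$ we bound the integrand by $1$, since $|2\eta_Q-1|\le 1$. The definition of $\Pi(\varepsilon,\kappa,\lambda)$ then gives a contribution of at most $Q_X(\mc{A}_\kappa^c)\le\varepsilon$.

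On $\mc{A}_\kappa$, which by definition of RSS lies in $\Omega_Q\cap\Omega_P$, we compare pointwise. If $\eta_Q(x)=\half$, the target integrand vanishes, so we may drop such points. Otherwise $\mc{M}(x;P,Q)\ge\kappa>0$ forces $\eta_P(x)-\half$ to have the same sign as $\eta_Q(x)-\half$, with
\begin{align*}
    |\eta_P(x)-\half|\ge\kappa\,|\eta_Q(x)-\half|.
\end{align*}
Two consequences follow. First, $\eta_P(x)\neq\half$, so the canonical Bayes classifiers satisfy $f_P^*(x)=f_Q^*(x)$; this is where the tie-breaking convention $\indicator{\eta\ge\half}$ needs a quick check. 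Hence $\indicator{f\neq f_Q^*}=\indicator{f\neq f_P^*}$ on this set. Second, $|2\eta_Q-1|\le\kappa^{-1}|2\eta_P-1|$. Together these give
\begin{align*}
    \int_{\mc{A}_\kappa}|2\eta_Q-1|\,\indicator{f\neq f_Q^*}\,dQ_X
    \le \frac{1}{\kappa}\int_{\mc{A}_\kappa}|2\eta_P-1|\,\indicator{f\neq f_P^*}\,dQ_X.
\end{align*}

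Next we change measure. Since $Q_X\ll P_X$ and $dQ_X/dP_X\le\lambda$ holds $P_X$-a.e.\ on $\Omega_Q$, and $\mc{A}_\kappa\subseteq\Omega_Q$, we can replace $dQ_X$ by $\lambda\,dP_X$ on $\mc{A}_\kappa$. The integrand is nonnegative, so we may then enlarge the domain of integration to all of $\mc{X}$, which yields the bound $\frac{\lambda}{\kappa}(R_P(f)-R_P^*)$. Adding the two pieces gives the claimed inequality.

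The main obstacle is bookkeeping rather than depth. One issue is measurability and the a.e.\ qualifiers: we must make sure $\mc{A}_\kappa$ is measurable and that the density bound, stated only $P_X$-a.e.\ on $\Omega_Q$, suffices. It does, because $Q_X\ll P_X$, so $P_X$-null sets are $Q_X$-null. The other issue is the edge cases $\eta_Q=\half$ (where $\mc{M}=+\infty$) and $\eta_P=\half$; for the latter we must confirm it cannot occur on $\mc{A}_\kappa$ when $\eta_Q\neq\half$. Everything else is the routine pointwise excess-risk identity.
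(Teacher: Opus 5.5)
Your proposal is correct and follows the paper's proof: split the pointwise excess risk over $\mc{A}_\kappa$ and its complement, bound the complement by $Q_X(\mc{A}_\kappa^c)\le\varepsilon$, use agreeing Bayes classifiers to get $\Delta_Q\le\kappa^{-1}\Delta_P$ on $\mc{A}_\kappa$, and change measure via $dQ_X\le\lambda\,dP_X$. The only cosmetic difference is that you keep the indicator of $\mc{A}_\kappa\subseteq\Omega_Q$ through the change of measure and then enlarge the domain, whereas the paper's lemma proof drops the indicator first (its ICR theorem proof keeps it, as you do).
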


The full proof is in Appendix~\ref{proof:fundamental-reduction}. Lemma~\ref{lem:fundamental-reduction} shows that the target excess risk is controlled by the source excess risk, up to a factor $\lambda/\kappa$ and an additive error $\varepsilon$.
This fundamental reduction allows us to bound  the expected risk of ICR
in terms of the $P$-excess risk of the learner:
\[
\mc{E}_Q^{\mathrm{ICR}}(\hat{f})
    = \expect{\mathcal{D}_m\sim P^{\otimes m}}{R_Q(f_{\mathcal{D}_m}) - R_Q^*}
    \leq \varepsilon + \frac{\lambda}{\kappa}\,\expect{\mathcal{D}_m\sim P^{\otimes m}}{R_P(f_{\mathcal{D}_m}) - R_P^*}.
\]

We next show that an error proportional to $\varepsilon$ cannot be uniformly avoided under posterior drift.

\begin{proposition}[Lower bound for ICR]
\label{prop:icr-lower-bound-simple}
Assume $\mathcal X$ contains at least two distinct points. For any $\varepsilon\in[0,1]$, $\kappa>0$, $\lambda\ge 1$, and any learning rule $\hat{f}$ that observes a corpus $\mc{D}_m\sim P^{\otimes m}$ and produces a classifier $\hat{f}(\cdot)$,
\[
    \inf_{\hat{f}}\;\sup_{(P,Q)\in\Pi(\varepsilon,\kappa,\lambda)}
    \mc{E}_Q^{\mathrm{ICR}}(\hat{f})
    \;\ge\; \frac{\varepsilon}{2}.
\]
\end{proposition}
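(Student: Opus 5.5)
We will use a two-point (Le Cam style) construction: two pairs $(P,Q^{(0)})$ and $(P,Q^{(1)})$ in $\Pi(\varepsilon,\kappa,\lambda)$ that share the \emph{same} source $P$ and differ only in the target posterior on a region of target mass $\varepsilon$. Since the learner sees only $\mc{D}_m\sim P^{\otimes m}$, the corpus has the same law under both hypotheses, and so does the induced classifier $f_{\mc{D}_m}$. On the disputed region it cannot match both target Bayes classifiers, and averaging over the two hypotheses gives the bound $\varepsilon/2$.

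\textbf{Construction.} Pick distinct $a,b\in\mc X$. Set $P_X=Q_X=(1-\varepsilon)\delta_a+\varepsilon\delta_b$, so $dQ_X/dP_X=1\in[\lambda^{-1},\lambda]$. The edge cases $\varepsilon\in\{0,1\}$ need no special care: the bound is trivial at $\varepsilon=0$, and the construction still works at $\varepsilon=1$ with the mass at $a$ vanishing. Choose the source posterior with $\eta_P(a)=1$ and $\eta_P(b)=\half$. For the targets, take $\eta_{Q^{(j)}}(a)=1$ for both $j$, with $\eta_{Q^{(0)}}(b)=0$ and $\eta_{Q^{(1)}}(b)=1$. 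Then $\mc M(a)=1$. To get $\mc M(a)\ge\kappa$ for arbitrary $\kappa$, instead set $\eta_Q(a)=\half+\min\{\half,1/(2\kappa)\}$ while keeping $\eta_P(a)=1$. Then $\mc M(a)=\half/(\eta_Q(a)-\half)\ge\kappa$. Hence $Q_X(\mc A_\kappa)\ge 1-\varepsilon$ for both targets, and both pairs lie in $\Pi(\varepsilon,\kappa,\lambda)$. We do not need $b\in\mc A_\kappa$; in fact $\mc M(b)=0$ since $\eta_P(b)=\half$.

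\textbf{Risk computation.} For any classifier $f$, the excess risk under $Q^{(j)}$ is
\[
\sum_x Q_X(\{x\})\,|2\eta_{Q^{(j)}}(x)-1|\,\mathbf 1\{f(x)\neq f^*_{Q^{(j)}}(x)\}.
\]
At $b$ the weight is $\varepsilon$ and the margin is $|2\eta-1|=1$. The Bayes labels at $b$ are $0$ under $Q^{(0)}$ and $1$ under $Q^{(1)}$. Therefore
\[
\mc E_{Q^{(0)}}^{\mathrm{ICR}}+\mc E_{Q^{(1)}}^{\mathrm{ICR}}\ \ge\ \varepsilon\,\mathbb E_{\mc D_m\sim P^{\otimes m}}\bigl[\mathbf 1\{f_{\mc D_m}(b)\neq 0\}+\mathbf 1\{f_{\mc D_m}(b)\neq 1\}\bigr]=\varepsilon.
\]
The expectation is over the same law in both terms, because $P$ is shared. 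So $\max_j\mc E_{Q^{(j)}}^{\mathrm{ICR}}\ge\varepsilon/2$ for every $\hat f$. This includes randomized learners: averaging over the internal randomness preserves the identity. Taking the infimum over $\hat f$ gives the claim.

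\textbf{Main obstacle.} There is no real analytic difficulty; the only care is in verifying membership in $\Pi$ for every admissible $\kappa$ and $\lambda$. That is what the adjusted value of $\eta_Q(a)$ above handles. One must also make sure the high-RSS requirement is phrased through $Q_X$-mass, so the disputed point $b$ may carry RSS $0$ without violating the class definition.
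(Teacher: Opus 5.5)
Your proof is correct and follows essentially the same route as the paper: a two-point construction with $P_X=Q_X$, one source $P$ shared by both hypotheses (so the corpus carries no information), $\eta_P=\tfrac12$ at the mass-$\varepsilon$ point where the deterministic target label is flipped, and averaging over the two hypotheses to get $\varepsilon/2$. The differences are cosmetic. You take $\eta_P(a)=1$ and shrink $\eta_Q(a)$ so that $\mathcal{M}(a)=\max\{1,\kappa\}$, whereas the paper uses $\gamma=1/(4\max\{1,\kappa\})$ to make the RSS at the high point exactly $\kappa$. You also sum over the two hypotheses rather than averaging over a uniform $b$, which is equivalent.
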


Full proof in Appendix~\ref{proof:icr-lower-bound-simple}. Proposition~\ref{prop:icr-lower-bound-simple} establishes the cost of posterior drift: no ICR learner can uniformly eliminate an additive bias of order $\varepsilon$ over $\Pi(\varepsilon,\kappa,\lambda)$. The next subsection turns to the benefit of retrieval, showing that it allows the source-side excess risk to decay faster with respect to the context size $n$.

\subsection{Finite sample analysis of ICR}
\label{subsec:rates}

The results of the previous subsection are learner-agnostic: Lemma~\ref{lem:fundamental-reduction} holds for any classifier, and Proposition~\ref{prop:icr-lower-bound-simple} for any learner. To get a precise finite-sample analysis, we must further specify (1) distributional assumptions on $(P,Q)$ and (2) the functional form of the in-context learner. For the latter, we adopt a nearest-neighbor majority-vote model, for two reasons.
First, predictions are formed directly from the demonstrations in the context without parameter updates, and transformer architectures are sufficiently expressive to emulate nearest-neighbor prediction ~\citep{li2024one,agarwal2024many}.
Second, nearest-neighbor prediction mirrors the structure of ICR itself: it is precisely a ``retrieve-then-vote'' procedure, making it a natural theoretical model for retrieval-based prediction.

Let $Y_{(j)}(x, s)$ denote the label of the $j$-th nearest neighbor of $x$
in dataset $s$, where distances are measured by the metric $\rho$ on $\mc{X}$. Ties are broken by the original sample order, independently of labels. Recall that for ICL, $S_n=\{ (X_i,Y_i) \}_{i=1}^n \sim Q^{\otimes n}$ contains i.i.d. demonstrations from the target; and for ICR, $S_n$ is retrieved from a source corpus $\mathcal{D}_m \sim P^{\otimes m}$. We define the corresponding learners as follows.
\begin{align*}
    \textbf{ICL learner:}\quad &
    \hat{f}_{\mathrm{ICL}}(X_q, S_n)
    =
    \begin{cases}
        1, & %
        \frac{1}{k}\sum_{j=1}^{k} Y_{(j)}(X_q,S_n) \geq \tfrac{1}{2}, \\[4pt]
        0, & \text{otherwise},
    \end{cases}\\[6pt]
    \textbf{ICR learner:}\quad &
    \hat{f}_{\mathrm{ICR}}(X_q, S_n(X_q))
    =
    \begin{cases}
        1, & %
        \frac{1}{n}\sum_{j=1}^{n} Y_{(j)}(X_q,\mc{D}_m) \geq \tfrac{1}{2}, \\[4pt]
        0, & \text{otherwise},
    \end{cases}
\end{align*}
Specifically, the ICL learner applies a $k$-nearest-neighbor majority vote ($k \leq n$) among $S_n$, with the rate-optimal choice $k \asymp n^{2\alpha/(2\alpha+1)}$ \citep{chaudhuri2014rates}. The ICR learner instead averages over all $n$ demonstrations: the retriever has already selected the $n$ corpus examples closest to $X_q$.

We now bound the excess risks of both learners under
distributional assumptions on $P$ and $Q$, and combine with Lemma~\ref{lem:fundamental-reduction}
to obtain the ICL and ICR rates.

\noindent\textbf{Assumptions on the target $Q$. }
We impose the following conditions on $Q$.
\begin{itemize}
  \item \emph{$(\alpha,L)$-smoothness}: for some $\alpha,L>0$,
    \[
        \bigl|\eta_Q(B(x,r))-\eta_Q(x)\bigr|
        \;\leq\; L\,Q_X\!\bigl(B^o(x,r)\bigr)^\alpha
    \]
    for all $x\in\mathrm{supp}(Q_X)$ and $r>0$,
    where $\eta_Q(A):=Q_X(A)^{-1}\!\int_A\eta_Q\,dQ_X$, and $B(x,r):=\{x'\in\mc{X}:\rho(x,x')\leq r\}$ and $B^o(x,r):=\{x'\in\mc{X}:\rho(x,x')< r\}$ are the closed and open balls of radius $r$ around point $x$.
  \item \emph{$\beta$-margin condition}: for some $\beta\ge 0$ and $C_\beta>0$,
  \begin{align*}
    Q_X\!\bigl(|\eta_Q(X)-\half|<t\bigr)\leq C_\beta\,t^\beta \quad \text{for all } t>0.
  \end{align*}
\end{itemize}

The $(\alpha, L)$-smoothness condition \citep{chaudhuri2014rates} characterizes how fast $\eta_Q$ varies over local neighborhoods, with the variation controlled by the probability mass $Q_X(B^o(x,r))$. The larger the $\alpha$, the smoother. It generalizes the H\"{o}lder smoothness condition: if $\eta_Q$ is $\alpha_H$-H\"{o}lder continuous and $Q_X$ has a density bounded below on a bounded support with regular boundary (e.g., a cube), then $\eta_Q$ is $(\alpha_H/d, L')$-smooth for some constant $L'$ \citep{chaudhuri2014rates}.
The $\beta$-margin condition \citep{audibert2007fast} controls how much probability mass lies near the decision boundary $\eta_Q(x)=\tfrac{1}{2}$, with larger $\beta$ implying a smaller ambiguous region and hence an easier classification problem.

\noindent\textbf{Assumption on the source $P$. }
For the finite-corpus analysis, we further assume $\eta_P$ is $(\alpha_P,L_P)$-smooth.

The following lemma derives the
margin condition on source distribution $P$ from the RSS condition.

\begin{lemma}[Margin transfer via RSS]
\label{lem:margin-transfer}
If $(P,Q)\in\Pi(\varepsilon,\kappa,\lambda)$ and $\eta_Q$ satisfies the
$\beta$-margin condition with constant $C_\beta$, then
\begin{equation}
    P_X\!\bigl( \{|\eta_P(X)-\half|<t\}\cap A_\kappa \bigr)
    \;\leq\;
    \lambda C_\beta\kappa^{-\beta}\,t^\beta
    \quad\forall\,t>0.
    \label{eq:margin-transfer}
\end{equation}
where $A_\kappa:=\mc{A}_\kappa(P,Q)$ is the high-RSS region.
\end{lemma}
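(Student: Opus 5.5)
The plan is a pointwise set inclusion followed by a change of measure. First, on $A_\kappa$ the RSS is at least $\kappa$. Pointwise, for $x\in A_\kappa$ we have $x\in\Omega_Q\cap\Omega_P$ by the definition of $\mc{M}$, and
\[
\max\Bigl\{\tfrac{\eta_P(x)-\half}{\eta_Q(x)-\half},0\Bigr\}\ge\kappa>0 ,
\]
or else $\eta_Q(x)=\half$, in which case $\mc{M}=+\infty$ by convention. In the case $\eta_Q(x)\neq\half$, the ratio is positive and at least $\kappa$, so $\eta_P(x)-\half$ and $\eta_Q(x)-\half$ have the same sign. Hence $|\eta_P(x)-\half|\ge\kappa|\eta_Q(x)-\half|$. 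In the case $\eta_Q(x)=\half$, the same inequality holds trivially. Either way we obtain the inclusion
\[
\{|\eta_P-\half|<t\}\cap A_\kappa\subseteq\{|\eta_Q-\half|<t/\kappa\}\cap\Omega_Q .
\]

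Second, I convert the $P_X$-measure to $Q_X$-measure using the density-ratio bound in Definition~\ref{def:model-class}. On $\Omega_Q$, up to a $P_X$-null set, we have $dQ_X/dP_X\ge\lambda^{-1}$, so $dP_X\le\lambda\,dQ_X$ there. Therefore for any measurable $B\subseteq\Omega_Q$,
\[
P_X(B)=\int_B dP_X\le\lambda\int_B \tfrac{dQ_X}{dP_X}\,dP_X=\lambda Q_X(B).
\]
Applying this to the set from the first step gives
\[
P_X\bigl(\{|\eta_P-\half|<t\}\cap A_\kappa\bigr)\le\lambda\,Q_X\bigl(|\eta_Q-\half|<t/\kappa\bigr).
\]
The $\beta$-margin condition then bounds this by $\lambda C_\beta(t/\kappa)^\beta$, which is the claim.

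The main technical care is in the change of measure. It is only valid on $\Omega_Q$, which is exactly why the restriction to $A_\kappa$ is needed: $A_\kappa\subseteq\Omega_Q\cap\Omega_P$ because the RSS vanishes off the joint support. One should also check that the exceptional $P_X$-null set in the density bound does not matter, which it does not, since it contributes zero $P_X$-mass. Finally, the $\eta_Q=\half$ convention case must be handled, which it is by the trivial inequality noted above.
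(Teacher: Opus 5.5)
Your proposal is correct and follows the paper's proof step for step. You use the pointwise bound $|\eta_P(x)-\tfrac12|\ge\kappa|\eta_Q(x)-\tfrac12|$ on $A_\kappa$ to get the set inclusion, change measure via $dP_X\le\lambda\,dQ_X$ on $A_\kappa\subseteq\Omega_Q$, and then apply the $\beta$-margin condition at level $t/\kappa$; your explicit handling of the $\eta_Q(x)=\tfrac12$ convention and the $P_X$-null exceptional set is slightly more careful than the paper's but not a different route.
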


The full proof is in Appendix~\ref{proof:margin-transfer}.
We now state the main result: excess risk bounds of ICL and ICR.

\newpage

\begin{theorem}[Excess risks of ICL and ICR]
\label{thm:icl-icr-rates}
\label{thm:icl-bound}
\label{thm:icr-bound}
Suppose $\eta_Q$ satisfies $(\alpha,L)$-smoothness and the $\beta$-margin condition with constant $C_\beta$. Then  $\forall n \geq 1$,
\begin{enumerate}
    \item[(i)] The ICL excess risk satisfies
    \[
        \mc{E}_Q^{\mathrm{ICL}}(\hat{f}_{\mathrm{ICL}})
        \;\leq\;
        C_o\,n^{-\frac{\beta+1}{2}\cdot\frac{2\alpha}{2\alpha+1}}.
    \]

    \item[(ii)] If additionally $(P,Q)\in\Pi(\varepsilon,\kappa,\lambda)$
    and $\eta_P$ satisfies $(\alpha_P,L_P)$-smoothness, then
    $\forall m\geq 2n$ and $n\geq n_0$,
    \begin{equation}
        \mc{E}_Q^{\mathrm{ICR}}(\hat{f}_{\mathrm{ICR}})
        \;\leq\; \varepsilon
        + \frac{3C_\beta\lambda^2}{\kappa^{\beta+1}}
        \max\!\left\{
            2L_P\!\left(\tfrac{2n}{m}\right)^{\alpha_P},
            \sqrt{\tfrac{8(\beta+2)}{n}}
        \right\}^{\beta+1}.
        \label{eq:icr-finite-m-main}
    \end{equation}
    In particular, in the large-corpus regime where
    \begin{equation}
        m \;\geq\; m_\star(n)
        :=2n\left(
            2L_P\sqrt{\frac{n}{8(\beta+2)}}
        \right)^{1/\alpha_P}
        = C_\star n^{1+1/(2\alpha_P)},
        \label{eq:icr-saturation}
    \end{equation}
    then
    \[
        \mc{E}_Q^{\mathrm{ICR}}(\hat{f}_{\mathrm{ICR}})
        \;\leq\; \varepsilon
        +\frac{C_o'\lambda^2}{\kappa^{\beta+1}}
        n^{-\frac{\beta+1}{2}}.
    \]
\end{enumerate}
Here $C_o=C_o(\alpha,\beta,L,C_\beta)$,
$C_o'=3C_\beta\bigl(8(\beta+2)\bigr)^{(\beta+1)/2}$,
$C_\star=2(2L_P/\sqrt{8(\beta+2)})^{1/\alpha_P}$, and
$n_0=n_0(\beta,C_\beta,\kappa,\lambda)$.
\end{theorem}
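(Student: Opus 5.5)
Part (i) is the standard $k$-NN rate of \citet{chaudhuri2014rates}: under $(\alpha,L)$-smoothness and the $\beta$-margin condition, the $k$-NN majority vote with $k\asymp n^{2\alpha/(2\alpha+1)}$ has excess risk of order $n^{-\alpha(\beta+1)/(2\alpha+1)}$. We invoke that theorem with $D=Q$ and absorb its constants into $C_o$. This already gives exactly the stated exponent $\frac{\beta+1}{2}\cdot\frac{2\alpha}{2\alpha+1}$.

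For part (ii) we first apply Lemma~\ref{lem:fundamental-reduction} pointwise in the corpus and take expectations. This reduces the problem to bounding
\[
\expect{\mc{D}_m}{R_P(f_{\mc{D}_m})-R_P^*}.
\]
The reduction alone, however, would only produce a factor $\lambda/\kappa$ and would need a margin condition for $P$ on all of $\mc{X}$, which we do not have. So instead we work directly with the pointwise identity
\[
R_Q(f)-R_Q^*=\int |2\eta_Q-1|\,\indicator{f\neq f_Q^*}\,dQ_X .
\]
We split the integral over $A_\kappa$ and $A_\kappa^c$; the latter contributes at most $\varepsilon$. On $A_\kappa$ we have $f_Q^*=f_P^*$ and $|\eta_Q-\half|\le \kappa^{-1}|\eta_P-\half|$. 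Changing measure with $dQ_X\le\lambda\,dP_X$, the remaining term is at most
\[
\frac{\lambda}{\kappa}\int_{A_\kappa}|2\eta_P-1|\,\mathbb{P}_{\mc{D}_m}\bigl(f_{\mc{D}_m}(x)\neq f_P^*(x)\bigr)\,dP_X(x).
\]

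The core step is a pointwise bound on $\mathbb{P}(f_{\mc{D}_m}(x)\neq f_P^*(x))$, following the Chaudhuri--Dasgupta decomposition with $k=n$ neighbors out of $m$ points. Let $r_p(x)$ be the smallest radius whose $P_X$-ball has mass $p=2n/m$. By a Chernoff bound (this is where $m\ge 2n$ is used), with probability at least $1-e^{-n/8}$ the $n$ nearest neighbours lie inside $B(x,r_p)$. Conditionally on this, the retrieved labels are independent draws with mean $\eta_P(B')$ for a ball $B'$ of mass at most $p$. By $(\alpha_P,L_P)$-smoothness this mean is within $L_P p^{\alpha_P}$ of $\eta_P(x)$; making this conditioning rigorous, as in \citet{chaudhuri2014rates}, is what costs the factor $2$. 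Hoeffding then gives
\[
\mathbb{P}\bigl(f_{\mc{D}_m}(x)\neq f_P^*(x)\bigr)\le e^{-n/8}+\exp\!\bigl(-2n(|\eta_P(x)-\half|-2L_Pp^{\alpha_P})^2\bigr)
\]
whenever $|\eta_P(x)-\half|>\Delta_0:=\max\{2L_P(2n/m)^{\alpha_P},\sqrt{8(\beta+2)/n}\}$.

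Finally we integrate using Lemma~\ref{lem:margin-transfer}, which supplies the margin bound $P_X(\{|\eta_P-\half|<t\}\cap A_\kappa)\le\lambda C_\beta\kappa^{-\beta}t^\beta$. Points with $|\eta_P-\half|\le\Delta_0$ contribute at most $2\Delta_0\cdot\lambda C_\beta\kappa^{-\beta}\Delta_0^\beta$. Points above $\Delta_0$ are handled by a dyadic peeling over shells $\Delta_0 2^{j}<|\eta_P-\half|\le\Delta_0 2^{j+1}$ on which the Hoeffding tail is $e^{-c n\Delta_0^2 4^{j}}$. The choice $\Delta_0^2\ge 8(\beta+2)/n$ makes the resulting geometric-type sum bounded by a constant times $\lambda C_\beta\kappa^{-\beta}\Delta_0^{\beta+1}$. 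The $e^{-n/8}$ term is absorbed for $n\ge n_0(\beta,C_\beta,\kappa,\lambda)$, since it is exponentially small relative to $\Delta_0^{\beta+1}\gtrsim n^{-(\beta+1)/2}$. Collecting the prefactors $\lambda/\kappa\cdot\lambda C_\beta\kappa^{-\beta}$ gives \eqref{eq:icr-finite-m-main} with constant $3$.

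The large-corpus corollary is then algebra. The condition $m\ge m_\star(n)$ is exactly $2L_P(2n/m)^{\alpha_P}\le\sqrt{8(\beta+2)/n}$, so the max equals the second term and yields $C_o'n^{-(\beta+1)/2}$. The main obstacle is the peeling/integration step: getting the constant $3$ and the threshold $8(\beta+2)$ exactly requires choosing the shell ratio carefully, and I would first try to bound the sum $\sum_j 2^{(j+1)(\beta+1)}e^{-(\beta+2)4^{j}}$ by $2$.
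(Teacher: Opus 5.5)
Your overall route matches the paper's. You use a localized risk transfer that keeps the indicator of $A_\kappa$, the Chaudhuri--Dasgupta pointwise bound with $k=n$, and Lemma~\ref{lem:margin-transfer} as a local margin condition on $A_\kappa$. You then split into near and far regions with dyadic shells, absorb $e^{-n/8}$ for $n\ge n_0$, and do the algebra for $m_\star(n)$. Part (i) is also handled as in the paper.

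The problem is in your core pointwise step, where the factor $2$ is in the wrong place. Smoothness bounds the bias of the retrieved-label mean by $\Delta_o:=L_P p^{\alpha_P}$, and the rigorous conditioning does not double it. The bound you need is Lemma~\ref{lem:pointwise-knn}: $\mathbb{E}[\Delta_P(x;f)]\le e^{-n/8}+4|\eta_P(x)-\tfrac12|\,e^{-2n(|\eta_P(x)-\frac12|-\Delta_o)^2}$. You instead subtract $2L_Pp^{\alpha_P}$ in the Hoeffding exponent, while your threshold is $\Delta_0=\max\{2L_Pp^{\alpha_P},\sqrt{8(\beta+2)/n}\}$. So in the bias-dominated regime the threshold equals the bias you subtract. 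Take the first shell $j=0$, i.e.\ $\Delta_0<|\eta_P(x)-\tfrac12|\le2\Delta_0$. There the deviation $|\eta_P(x)-\tfrac12|-2L_Pp^{\alpha_P}$ can be arbitrarily close to $0$, so your claimed tail $e^{-cn\Delta_0^2}$ is false. Meanwhile the margin condition (with $C'_\beta:=\lambda C_\beta\kappa^{-\beta}$) allows mass of order $C'_\beta\Delta_0^{\beta}$ to sit just above $\Delta_0$. From your pointwise bound you would recover only the rate, with a larger $\beta$-dependent constant ($2^{\beta+2}$ instead of $2$ if you treat $|\eta_P-\tfrac12|\le2\Delta_0$ trivially). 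Alternatively you would have to raise the threshold to $4L_P(2n/m)^{\alpha_P}$, which changes both \eqref{eq:icr-finite-m-main} and $m_\star(n)$.

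With the correct bias, the $2$ in $\Delta_0$ is exactly what makes the peeling work. On shell $j$ one has $|\eta_P(x)-\tfrac12|-\Delta_o\ge2^j\Delta_0-\Delta_0/2\ge2^{j-1}\Delta_0$. Hence the exponent is at least $2n4^{j-1}\Delta_0^2\ge4(\beta+2)4^j$, not $(\beta+2)4^j$. The far region then contributes at most $C'_\beta\Delta_0^{\beta+1}\sum_{j\ge0}4\cdot2^{(j+1)(\beta+1)}e^{-4(\beta+2)4^j}$. Consecutive terms have ratio at most $2^{\beta+1}e^{-12(\beta+2)}\le\tfrac12$, so the sum is at most $8\cdot2^{\beta+1}e^{-4(\beta+2)}\le\tfrac12$.

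That $\tfrac12$ is what the constant $3$ requires: $2$ from the near region, $\tfrac12$ from the far region, and $\tfrac12$ from absorbing $\frac{\lambda}{\kappa}e^{-n/8}$ for $n\ge n_0$. Bounding your sum by $2$, as you propose, is too weak to reach $3$.

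A minor point: on $A_\kappa$ your claim $f_Q^*=f_P^*$ fails where $\eta_Q=\tfrac12$, since the RSS is $+\infty$ there. This is harmless, because $\Delta_Q(x;f)=0$ at such points, so $\Delta_Q\le\kappa^{-1}\Delta_P$ still holds.
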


This is the main result of the paper; full proofs are deferred to Appendices~\ref{proof:icl-bound} and~\ref{proof:icr-bound}.
Furthermore, we show in Appendix~\ref{proof:icr-lower-bound-simple} that the lower bound of Proposition \ref{prop:icr-lower-bound-simple} also holds in the setting of Theorem \ref{thm:icl-icr-rates}. Thus, this result establishes a fundamental bias-variance trade-off: provided the source corpus is sufficiently large, the ICR bound decays strictly faster in $n$ than the ICL bound, but over the CSPD class, no ICR learner can uniformly avoid an
irreducible bias of order $\varepsilon$, while ICL remains consistent.

\paragraph{Comparison of ICL and ICR rates.}
The two rates in Theorem~\ref{thm:icl-icr-rates} reflect a trade-off between sample efficiency and label correctness. In ICL, the labels are drawn from the correct posterior $\eta_Q$, but only demonstrations near the query are informative, so the vote uses just the $k \asymp n^{2\alpha/(2\alpha+1)}\ll n$ nearest ones.
In ICR, the retriever selects from a corpus of size $m\gg n$, so all $n$ demonstrations in the context are close to the query and all $n$ labels are used in the vote, yielding the faster rate.
The price is that these labels are drawn from the source posterior $\eta_P$ rather than $\eta_Q$, and the irreducible bias $\varepsilon$ quantifies how wrong that posterior may be.
The following special cases make the effect of domain shift transparent; throughout, suppose $n\ge n_0$ and $m\ge\max\{2n,m_\star(n)\}$.

\begin{itemize}
    \item \textbf{No shift} ($P=Q$, hence $\varepsilon=0,\kappa=\lambda=1$):
    $$\mc{E}_Q^{\mathrm{ICR}}(\hat{f}_{\mathrm{ICR}})
    \le
    C_o'n^{-\frac{\beta+1}{2}},$$
    This isolates the statistical benefit of retrieval: the ICR guarantee is faster than the ICL guarantee $C_o\,n^{-\alpha(\beta+1)/(2\alpha+1)}$ for every $\alpha>0$. The advantage is most pronounced for non-smooth $Q$: the ICL exponent approaches zero as $\alpha\to0$, whereas the ICR exponent is independent of $\alpha$.
    \item \textbf{Covariate shift only} ($P_X \neq Q_X, \ \eta_P=\eta_Q$):
    $$\mc{E}_Q^{\mathrm{ICR}}(\hat{f}_{\mathrm{ICR}})
    \le
    C_o'\lambda^2 n^{-\frac{\beta+1}{2}},$$
    Here, ICR is consistent and converges faster than ICL;
    covariate shift only inflates the constant.
    \item \textbf{Posterior drift only} ($P_X = Q_X, \ \eta_P \neq \eta_Q$):
    $$\mc{E}_Q^{\mathrm{ICR}}(\hat{f}_{\mathrm{ICR}})
    \le \varepsilon
    +\frac{C_o'}{\kappa^{\beta+1}}n^{-\frac{\beta+1}{2}},$$
    The $n$-related term retains the faster decay, but posterior drift introduces an additive error $\varepsilon$ and a factor $\kappa^{-(\beta+1)}$ in the constant.
\end{itemize}

\section{Experiments}
\label{sec:experiments}

Our synthetic experiments validate each component of the theory:
\emph{(i)} with no domain shift, ICR converges faster in $n$ than ICL
\emph{(ii)} under posterior drift, ICR incurs a bias growing with $\varepsilon$
\emph{(iii)} ICL degrades as the target posterior becomes rougher, as quantified by $\alpha$, while ICR is insensitive to $\alpha$
\emph{(iv)} the ICR error saturates once the corpus size exceeds $m_\star(n)$.

We then test the effect of posterior drift on a real-world, sentiment classification task with annotator disagreement.
Full setups and
detailed results
are in Appendix~\ref{app:exp-details}.

\subsection{Synthetic experiments}
\label{subsec:exp-synthetic}

\textbf{Setup.}
Our experiments use the one-dimensional Weierstrass family (see Appendix~\ref{app:weierstrass}): $X\sim\mathrm{Uniform}[-1,1]$ and $\eta_\alpha(x)=\tfrac12+A\sin(2\pi f x)\bigl(1+b\,w_\alpha(x)\bigr)$, where $w_\alpha$ is a truncated Weierstrass series whose roughness is controlled by $\alpha$
($\alpha$ represents the H\"older exponent as the truncation level $J\to\infty$).
This construction varies the smoothness $\alpha$ while holding the margin exponent, the Bayes risk ($R_Q^*\approx 0.347$), and the Bayes classifier fixed. Unless stated otherwise, $P=Q$, the retrieval corpus has $m=10^5$ points, and each point on the figure is the mean over 5 seeds. Posterior drift is induced by flipping the source posterior, $\eta_P=1-\eta_Q$, on a region $B$ with $Q_X(B)=\varepsilon$.

We compare two learners: \textbf{TabPFN}~\citep{hollmann2025accurate}, a tabular foundation model that predicts from in-context demonstrations, and the \textbf{$k$-NN} style learners for ICL/ICR defined in Section~\ref{subsec:rates}.

\textbf{Experiment 1: convergence in $n$ (Figure~\ref{fig:exp-n}).}
With $P=Q$ and a large corpus, ICR outperforms ICL at every $n$, and on log-log axes the ICR curves decay visibly faster, consistent with the rate comparison of Theorem~\ref{thm:icl-icr-rates}.
TabPFN and $k$-NN track each other closely, supporting nearest-neighbor estimators as a reasonable model of the actual ICL/ICR learner.

\begin{figure}[htbp]
\centering
\begin{minipage}[t]{0.48\textwidth}
\centering
\begin{tikzpicture}
\begin{axis}[
    width=0.8*\expfigw, height=0.8*\expfigh,
    xmode=log, ymode=log,
    log basis x=2,
    xtick={4,16,64,256,1024},
    xticklabels={4,16,64,256,1024},
    xlabel={$n$ (demonstrations)},
    ylabel={Target excess risk},
    grid=major, grid style={gray!25},
    tick label style={font=\small}, label style={font=\small},
    legend columns=2,
    legend style={at={(0.5,-0.32)}, anchor=north, font=\small,
                  column sep=0.8em, legend cell align=left, draw=none},
]
\addplot[blue, solid, thick, mark=*, mark size=1.8pt]
    coordinates {(4,0.1512) (16,0.1508) (32,0.1338) (64,0.1032) (128,0.0593) (256,0.0360) (512,0.0191) (1024,0.0083)};
\addlegendentry{TabPFN-ICL}
\addplot[orange!90!black, solid, thick, mark=*, mark size=1.8pt]
    coordinates {(4,0.0664) (16,0.0398) (32,0.0237) (64,0.0180) (128,0.0116) (256,0.0071) (512,0.0031) (1024,0.0013)};
\addlegendentry{TabPFN-ICR}
\addplot[blue, dashed, thick, mark=square, mark size=1.8pt]
    coordinates {(4,0.1472) (16,0.1285) (32,0.0919) (64,0.0579) (128,0.0288) (256,0.0204) (512,0.0178) (1024,0.0129)};
\addlegendentry{$k$-NN-ICL}
\addplot[orange!90!black, dashed, thick, mark=square, mark size=1.8pt]
    coordinates {(4,0.0581) (16,0.0231) (32,0.0140) (64,0.0084) (128,0.0048) (256,0.0022) (512,0.0010) (1024,0.0004)};
\addlegendentry{$k$-NN-ICR}
\end{axis}
\end{tikzpicture}
\caption{Target excess risk vs.\ context size $n$ ($P=Q$, $\alpha=1.0$, $m=10^5$).
}
\label{fig:exp-n}
\end{minipage}\hfill
\begin{minipage}[t]{0.48\textwidth}
\centering
\begin{tikzpicture}
\begin{axis}[
    width=0.8*\expfigw, height=0.8*\expfigh,
    xmin=-0.02, xmax=1.02,
    xtick={0,0.2,0.4,0.6,0.8,1.0},
    ymin=0,
    xlabel={Posterior drift $\varepsilon$},
    ylabel={Target excess risk},
    grid=major, grid style={gray!25},
    tick label style={font=\small}, label style={font=\small},
    legend columns=2,
    legend style={at={(0.5,-0.32)}, anchor=north, font=\small,
                  column sep=0.8em, legend cell align=left, draw=none},
]
\addplot[blue, solid, thick, mark=none]
    coordinates {(-0.02,0.1510) (1.02,0.1510)};
\addlegendentry{TabPFN-ICL}
\addplot[orange!90!black, solid, thick, mark=*, mark size=1.8pt]
    coordinates {(0,0.0398) (0.1,0.0629) (0.2,0.0829) (0.3,0.1052) (0.4,0.1289) (0.5,0.1494) (0.6,0.1723) (0.7,0.1935) (0.8,0.2171) (0.9,0.2404) (1,0.2643)};
\addlegendentry{TabPFN-ICR}
\addplot[blue, dashed, thick, mark=none]
    coordinates {(-0.02,0.1285) (1.02,0.1285)};
\addlegendentry{$k$-NN-ICL}
\addplot[orange!90!black, dashed, thick, mark=square, mark size=1.8pt]
    coordinates {(0,0.0231) (0.1,0.0494) (0.2,0.0730) (0.3,0.0984) (0.4,0.1243) (0.5,0.1510) (0.6,0.1766) (0.7,0.2019) (0.8,0.2274) (0.9,0.2536) (1,0.2805)};
\addlegendentry{$k$-NN-ICR}
\end{axis}
\end{tikzpicture}
\caption{Target excess risk vs.\ posterior drift $\varepsilon$ ($n=16$, $\alpha=1.0$, $m=10^5$). ICL (blue) is flat; the ICR error (orange) grows almost linearly in $\varepsilon$.}
\label{fig:exp-eps}
\end{minipage}
\end{figure}

\textbf{Experiment 2: posterior drift (Figure~\ref{fig:exp-eps}).}
Fixing $n=16$ and sweeping $\varepsilon$, the ICR excess risk grows almost linearly in $\varepsilon$, matching the irreducible-bias term of Lemma~\ref{lem:fundamental-reduction} and Proposition~\ref{prop:icr-lower-bound-simple}, while ICL is unaffected by the drift. In this few-shot regime, ICR remains preferable up to $\varepsilon\approx 0.4$,
after which the drift bias outweighs ICR's variance reduction.
At larger context sizes the crossover occurs much earlier (Appendix~\ref{app:exp-drift-weier}).

\textbf{Experiment 3: smoothness
(Figure~\ref{fig:exp-alpha}).}
Sweeping \(\alpha\) at \(n=256\), both ICL curves deteriorate as \(\eta_\alpha\) becomes rougher, whereas ICR is essentially flat in \(\alpha\). This demonstrates empirical insensitivity of ICR to this roughness manipulation at the chosen corpus size. Theorem~\ref{thm:icl-icr-rates} guarantees an \(\alpha\)-independent ICR rate under a large-corpus condition whose threshold itself depends on \(\alpha_P\).

\begin{figure}[htbp]
\centering
\begin{minipage}[t]{0.48\textwidth}
\centering
\begin{tikzpicture}
\begin{axis}[
    width=0.8*\expfigw, height=0.8*\expfigh,
    xmode=log,
    xmin=0.0005, xmax=2,
    ymin=0, ymax=0.12,
    ytick={0,0.05,0.1},
    yticklabels={0,0.05,0.1},
    xtick={0.001,0.01,0.1,1},
    xticklabels={$10^{-3}$,$10^{-2}$,$10^{-1}$,$10^{0}$},
    x dir=reverse,
    xlabel={Smoothness $\alpha$ (rougher $\rightarrow$)},
    ylabel={Target excess risk},
    grid=major, grid style={gray!25},
    tick label style={font=\small}, label style={font=\small},
    legend columns=2,
    legend style={at={(0.5,-0.35)}, anchor=north, font=\small,
                  column sep=0.8em, legend cell align=left, draw=none},
]
\addplot[blue, solid, thick, mark=*, mark size=1.8pt]
    coordinates {(1,0.0360) (0.1,0.0377) (0.01,0.0455) (0.001,0.0804)};
\addlegendentry{TabPFN-ICL}
\addplot[orange!90!black, solid, thick, mark=*, mark size=1.8pt]
    coordinates {(1,0.0071) (0.1,0.0041) (0.01,0.0054) (0.001,0.0042)};
\addlegendentry{TabPFN-ICR}
\addplot[blue, dashed, thick, mark=square, mark size=1.8pt]
    coordinates {(1,0.0204) (0.1,0.0799) (0.01,0.0889) (0.001,0.1042)};
\addlegendentry{$k$-NN-ICL}
\addplot[orange!90!black, dashed, thick, mark=square, mark size=1.8pt]
    coordinates {(1,0.0022) (0.1,0.0009) (0.01,0.0014) (0.001,0.0013)};
\addlegendentry{$k$-NN-ICR}
\end{axis}
\end{tikzpicture}
\caption{Target excess risk vs.\ smoothness $\alpha$ ($n=256$, $P=Q$, $m=10^5$).
ICL (blue) deteriorates as $\alpha$ decreases; ICR (orange) is flat.}
\label{fig:exp-alpha}
\end{minipage}\hfill
\begin{minipage}[t]{0.48\textwidth}
\centering
\begin{tikzpicture}
\begin{axis}[
    width=0.8*\expfigw, height=0.8*\expfigh,
    xmode=log, ymode=log,
    log basis x=2,
    xtick={64,256,1024,4096,16384},
    xticklabels={64,256,1024,4096,16384},
    xlabel={Corpus size $m$},
    ylabel={Target excess risk},
    grid=major, grid style={gray!25},
    tick label style={font=\small}, label style={font=\small},
    legend columns=2,
    legend style={at={(0.5,-0.32)}, anchor=north, font=\small,
                  column sep=0.8em, legend cell align=left, draw=none},
]
\addplot[orange!90!black, solid, thick, mark=*, mark size=1.8pt]
    coordinates {(64,0.1333) (128,0.0732) (256,0.0379) (512,0.0203) (1024,0.0206) (2048,0.0209) (4096,0.0173) (8192,0.0165) (16384,0.0178)};
\addlegendentry{TabPFN-ICR}
\addplot[orange!90!black, dashed, thick, mark=square, mark size=1.8pt]
    coordinates {(64,0.1448) (128,0.1396) (256,0.1089) (512,0.0185) (1024,0.0086) (2048,0.0079) (4096,0.0081) (8192,0.0071) (16384,0.0082)};
\addlegendentry{$k$-NN-ICR}
\addplot[black!60, dashed, thick, mark=none]
    coordinates {(512,0.004)(512,0.3)};
\addlegendentry{$m=n^{3/2}$}
\end{axis}
\end{tikzpicture}
\caption{Target excess risk of ICR vs.\ corpus size $m$ ($n=64$, $P=Q$, $\alpha_P=1.0$). The dashed line marks the reference scale $m=n^{3/2}=512$.
ICL does not depend on $m$ and is omitted for clarity.
}
\label{fig:exp-m}
\end{minipage}
\end{figure}

\textbf{Experiment 4: corpus size (Figure~\ref{fig:exp-m}).}
Fixing $n=64$ and sweeping the corpus size $m$, the ICR error drops sharply and then plateaus near the predicted saturation scale $m_\star(n)\asymp n^{1+1/(2\alpha_P)}$ of \Cref{eq:icr-saturation} (here $\alpha_P=1$; the dashed vertical line marks the reference scale $m=n^{3/2}=512$).


\subsection{Language task}
\label{subsec:exp-sentiment}

\textbf{Setup.}
We use the sentiment analysis dataset of \citet{diaz2018addressing}, in which each sentence is rated by several annotators with $\approx 40\%$ label disagreement (training: 59{,}235 annotated examples / 1{,}481 annotators; test: 1{,}419 annotated examples / 878 annotators). Each test annotator defines a target distribution $Q$. The in-context learner is Qwen3.5-4B~\citep{qwen3.5}; sentences are embedded with BAAI/bge-base-en-v1.5~\citep{bge_embedding} and retrieved by Euclidean distance. ICL draws $n$ demonstrations from the annotator's own training data. We sweep four source-corpus designs with increasing level of posterior drift: no drift ($P=Q$), mild (\emph{self $+$ 100 other annotators}), severe (\emph{all 1{,}481 annotators}), and extreme (\emph{others only}). Full details are in Appendix~\ref{app:exp-details}.

\textbf{Results (Figure~\ref{fig:exp-sentiment}).}
ICR strongly outperforms ICL when $P=Q$ and remains preferable under mild drift for $n\le 16$; as the amount of posterior drift increases, ICR's performance degrades monotonically, and ICL eventually becomes preferable.
In Appendix~\ref{app:exp-details} we repeat the experiment with the stronger Qwen3.5-27B model (Figure~\ref{fig:exp-sentiment-scale}),
where we see
model capacity does not remove the bias, consistent with Proposition~\ref{prop:icr-lower-bound-simple}.

\begin{figure}[!htbp]
\centering
\begin{tikzpicture}
\begin{axis}[
    width=0.7*\expfigw, height=0.75*\expfigh,
    xmode=log, log basis x=2,
    xtick={4,8,16,32}, xticklabels={4,8,16,32},
    ymin=0, ymax=0.7,
    ytick={0,0.2,0.4,0.6},
    xlabel={$n$ (demonstrations)},
    ylabel={Test error},
    grid=major, grid style={gray!25},
    tick label style={font=\small}, label style={font=\small},
    legend columns=1,
    legend style={at={(1.04,0.5)}, anchor=west, font=\small,
                  legend cell align=left, draw=none},
]
\addplot[blue, solid, thick, mark=*, mark size=1.8pt]
    coordinates {(4,0.5383) (8,0.4806) (16,0.3480) (32,0.1168)};
\addlegendentry{ICL}
\addplot[orange!50!yellow, solid, thick, mark=square*, mark size=1.8pt]
    coordinates {(4,0.2630) (8,0.1693) (16,0.0985) (32,0.0724)};
\addlegendentry{ICR, no drift ($P=Q$)}
\addplot[orange!90!black, solid, thick, mark=triangle*, mark size=2.2pt]
    coordinates {(4,0.2653) (8,0.2250) (16,0.1622) (32,0.1420)};
\addlegendentry{ICR, mild drift}
\addplot[red!80!black, solid, thick, mark=diamond*, mark size=2.2pt]
    coordinates {(4,0.5347) (8,0.5039) (16,0.4910) (32,0.4800)};
\addlegendentry{ICR, severe drift}
\addplot[red!40!black, solid, thick, mark=pentagon*, mark size=2.2pt]
    coordinates {(4,0.6334) (8,0.6277) (16,0.6149) (32,0.6190)};
\addlegendentry{ICR, extreme drift}
\end{axis}
\end{tikzpicture}
\caption{Test error vs.\ context size $n$ on the sentiment dataset (Qwen3.5-4B; 878 test annotators, mean over 5 seeds). \emph{Blue}: ICL with i.i.d.\ demonstrations from the annotator's own data. \emph{Yellow $\to$ red}: ICR under four corpus designs with increasing posterior drift.}
\label{fig:exp-sentiment}
\end{figure}

\newpage

\section{Conclusions \& Limitations}

We cast in-context retrieval as domain adaptation with a retrieval-based learner and derive finite-sample excess-risk bounds for both ICL and ICR. The ICR bound has a strictly faster dependence on $n$ than the ICL bound, but ICR also incurs an irreducible bias that prevents ICR from being asymptotically consistent uniformly under posterior drift.

Synthetic and sentiment-classification experiments confirm these predictions and carry a practical takeaway: ICR is not always preferable, na\"ively pooling all available source data into the retrieval corpus is not a safe default, and a smaller, well-aligned corpus can be better than a larger one.

The limitation of our work lies in that our analysis is specialized to binary classification and treats the in-context learners (induced by LLMs) as nearest-neighbor-style estimators.


\section*{Acknowledgements}
The authors thank Aditya Gangrade for helpful discussions and feedback.

\bibliography{refs}
\bibliographystyle{plainnat}

\newpage

\appendix

\begin{center}
{\Large\bfseries Appendix}
\end{center}

\section{Rate of convergence for nearest neighbor classification}
\label{sec:knn-rate}

This appendix reviews the excess-risk bound for the $k$-nearest-neighbor ($k$-NN) majority-vote classifier of \citet{chaudhuri2014rates}, restated using our paper's notation for a generic source distribution $P$. The bound is then specialized to the ICL setting (with $P=Q$ and training size $n$) and the ICR setting (with corpus size $m$ and neighbor count $n$) in the proof of
\Cref{thm:icr-bound}.

\paragraph{Setup.}
Let $P$ be a Borel probability distribution on $\mc{X}\times\{0,1\}$, where $(\mc{X},\rho)$ is a separable metric space. Let $\eta_P(x):=P(Y=1\mid X=x)$ be the class-probability function, $f_P^*(x):=\indicator{\eta_P(x)\ge\tfrac12}$ the Bayes classifier, $R_P(f):=P(f(X)\neq Y)$ the risk and $R_P^*:=R_P(f_P^*)$ the Bayes risk. Given an i.i.d.\ sample $\mc{D}_m=\{(X_i,Y_i)\}_{i=1}^m\sim P^{\otimes m}$, the $k$-NN majority-vote classifier is
\[
    \hat{f}^{\mathrm{kNN}}_{m,k}(x) \;:=\; \indicator{\frac{1}{k}\sum_{j=1}^{k}Y_{(j)}(x) \ge \tfrac12},
\]
where $Y_{(j)}(x)$ is the label of the $j$-th nearest neighbor of $x$ in $\mc{D}_m$ under $\rho$.

\paragraph{Assumptions on $P$.}
\begin{itemize}
    \item \emph{$(\alpha,L)$-smoothness:} for some $\alpha,L>0$ and all $x\in\mathrm{supp}(P_X)$, $r>0$,
    \[
        \bigl|\eta_P(B(x,r))-\eta_P(x)\bigr| \;\le\; L\,P_X\bigl(B^o(x,r)\bigr)^{\!\alpha},
    \]
    where $\eta_P(A):=P_X(A)^{-1}\!\int_A\eta_P\,dP_X$ is the average of $\eta_P$ over a set $A$, $B(x,r):=\{x':\rho(x,x')\le r\}$ is the closed ball, and $B^o(x,r):=\{x':\rho(x,x')< r\}$ is the open ball.
    \item \emph{$\beta$-margin condition:} for some $\beta\ge 0$ and $C_\beta>0$,
    \[
        P_X\!\bigl(|\eta_P(X)-\tfrac12|\le t\bigr) \;\le\; C_\beta\,t^\beta \qquad\text{for all }t>0.
    \]
\end{itemize}

The following lemma is restated from \citet{chaudhuri2014rates}, Lemma~20.

\begin{lemma}[Pointwise excess risk for $k$-NN]
\label{lem:pointwise-knn}
Let $m\ge2k\ge2$, and suppose $\eta_P$ is $(\alpha_P,L_P)$-smooth. Set $p:=2k/m$ and $\Delta_o:=L_P\,p^{\alpha_P}$. If $x\in\mathrm{supp}(P_X)$ satisfies $|\eta_P(x)-\half|>\Delta_o$, the $k$-NN majority-vote classifier $\hat{f}^{\mathrm{kNN}}_{m,k}$ on $\mc{D}_m\sim P^{\otimes m}$ satisfies
\[
    \expect{\mc{D}_m}{\Delta_P\!\bigl(x;\,\hat{f}^{\mathrm{kNN}}_{m,k}\bigr)}
    \;\le\;
    e^{-k/8}
    \;+\; 4\,\bigl|\eta_P(x)-\half\bigr|\;
        e^{-2k\,\bigl(|\eta_P(x)-\half|\,-\,\Delta_o\bigr)^2}.
\]
\end{lemma}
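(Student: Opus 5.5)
Since the statement is Lemma~20 of \citet{chaudhuri2014rates} specialized to our notation, I would follow their argument. Here $\Delta_P(x;f)$ is the pointwise excess risk, which equals $2|\eta_P(x)-\half|\,\indicator{f(x)\neq f_P^*(x)}$. Fix $x$ with $|\eta_P(x)-\half|>\Delta_o$ and assume without loss of generality $\eta_P(x)>\half$, so $f_P^*(x)=1$. The expected pointwise excess risk is then $2(\eta_P(x)-\half)$ times the probability that the $k$-NN vote at $x$ is below $\half$. The task therefore reduces to bounding this error probability.

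\textbf{Step 1 (localizing the neighbors).} Let $r_p:=\inf\{r: P_X(B(x,r))\ge p\}$ with $p=2k/m$, so that $P_X(B^o(x,r_p))\le p\le P_X(B(x,r_p))$. The event $\rho(x,X_{(k+1)})>r_p$ requires fewer than $k+1$ of the $m$ points to fall in $B(x,r_p)$, a ball of mass at least $2k/m$. A multiplicative Chernoff bound for a binomial with mean at least $2k$ falling below $k$ gives probability at most $e^{-k/8}$. This produces the first term.

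\textbf{Step 2 (conditional structure and bias).} On the complementary event, condition on the radius $\rho(x,X_{(k+1)})=R\le r_p$. By the standard exchangeability argument, the $k$ nearest neighbors are then i.i.d.\ draws from $P_X$ restricted to $B^o(x,R)$ (ties are handled by the order-based tie-breaking, as in Chaudhuri--Dasgupta). Their labels are therefore independent Bernoulli variables with mean $\eta_P(B^o(x,R))$. Smoothness gives $|\eta_P(B(x,R))-\eta_P(x)|\le L_P P_X(B^o(x,R))^{\alpha_P}\le L_P p^{\alpha_P}=\Delta_o$. To pass from the closed to the open ball I would repeat their limiting argument over radii $r\uparrow R$. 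The conditional label mean is thus at least $\half+(\eta_P(x)-\half-\Delta_o)>\half$.

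\textbf{Step 3 (Hoeffding).} Hoeffding's inequality then bounds the conditional probability that the average of the $k$ labels falls below $\half$ by $e^{-2k(\eta_P(x)-\half-\Delta_o)^2}$. Multiplying by $2(\eta_P(x)-\half)$, adding the Step~1 contribution (using $\Delta_P\le 1$ there), and absorbing slack gives the stated bound. The stated constant $4$ leaves room for the way the two events are combined.

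\textbf{Main obstacle.} The delicate point is Step~2: making the conditional-i.i.d.\ claim rigorous in a general separable metric space when balls may have atoms, and matching the closed-ball smoothness condition to the open-ball conditioning. Here I would rely directly on the machinery of \citet{chaudhuri2014rates} rather than re-deriving it.
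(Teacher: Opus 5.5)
Your proposal is correct and takes the same route as the paper, which gives no proof of its own but imports Lemma~20 of \citet{chaudhuri2014rates}. That source's argument is the one you reconstruct: Chernoff localization of the $(k+1)$st neighbor inside $B(x,r_p)$ for the $e^{-k/8}$ term, the conditional i.i.d.\ structure of the $k$ nearest labels under the tie-breaking device, the smoothness bias bound $\Delta_o$ (extended from closed to open balls by letting $r\uparrow R$), and then Hoeffding; your bookkeeping gives constant $2$ instead of $4$ since $2|\eta_P(x)-\tfrac12|\le 1$, so the stated bound holds with room to spare.
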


The following lemma is an extension of \citet{chaudhuri2014rates}, Lemma~21, to a subset of the input space, which is needed for our analysis.

\begin{lemma}[Excess risk on a subset]
\label{lem:localized-knn}
Let $A\subseteq\mc{X}$ be a measurable set, let $m\ge2k\ge2$, and let $\hat{f}^{\mathrm{kNN}}_{m,k}$ denote the $k$-NN majority-vote classifier on a corpus $\mc{D}_m\sim P^{\otimes m}$. Suppose $\eta_P$ is $(\alpha_P,L_P)$-smooth and, for some $\beta\ge0$ and $C_A>0$, satisfies the following local margin condition on $A$:
    \[
        P_X\!\bigl(\{x\in A:\,|\eta_P(x)-\half|<t\}\bigr)\;\le\;C_A\,t^\beta
        \qquad\text{for all }t>0.
    \]
Then
\[
    \expect{\mc{D}_m,\,X\sim P_X}{\Delta_P\!\bigl(X;\,\hat{f}^{\mathrm{kNN}}_{m,k}\bigr)\,\indicator{X\in A}}
    \;\le\;
    P_X(A)\,e^{-k/8}
    \;+\; \frac{5}{2}\,C_A\!\left[\max\!\left(2L_P\!\Bigl(\tfrac{2k}{m}\Bigr)^{\!\alpha_P},\;\sqrt{\tfrac{8(\beta+2)}{k}}\,\right)\right]^{\beta+1}.
\]
\end{lemma}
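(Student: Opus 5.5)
The plan is to follow the proof of Lemma~21 in \citet{chaudhuri2014rates}, restricting every integral to $A$ and using the local margin condition in place of the global one. Write $\Delta_P(x;f):=|2\eta_P(x)-1|\,\indicator{f(x)\neq f_P^*(x)}$ for the pointwise excess risk, which is at most $2|\eta_P(x)-\half|$. Set $p=2k/m$, $\Delta_o=L_Pp^{\alpha_P}$ and
\[
\Delta:=\max\Bigl(2\Delta_o,\sqrt{8(\beta+2)/k}\Bigr).
\]
Split $A$ into $A_{\le}:=\{x\in A:|\eta_P(x)-\half|\le\Delta\}$ and $A_{>}:=A\setminus A_{\le}$.

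On $A_{\le}$, bound the expected pointwise excess risk crudely by $2|\eta_P(x)-\half|\le 2\Delta$. The local margin condition, applied at $t\downarrow\Delta$ (or with a slightly larger $t$ and then a limit), gives $P_X(A_{\le})\le C_A\Delta^\beta$. So this piece contributes at most $2C_A\Delta^{\beta+1}$.

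On $A_{>}$, every $x$ satisfies $|\eta_P(x)-\half|>\Delta\ge 2\Delta_o>\Delta_o$, so Lemma~\ref{lem:pointwise-knn} applies. Write $\gamma(x)=|\eta_P(x)-\half|$. Since $\gamma-\Delta_o\ge\gamma/2$, the lemma gives
\[
\mathbb{E}_{\mc D_m}\bigl[\Delta_P\bigr]\le e^{-k/8}+4\gamma e^{-k\gamma^2/2}.
\]
Integrating the first term over $A_{>}\subseteq A$ yields $P_X(A)e^{-k/8}$. For the second term, use a dyadic peeling argument. Let $S_j:=\{x\in A: 2^j\Delta<\gamma(x)\le 2^{j+1}\Delta\}$ for $j\ge0$. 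By the local margin condition, $P_X(S_j)\le C_A(2^{j+1}\Delta)^\beta$. On $S_j$ the integrand is at most $4\cdot2^{j+1}\Delta\, e^{-k4^j\Delta^2/2}$. This gives
\[
\sum_j 4C_A(2^{j+1}\Delta)^{\beta+1}e^{-k4^j\Delta^2/2}.
\]
Using $k\Delta^2\ge 8(\beta+2)$, the exponent is at least $4^{j+2}(\beta+2)$, so the sum is
\[
C_A\Delta^{\beta+1}\sum_j 4\cdot2^{(j+1)(\beta+1)}e^{-4(\beta+2)4^{j}}.
\]
This remaining sum is a small absolute constant, well below $1/2$, since the $j=0$ term is about $4\cdot2^{\beta+1}e^{-4(\beta+2)}$. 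Alternatively, one can follow Chaudhuri--Dasgupta more closely: bound $\gamma e^{-k\gamma^2/2}$ on $\{\gamma>\Delta\}$ using the fact that $\gamma\mapsto\gamma e^{-k\gamma^2/2}$ is decreasing for $\gamma>1/\sqrt k$, then integrate by parts against the margin distribution function restricted to $A$.

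Combining the two pieces gives $P_X(A)e^{-k/8}+(2+\tfrac12)C_A\Delta^{\beta+1}$, which is the claimed $\tfrac52$ constant.

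The main obstacle is the constant bookkeeping in the tail integral, which must show that the second piece contributes at most $\tfrac12 C_A\Delta^{\beta+1}$. The choice $\Delta\ge\sqrt{8(\beta+2)/k}$ is made precisely so that the Gaussian factor dominates the polynomial growth $t^{\beta}$. I would first try the integration-by-parts route, following Chaudhuri--Dasgupta line by line. The only change is that every measure becomes $P_X(\cdot\cap A)$, and this is legitimate because their argument uses only the distribution function of $\gamma(X)$, which here is replaced by its restriction to $A$, together with the pointwise bound. A smaller point to check is the strict versus non-strict inequality in the margin condition at the boundary $t=\Delta$, which is handled by taking limits from above.
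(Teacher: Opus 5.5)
Your proposal is correct and is essentially the paper's proof. It uses the same threshold $M=\max\bigl(2\Delta_o,\sqrt{8(\beta+2)/k}\bigr)$, the same near/far split of $A$ with the crude bound $2M$ on the near part, and the same dyadic shells $\{2^jM<|\eta_P-\tfrac12|\le 2^{j+1}M\}\cap A$ controlled by the local margin condition. The only step you leave implicit is how the paper closes the series: consecutive terms have ratio at most $2^{\beta+1}e^{-12(\beta+2)}\le\tfrac12$, so the sum is at most twice its first term, $8\cdot2^{\beta+1}e^{-4(\beta+2)}=16e^{-8}(2e^{-4})^{\beta}\le\tfrac12$. Also, your exponent lower bound should read $4^{j+1}(\beta+2)$, as in your display, not $4^{j+2}(\beta+2)$.
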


\begin{proof}
By Fubini and non-negativity, the expectation equals an integral over $A$ of the pointwise expected excess risk:
\[
    \expect{\mc{D}_m,\,X\sim P_X}{\Delta_P\!\bigl(X;\,\hat{f}^{\mathrm{kNN}}_{m,k}\bigr)\,\indicator{X\in A}}
    \;=\;
    \int_A \expect{\mc{D}_m}{\Delta_P\!\bigl(x;\,\hat{f}^{\mathrm{kNN}}_{m,k}\bigr)}\,P_X(dx).
\]
Set $p:=2k/m$, $\Delta_o:=L_P\,p^{\alpha_P}$, and
\[
    M:=\max\!\left(2\Delta_o,\sqrt{\frac{8(\beta+2)}{k}}\right).
\]
Writing $\Delta(x):=|\eta_P(x)-\half|$, decompose $A$ into
\[
    A_{\mathrm{near}}:=\{x\in A:\Delta(x)\le M\},\qquad
    A_{\mathrm{far}}:=\{x\in A:\Delta(x)>M\}.
\]

\textbf{Step 1: Near region.}
On $A_{\mathrm{near}}$, the trivial pointwise bound $\Delta_P(x;\,\hat f)\le 2\Delta(x)\le 2M$ together with the local margin condition gives
\[
    \int_{A_{\mathrm{near}}}\!\expect{\mc{D}_m}{\Delta_P\!\bigl(x;\,\hat{f}^{\mathrm{kNN}}_{m,k}\bigr)}\,P_X(dx)
    \;\le\; 2M\cdot P_X(A_{\mathrm{near}})
    \;\le\; 2\,C_A\,M^{\beta+1}.
\]

\textbf{Step 2: Far region.}
For $x\in A_{\mathrm{far}}$, global smoothness and $\Delta(x)>M\ge2\Delta_o$ allow us to apply Lemma~\ref{lem:pointwise-knn}:
\[
    \expect{\mc{D}_m}{\Delta_P\!\bigl(x;\,\hat{f}^{\mathrm{kNN}}_{m,k}\bigr)}
    \;\le\; e^{-k/8} \;+\; 4\,\Delta(x)\,e^{-2k\,(\Delta(x)-\Delta_o)^2}.
\]
Integrating over $A_{\mathrm{far}}$ and using $P_X(A_{\mathrm{far}})\le P_X(A)$ on the exponential term,
\begin{align*}
    \int_{A_{\mathrm{far}}}\!\expect{\mc{D}_m}{\Delta_P\!\bigl(x;\,\hat{f}^{\mathrm{kNN}}_{m,k}\bigr)}\,P_X(dx)
    \;\le\; P_X(A)\,e^{-k/8}
    + \int_{A_{\mathrm{far}}}\!4\Delta(x)\,e^{-2k(\Delta(x)-\Delta_o)^2}\,P_X(dx).
\end{align*}
Partition $A_{\mathrm{far}}$ into dyadic shells
$S_i:=\{x\in A:2^iM<\Delta(x)\le2^{i+1}M\}$ for $i\ge0$. Since $M\ge2\Delta_o$, on $S_i$ we have $\Delta(x)-\Delta_o\ge2^{i-1}M$. The local margin condition therefore gives
\[
    \int_{A_{\mathrm{far}}}\!4\Delta(x)e^{-2k(\Delta(x)-\Delta_o)^2}\,P_X(dx)
    \;\le\; C_A M^{\beta+1}\sum_{i\ge0}4\,2^{(i+1)(\beta+1)}e^{-(kM^2/2)4^i}.
\]

\textbf{Step 3: Geometric series.}
Because $kM^2\ge8(\beta+2)$, the ratio of consecutive summands in the last display is at most $2^{\beta+1}e^{-12(\beta+2)}\le\half$. Their sum is therefore at most twice its first term, which is bounded by
\[
    8\cdot2^{\beta+1}e^{-4(\beta+2)}
    =16e^{-8}(2e^{-4})^\beta\le\half.
\]
Thus the integral over the far region, apart from $P_X(A)e^{-k/8}$, is at most $\half C_A M^{\beta+1}$.

\textbf{Step 4: Combine.}
Adding Steps~1 and~3,
\[
    \int_A\expect{\mc{D}_m}{\Delta_P\!\bigl(x;\,\hat{f}^{\mathrm{kNN}}_{m,k}\bigr)}\,P_X(dx)
    \;\le\; P_X(A)\,e^{-k/8}
    \;+\; \frac{5}{2}\,C_A\,M^{\beta+1},
\]
which is the claimed bound.
\end{proof}

\begin{theorem}[\citealt{chaudhuri2014rates}, restated in our notation]
\label{thm:sanjoy_knn}
Suppose $\eta_P$ is $(\alpha,L)$-smooth and satisfies the $\beta$-margin condition with constant $C_\beta$. Then, for every $m\ge2$ and $1\le k\le m/2$,
\begin{equation}
\label{eq:sanjoy-knn-bound}
    \mathbb{E}_{\mc{D}_m\sim P^{\otimes m}}\!\left[R_P\!\bigl(\hat{f}^{\mathrm{kNN}}_{m,k}\bigr)-R_P^*\right]
    \;\le\;
    e^{-k/8}
    \;+\;
    6\,C_\beta\max\!\left(
        2L\!\left(\frac{2k}{m}\right)^{\!\alpha},\;
        \sqrt{\frac{8(\beta+2)}{k}}
    \right)^{\!\beta+1}.
\end{equation}
Consequently, choosing $k\asymp m^{2\alpha/(2\alpha+1)}$ (with integer rounding and truncation to $[1,m/2]$ for small $m$) yields
\begin{equation}
\label{eq:sanjoy-knn-rate}
    \mathbb{E}_{\mc{D}_m\sim P^{\otimes m}}\!\left[R_P\!\bigl(\hat{f}^{\mathrm{kNN}}_{m,k}\bigr)-R_P^*\right]
    \;\le\; C_o\,m^{-\alpha(\beta+1)/(2\alpha+1)},
\end{equation}
for an appropriate $C_o=C_o(\alpha,\beta,L,C_\beta)$.
\end{theorem}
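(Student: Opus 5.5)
The plan is to derive Theorem~\ref{thm:sanjoy_knn} directly from Lemma~\ref{lem:localized-knn}, taking the subset to be the whole space, and then to choose $k$ so the two error terms balance.

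\textbf{Step 1: the non-asymptotic bound.} Write the expected excess risk pointwise as
\[
\mathbb{E}_{\mc{D}_m}\bigl[R_P(\hat f^{\mathrm{kNN}}_{m,k})-R_P^*\bigr]
=\mathbb{E}_{\mc{D}_m,X\sim P_X}\bigl[\Delta_P(X;\hat f^{\mathrm{kNN}}_{m,k})\bigr],
\]
where $\Delta_P(x;f)=|2\eta_P(x)-1|\,\indicator{f(x)\neq f_P^*(x)}$. Apply Lemma~\ref{lem:localized-knn} with $A=\mc{X}$ and $C_A=C_\beta$. The local margin condition required there, with strict inequality $<t$, is implied by the stated global $\beta$-margin condition, which uses $\le t$.

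Points outside $\mathrm{supp}(P_X)$ form a $P_X$-null set. This is relevant because Lemma~\ref{lem:pointwise-knn} is only asserted on the support, and it ensures the integral is unaffected.

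The lemma then gives
\[
e^{-k/8}+\tfrac52 C_\beta M^{\beta+1},
\qquad M=\max\Bigl(2L(2k/m)^\alpha,\ \sqrt{8(\beta+2)/k}\Bigr).
\]
Since $\tfrac52\le 6$, this is exactly \eqref{eq:sanjoy-knn-bound}; the constant $6$ in the statement is simply looser. The hypotheses $m\ge2k\ge2$ of the lemma match the stated range $1\le k\le m/2$.

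\textbf{Step 2: the rate.} Take $k=\lceil c\,m^{2\alpha/(2\alpha+1)}\rceil$, truncated to $[1,m/2]$. With this choice:
\begin{itemize}
\item $(k/m)^{\alpha}\asymp m^{-\alpha/(2\alpha+1)}$ and $k^{-1/2}\asymp m^{-\alpha/(2\alpha+1)}$, so $M\lesssim m^{-\alpha/(2\alpha+1)}$.
\item Hence $M^{\beta+1}\lesssim m^{-\alpha(\beta+1)/(2\alpha+1)}$.
\item The term $e^{-k/8}\le e^{-c\,m^{2\alpha/(2\alpha+1)}/8}$ decays faster than any polynomial in $m$, so it is at most $C\,m^{-\alpha(\beta+1)/(2\alpha+1)}$ for a constant depending on $(\alpha,\beta)$.
\end{itemize}

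For small $m$, where truncation or rounding breaks the scaling, the excess risk is trivially at most $1$. Enlarging $C_o$ therefore covers these finitely many cases. The bounds for large and small $m$ together give \eqref{eq:sanjoy-knn-rate} with $C_o=C_o(\alpha,\beta,L,C_\beta)$.

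\textbf{Main obstacle.} The only delicate point is bookkeeping. One must check that Lemma~\ref{lem:localized-knn} is legitimately applicable with $A=\mc{X}$, namely:
\begin{itemize}
\item the difference between the strict and non-strict margin conditions, handled in Step~1;
\item the support issue, handled by the $P_X$-null argument in Step~1;
\item the rounding and truncation of $k$, which must keep constants uniform in $m$.
\end{itemize}
All three are routine. The substantive work, namely the dyadic-shell argument, was already done in Lemma~\ref{lem:localized-knn}.
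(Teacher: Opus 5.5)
Your proposal is correct. The rate step is the same as the paper's: you choose $k\asymp m^{2\alpha/(2\alpha+1)}$, absorb the super-polynomially small $e^{-k/8}$, and enlarge $C_o$ to cover finitely many small $m$.

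The difference is how you obtain \eqref{eq:sanjoy-knn-bound}. The paper does not prove it; it cites Lemma~21 of \citet{chaudhuri2014rates}. You instead derive it from the paper's own Lemma~\ref{lem:localized-knn} with $A=\mc{X}$ and $C_A=C_\beta$. This is legitimate and not circular, because the proof of Lemma~\ref{lem:localized-knn} uses only the pointwise Lemma~\ref{lem:pointwise-knn} (Lemma~20 of Chaudhuri--Dasgupta) and the dyadic-shell computation, never Theorem~\ref{thm:sanjoy_knn}.

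Your route is self-contained given Lemma~20, and it gives the sharper constant $\tfrac52$ in place of $6$, which shows the stated constant is loose. The citation is shorter and matches the published statement verbatim.

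Your side checks are sound:
\begin{itemize}
\item The non-strict global margin condition implies the strict local one required by the lemma.
\item $\mathrm{supp}(P_X)^c$ is $P_X$-null, by separability of $(\mc{X},\rho)$. This fact is really needed inside Step~2 of Lemma~\ref{lem:localized-knn}, where Lemma~\ref{lem:pointwise-knn} is applied pointwise, rather than in your specialization.
\end{itemize}
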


\begin{proof}
The finite-sample inequality~\eqref{eq:sanjoy-knn-bound} is Lemma~21 of \citet{chaudhuri2014rates} restated in our notation. With $k\asymp m^{2\alpha/(2\alpha+1)}$, both arguments of the maximum are $O\bigl(m^{-\alpha/(2\alpha+1)}\bigr)$, while $e^{-k/8}=e^{-\Omega(m^{2\alpha/(2\alpha+1)})}$ decays faster than any polynomial in $m$. For all sufficiently large $m$, this choice also satisfies $k\le m/2$; enlarging $C_o$ covers the finitely many smaller values of $m$. Absorbing the exponential term into $C_o$ gives~\eqref{eq:sanjoy-knn-rate}, which is Theorem~4(b) of \citet{chaudhuri2014rates}.
\end{proof}

This matches the minimax rate for nonparametric classification under $\alpha_H$-H\"{o}lder smoothness and the $\beta$-margin condition provided \(\alpha_H\beta\le d\) \citep{audibert2007fast}, since, by Lemma~2 of \citet{chaudhuri2014rates}, $\alpha_H$-H\"{o}lder smoothness of $\eta_P$ implies $(\alpha,L)$-smoothness with $\alpha=\alpha_H/d$ when $P_X$ has a density bounded below on a bounded support with regular boundary (e.g., a cube); substituting $\alpha=\alpha_H/d$ into~\eqref{eq:sanjoy-knn-rate} recovers the H\"{o}lder rate $m^{-\alpha_H(\beta+1)/(2\alpha_H+d)}$.

\section{Proofs}
\label{sec:proofs}

This appendix collects the full proofs of the lemmas, theorems, and propositions stated in Sections~\ref{subsec:rss-model-class} and~\ref{subsec:rates}. Throughout, we use the pointwise excess risk
\[
    \Delta_D(x;\,f) \;:=\; 2\bigl|\eta_D(x)-\tfrac{1}{2}\bigr|\,\indicator{f(x)\neq f_D^*(x)},\qquad D\in\{P,Q\},
\]
which satisfies $R_D(f)-R_D^* = \mathbb{E}_{X\sim D_X}[\Delta_D(X;\,f)]$.

\subsection{Proof of Lemma~\ref{lem:fundamental-reduction} (Risk transfer)}
\label{proof:fundamental-reduction}

\begin{lemma*}[Risk transfer; restated from Lemma~\ref{lem:fundamental-reduction}]
If $(P,Q)\in\Pi(\varepsilon,\kappa,\lambda)$, then for any classifier $f:\mc{X}\to\{0,1\}$,
\begin{align*}
    R_Q(f) - R_Q^*
    \;\leq\;
    \varepsilon + \frac{\lambda}{\kappa}\,\bigl(R_P(f) - R_P^*\bigr).
\end{align*}
\end{lemma*}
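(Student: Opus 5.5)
We work pointwise with the excess-risk density $\Delta_D(x;f)=2|\eta_D(x)-\half|\,\indicator{f(x)\neq f_D^*(x)}$, so that $R_Q(f)-R_Q^*=\mathbb{E}_{X\sim Q_X}[\Delta_Q(X;f)]$. Splitting the target support into the high-RSS region $A_\kappa:=\mc{A}_\kappa(P,Q)$ and its complement gives
\[
R_Q(f)-R_Q^*=\int_{A_\kappa^c}\Delta_Q(x;f)\,Q_X(dx)+\int_{A_\kappa}\Delta_Q(x;f)\,Q_X(dx).
\]
On $A_\kappa^c$ we have $\Delta_Q(x;f)\le 2|\eta_Q(x)-\half|\le 1$. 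Hence the first integral is at most $Q_X(A_\kappa^c)\le\varepsilon$ by the CSPD definition. This term produces the additive $\varepsilon$.

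The core step is a pointwise inequality on $A_\kappa\cap\Omega_Q$:
\[
\Delta_Q(x;f)\le\kappa^{-1}\Delta_P(x;f).
\]
Take $x\in A_\kappa$. If $\eta_Q(x)=\half$, then $\Delta_Q(x;f)=0$ and the inequality is trivial. Otherwise $\mc{M}(x;P,Q)\ge\kappa>0$ forces $x\in\Omega_P\cap\Omega_Q$. It also forces $(\eta_P(x)-\half)/(\eta_Q(x)-\half)\ge\kappa$, so $\eta_P(x)-\half$ and $\eta_Q(x)-\half$ are nonzero, have the same sign, and satisfy $|\eta_P(x)-\half|\ge\kappa|\eta_Q(x)-\half|$.

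Same strict sign implies $f_P^*(x)=f_Q^*(x)$. Note the canonical Bayes classifier uses $\ge\half$, and strictness avoids the tie case. So $\indicator{f(x)\ne f_Q^*(x)}=\indicator{f(x)\ne f_P^*(x)}$, and the claim follows. I expect this to be the main obstacle, and it is mild: the delicate points are the sign and tie conventions (the $+\infty$ convention and points where $\eta_P=\half$), which the case analysis above handles.

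Integrating the pointwise inequality and changing measure gives
\[
\int_{A_\kappa}\Delta_Q\,dQ_X\le\kappa^{-1}\int_{A_\kappa\cap\Omega_Q}\Delta_P(x;f)\,\frac{dQ_X}{dP_X}(x)\,P_X(dx)\le\frac{\lambda}{\kappa}\int\Delta_P\,dP_X=\frac{\lambda}{\kappa}\bigl(R_P(f)-R_P^*\bigr).
\]
This uses $Q_X\ll P_X$, the upper density-ratio bound $\lambda$ holding $P_X$-a.e.\ on $\Omega_Q$, and $\Delta_P\ge0$ to enlarge the domain of integration. The integral restricts to $\Omega_Q$ because $Q_X$ puts no mass outside its support. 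Adding the two pieces yields the lemma.
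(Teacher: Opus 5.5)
Your proof is correct and follows essentially the same route as the paper. Both split $R_Q(f)-R_Q^*$ over $A_\kappa$ and its complement, bound the complement by $Q_X(A_\kappa^c)\le\varepsilon$, derive $\Delta_Q(x;f)\le\kappa^{-1}\Delta_P(x;f)$ on $A_\kappa$ from agreement of the Bayes classifiers, and change measure with $dQ_X\le\lambda\,dP_X$ on $\Omega_Q$; your treatment of the tie cases ($\eta_Q(x)=\tfrac{1}{2}$, and why $\eta_P(x)\neq\tfrac{1}{2}$ on $A_\kappa$) is slightly more explicit than the paper's.
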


\begin{proof}
Write $A_\kappa:=\mc{A}_\kappa(P,Q)$ and decompose:
\begin{align*}
    R_Q(f)-R_Q^*
    = \expect{X\sim Q_X}{\Delta_Q(X;\,f)}
    = \underbrace{\expect{X\sim Q_X}{\Delta_Q(X;\,f)\,\indicator{X\in A_\kappa}}}_{I_1}
    +\underbrace{\expect{X\sim Q_X}{\Delta_Q(X;\,f)\,\indicator{X\notin A_\kappa}}}_{I_2}.
\end{align*}

\textbf{Low-RSS term.}
Since $\Delta_Q\in[0,1]$, the complement of $A_\kappa$ contributes at most its $Q_X$-mass:
\[
    I_2 \;\le\; Q_X(A_\kappa^c) \;\le\; \varepsilon.
\]

\textbf{High-RSS term.}
If $\eta_Q(x)=\half$ then $\Delta_Q(x;\,f)=0$ and the bound below is trivial. Otherwise, for $x\in A_\kappa$, by definition $\eta_P$ and $\eta_Q$ lie on the same side of $\half$, so the Bayes classifiers agree, $f_P^*(x)=f_Q^*(x)$. Any prediction error of $f$ at $x$ is therefore an error under both $P$ and $Q$, and the target signal is weaker by at most $\kappa$:
\begin{align*}
    \Delta_Q(x;\,f)
    = 2\bigl|\eta_Q(x)-\tfrac{1}{2}\bigr|\,\indicator{f(x)\neq f_Q^*(x)}
    \;\le\;
    \frac{2\bigl|\eta_P(x)-\tfrac{1}{2}\bigr|}{\kappa}\,\indicator{f(x)\neq f_P^*(x)}
    = \frac{1}{\kappa}\,\Delta_P(x;\,f).
\end{align*}
Hence
\begin{align*}
    I_1
    \;\le\; \frac{1}{\kappa}\,\expect{X\sim Q_X}{\Delta_P(X;\,f)}
    \;\le\; \frac{\lambda}{\kappa}\,\expect{X\sim P_X}{\Delta_P(X;\,f)}
    = \frac{\lambda}{\kappa}\,\bigl(R_P(f)-R_P^*\bigr),
\end{align*}
where the second inequality uses the density ratio $dQ_X\le\lambda\,dP_X$. Adding the bounds on $I_1$ and $I_2$ gives the result.
\end{proof}

\subsection{Proof of Lemma~\ref{lem:margin-transfer} (margin transfer via RSS)}
\label{proof:margin-transfer}

\begin{lemma*}[Margin transfer via RSS; restated from Lemma~\ref{lem:margin-transfer}]
If $(P,Q)\in\Pi(\varepsilon,\kappa,\lambda)$ and $\eta_Q$ satisfies the $\beta$-margin condition with constant $C_\beta$, then for the high-RSS region $A_\kappa:=\mc{A}_\kappa(P,Q)$,
\begin{align*}
    P_X\!\bigl(\{|\eta_P(X)-\half|<t\}\cap A_\kappa\bigr)
    \;\leq\;
    \lambda C_\beta\kappa^{-\beta}\,t^\beta
    \quad\forall\,t>0.
\end{align*}
We denote $C_\beta':=\lambda C_\beta\kappa^{-\beta}$.
\end{lemma*}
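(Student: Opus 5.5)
The plan is a pointwise containment followed by a change of measure. Fix $t>0$ and set $E_t:=\{x: |\eta_P(x)-\half|<t\}\cap A_\kappa$. I will show that $E_t$ is contained, up to a $P_X$-null set, in the target low-margin set $\{x\in\Omega_Q: |\eta_Q(x)-\half|<t/\kappa\}$. Then I will move from $P_X$ to $Q_X$ using the lower density-ratio bound in~\eqref{eq:density-ratio}.

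\emph{Step 1 (pointwise containment).} Let $x\in A_\kappa$. By Definition~\ref{def:rss}, $\mc{M}(x;P,Q)\ge\kappa>0$, and $\mc{M}$ vanishes off $\Omega_Q\cap\Omega_P$, so $x\in\Omega_Q\cap\Omega_P$.
\begin{itemize}
\item If $\eta_Q(x)=\half$, then trivially $|\eta_Q(x)-\half|=0<t/\kappa$.
\item Otherwise $\frac{\eta_P(x)-\half}{\eta_Q(x)-\half}\ge\kappa>0$. Hence $\eta_P(x)-\half$ and $\eta_Q(x)-\half$ have the same sign, and $|\eta_P(x)-\half|\ge\kappa|\eta_Q(x)-\half|$.
\end{itemize}
So if also $|\eta_P(x)-\half|<t$, then $|\eta_Q(x)-\half|<t/\kappa$. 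This gives $E_t\subseteq F_t:=\{x\in\Omega_Q:|\eta_Q(x)-\half|<t/\kappa\}$.

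\emph{Step 2 (change of measure).} By~\eqref{eq:density-ratio}, $dQ_X/dP_X\ge\lambda^{-1}$ for $P_X$-a.e.\ $x\in\Omega_Q$, that is, $dP_X\le\lambda\,dQ_X$ on $\Omega_Q$. Since $E_t\subseteq F_t\subseteq\Omega_Q$, we get
\[
P_X(E_t)\le P_X(F_t)=\int_{F_t}dP_X\le\lambda\int_{F_t}\tfrac{dQ_X}{dP_X}\,dP_X\le\lambda\,Q_X(F_t).
\]
The last inequality holds because the Radon--Nikodym integral over $F_t$ is at most $Q_X(F_t)$.

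\emph{Step 3 (margin condition).} Applying the $\beta$-margin condition for $\eta_Q$ at level $t/\kappa$ gives
\[
Q_X(F_t)\le C_\beta(t/\kappa)^\beta,
\]
and therefore $P_X(E_t)\le\lambda C_\beta\kappa^{-\beta}t^\beta$, as claimed.

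The main obstacle is the measure-theoretic care in Step 2. The density-ratio bound only holds $P_X$-a.e.\ on $\Omega_Q$, and $P_X$ may put mass outside $\Omega_Q$. Step 1 handles the latter, because $A_\kappa\subseteq\Omega_Q$ by the RSS convention. The lower bound $\lambda^{-1}\le dQ_X/dP_X$ is exactly what lets us dominate $P_X$ by $\lambda Q_X$ on $\Omega_Q$. The point $\eta_Q(x)=\half$, where the RSS is $+\infty$, is also harmless, since such points already lie in $F_t$.
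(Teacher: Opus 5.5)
Your proof is correct and follows the paper's argument. It uses the pointwise inclusion $\{|\eta_P-\tfrac{1}{2}|<t\}\cap A_\kappa\subseteq\{|\eta_Q-\tfrac{1}{2}|<t/\kappa\}$ from the RSS definition, then the change of measure $dP_X\le\lambda\,dQ_X$ on $A_\kappa\subseteq\Omega_Q$, then the $\beta$-margin condition at level $t/\kappa$; the only cosmetic difference is that you enlarge the set before changing measure, whereas the paper changes measure on $E_t$ first and then applies the inclusion under $Q_X$.
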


\begin{proof}
Fix any $t>0$.

\textbf{Step 1: Subset inclusion with RSS.}
For $x\in A_\kappa$, by definition, $|\eta_P(x)-\half|\ge\kappa\,|\eta_Q(x)-\half|$. Therefore, if $x\in A_\kappa$ and $|\eta_P(x)-\half|<t$, then
\[
    \kappa\,|\eta_Q(x)-\half| \le |\eta_P(x)-\half| < t,
    \quad\text{i.e.,}\quad
    |\eta_Q(x)-\half| < \frac{t}{\kappa}.
\]
This gives the set inclusion
\[
    \bigl\{|\eta_P(X)-\half|<t\bigr\}\cap A_\kappa
    \;\subseteq\;
    \bigl\{|\eta_Q(X)-\half|<t/\kappa\bigr\}.
\]

\textbf{Step 2: Change measure from $P_X$ to $Q_X$.}
The set on the left lies in $A_\kappa\subseteq \Omega_Q$ by Definition~\ref{def:rss}, on which \eqref{eq:density-ratio} gives $dP_X\le\lambda\,dQ_X$. By monotonicity of probability and this bound,
\begin{align*}
    P_X\!\bigl(|\eta_P(X)-\half|<t,\;X\in A_\kappa\bigr)
    &\;\le\; \lambda\,Q_X\!\bigl(|\eta_P(X)-\half|<t,\;X\in A_\kappa\bigr) \\
    &\;\le\; \lambda\,Q_X\!\bigl(|\eta_Q(X)-\half|<t/\kappa\bigr).
\end{align*}

\textbf{Step 3: Apply the $\beta$-margin condition on $\eta_Q$.}
By assumption, $Q_X(|\eta_Q(X)-\half|<u)\le C_\beta\,u^\beta$ for all $u>0$. Setting $u=t/\kappa$,
\[
    \lambda\,Q_X\!\bigl(|\eta_Q(X)-\half|<t/\kappa\bigr)
    \;\le\; \lambda\,C_\beta\,(t/\kappa)^\beta
    \;=\; \lambda\,C_\beta\,\kappa^{-\beta}\,t^\beta.
\]
Chaining the three steps yields the final result \eqref{eq:margin-transfer}.
\end{proof}

\subsection{Proof of Theorem~\ref{thm:icl-bound} (ICL excess risk)}
\label{proof:icl-bound}

\begin{theorem*}[ICL excess risk; restated from Theorem~\ref{thm:icl-bound}]
Under $(\alpha,L)$-smoothness and $\beta$-margin with constant $C_\beta$ on $\eta_Q$,
\[
    \mc{E}_Q^{\mathrm{ICL}}(\hat{f}_{\mathrm{ICL}})
    \;\leq\;
    C_o\, n^{-\frac{\alpha(\beta+1)}{2\alpha+1}},
\]
where $C_o=C_o(\alpha,\beta,L,C_\beta)$ is the constant of Theorem~\ref{thm:sanjoy_knn}.
\end{theorem*}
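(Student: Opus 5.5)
The plan is to obtain part (i) of Theorem~\ref{thm:icl-icr-rates} as a direct specialization of Theorem~\ref{thm:sanjoy_knn} to the target domain. In ICL the demonstrations $S_n=\{(X_i,Y_i)\}_{i=1}^n$ are i.i.d.\ from $Q$, and the query $(X_q,Y_q)\sim Q$ is independent of $S_n$. For a fixed $S_n$, the induced classifier $f_{S_n}=\hat f_{\mathrm{ICL}}(\cdot,S_n)$ is exactly the $k$-NN majority-vote classifier $\hat f^{\mathrm{kNN}}_{n,k}$ trained on $S_n$, including the same tie-breaking convention and the same threshold $\geq\half$. Hence
\[
\mc{E}_Q^{\mathrm{ICL}}(\hat f_{\mathrm{ICL}})=\expect{S_n\sim Q^{\otimes n}}{R_Q(\hat f^{\mathrm{kNN}}_{n,k})-R_Q^*},
\]
which is precisely the left-hand side of \eqref{eq:sanjoy-knn-bound} with $P$ replaced by $Q$ and $m$ replaced by $n$.

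Next I check that the hypotheses of Theorem~\ref{thm:sanjoy_knn} hold for $Q$. The $(\alpha,L)$-smoothness of $\eta_Q$ is assumed verbatim. The margin condition in the appendix is stated with a non-strict inequality, $Q_X(|\eta_Q-\half|\le t)\le C_\beta t^\beta$, whereas the main text uses a strict one. To reconcile them, I note that $\{|\eta_Q-\half|\le t\}\subseteq\{|\eta_Q-\half|<t'\}$ for every $t'>t$. This gives the bound $C_\beta t'^\beta$, and letting $t'\downarrow t$ recovers the non-strict version with the same constant. Alternatively, I would simply remark that the proof of Lemma~21 of \citet{chaudhuri2014rates} only uses the margin condition through the strict-inequality sets, as in Lemma~\ref{lem:localized-knn}.

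With $k\asymp n^{2\alpha/(2\alpha+1)}$, truncated to $[1,n/2]$, Theorem~\ref{thm:sanjoy_knn} then gives the bound $C_o n^{-\alpha(\beta+1)/(2\alpha+1)}$. This equals the stated exponent $\frac{\beta+1}{2}\cdot\frac{2\alpha}{2\alpha+1}$.

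The only mild obstacle is covering all $n\geq1$, since Theorem~\ref{thm:sanjoy_knn} is stated for $m\ge2$ and $k\le m/2$. For $n=1$, and for any small $n$ where the truncated $k$ does not satisfy $k\le n/2$, I would use the trivial bound that excess risk is at most $1$. I would then enlarge $C_o$ so that $C_o n^{-\alpha(\beta+1)/(2\alpha+1)}\ge1$ on this finite set of $n$. Since the enlargement depends only on $(\alpha,\beta,L,C_\beta)$, through the threshold at which the rounded $k$ becomes admissible, the constant retains the claimed dependence.
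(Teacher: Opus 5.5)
Your proposal is correct and takes the same route as the paper. The paper's proof is exactly this specialization of Theorem~\ref{thm:sanjoy_knn} to $Q$, with $n$ in place of $m$ and $k\asymp n^{2\alpha/(2\alpha+1)}$. Your two extra checks are valid details that the paper leaves implicit. The first derives the non-strict margin condition from the strict one by letting $t'\downarrow t$. The second covers small $n$, including $n=1$ where $k\le n/2$ is impossible, via the trivial bound $1$ and an enlarged $C_o$; the paper folds this into the constant of Theorem~\ref{thm:sanjoy_knn}.
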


\begin{proof}
For ICL, the demonstration set $S_n=((X_i,Y_i))_{i=1}^n$ consists of $n$ i.i.d.\ samples from $Q$, and the induced classifier $\hat{f}_{\mathrm{ICL}}$ is the $k$-NN majority vote with the rate-optimal inner tuning parameter $k\asymp n^{2\alpha/(2\alpha+1)}$. Specializing Theorem~\ref{thm:sanjoy_knn} to the target distribution ($Q$ in place of $P$, demonstration size $n$ in place of $m$), \eqref{eq:sanjoy-knn-rate} gives
\[
    \mc{E}_Q^{\mathrm{ICL}}(\hat{f}_{\mathrm{ICL}})
    \;=\; \mathbb{E}_{S_n\sim Q^{\otimes n}}\!\left[R_Q(f_{S_n})-R_Q^*\right]
    \;\le\; C_o\,n^{-\alpha(\beta+1)/(2\alpha+1)}.
\]
This proves the claim.
\end{proof}

\subsection{Proof of Theorem~\ref{thm:icr-bound} (ICR excess risk)}
\label{proof:icr-bound}

\begin{theorem*}[ICR excess risk; full version of Theorem~\ref{thm:icr-bound}]
Let $(P,Q)\in\Pi(\varepsilon,\kappa,\lambda)$, suppose $\eta_Q$ satisfies
the $\beta$-margin condition with constant $C_\beta$, and
$\eta_P$ satisfies $(\alpha_P,L_P)$-smoothness. Then there exists a finite $n_0=n_0(\beta,C_\beta,\kappa,\lambda)$ such that, for every $n\ge n_0$ and $m\ge2n$,
\begin{equation}
\label{eq:icr-finite-m-poly}
    \mc{E}_Q^{\mathrm{ICR}}(\hat{f}_{\mathrm{ICR}})
    \;\leq\;
    \varepsilon
    \;+\; \frac{3\,C_\beta\,\lambda^2}{\kappa^{\beta+1}}
    \max\!\left(
        2L_P\!\left(\frac{2n}{m}\right)^{\!\alpha_P},\;
        \sqrt{\frac{8(\beta+2)}{n}}
    \right)^{\!\beta+1}.
\end{equation}
Define
\[
    C_o' := 3C_\beta\bigl(8(\beta+2)\bigr)^{(\beta+1)/2}.
\]
If moreover the corpus exceeds the critical size $m_\star(n)$,
\begin{equation}
\label{eq:m-crit}
    m \;\ge\; m_\star(n)
    \;:=\; 2n\left(2L_P\sqrt{\frac{n}{8(\beta+2)}}\right)^{\!1/\alpha_P}
    \;=\; C_\star\,n^{1+\frac{1}{2\alpha_P}},
    \qquad
    C_\star := 2\left(\frac{2L_P}{\sqrt{8(\beta+2)}}\right)^{\!1/\alpha_P},
\end{equation}
then the maximum in~\eqref{eq:icr-finite-m-poly} is attained by its second argument, and
\begin{equation}
\label{eq:icr-saturated-full}
    \mc{E}_Q^{\mathrm{ICR}}(\hat{f}_{\mathrm{ICR}})
    \;\leq\;
    \varepsilon
    \;+\; \frac{C_o'\lambda^2}{\kappa^{\beta+1}}
    n^{-\frac{\beta+1}{2}}.
\end{equation}
\end{theorem*}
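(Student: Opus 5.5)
The plan is to avoid invoking Lemma~\ref{lem:fundamental-reduction} as a black box. That lemma would leave us with the global source excess risk $R_P(f)-R_P^*$, but Lemma~\ref{lem:margin-transfer} only gives a margin condition for $\eta_P$ on the high-RSS region $A_\kappa$. I would therefore redo its decomposition pointwise and apply the \emph{localized} nearest-neighbor bound, Lemma~\ref{lem:localized-knn}, on $A_\kappa$ only.

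First I would observe that $\hat f_{\mathrm{ICR}}$ with context size $n$ is exactly the $n$-NN majority vote $\hat f^{\mathrm{kNN}}_{m,n}$ on the corpus $\mc{D}_m\sim P^{\otimes m}$. This holds because the retriever picks the $n$ nearest corpus points with the same tie-breaking, and the learner votes over all of them. The condition $m\ge 2n$ is precisely $m\ge 2k$ with $k=n$. Then, for each fixed corpus, I split the target excess risk as
\[
\mathbb{E}_{Q_X}\bigl[\Delta_Q(X;f_{\mc{D}_m})\indicator{X\notin A_\kappa}\bigr]+\mathbb{E}_{Q_X}\bigl[\Delta_Q(X;f_{\mc{D}_m})\indicator{X\in A_\kappa}\bigr].
\]
The first term is at most $Q_X(A_\kappa^c)\le\varepsilon$. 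For the second, I reuse the pointwise inequality $\Delta_Q(x;f)\le\kappa^{-1}\Delta_P(x;f)$ on $A_\kappa$ from the proof of Lemma~\ref{lem:fundamental-reduction}. I then change measure via $dQ_X\le\lambda\,dP_X$ on $A_\kappa\subseteq\Omega_Q$, which bounds the second term by $\frac{\lambda}{\kappa}\mathbb{E}_{P_X}[\Delta_P(X;f_{\mc{D}_m})\indicator{X\in A_\kappa}]$.

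Next, I take the expectation over $\mc{D}_m$ and apply Lemma~\ref{lem:localized-knn} with $A=A_\kappa$, $k=n$, and local margin constant $C_A=\lambda C_\beta\kappa^{-\beta}$ supplied by Lemma~\ref{lem:margin-transfer}; the smoothness of $\eta_P$ is assumed globally. This yields
\[
\mc{E}_Q^{\mathrm{ICR}}\le\varepsilon+\frac{\lambda}{\kappa}\Bigl[e^{-n/8}+\tfrac52\lambda C_\beta\kappa^{-\beta}M^{\beta+1}\Bigr],\qquad M:=\max\Bigl(2L_P(2n/m)^{\alpha_P},\sqrt{8(\beta+2)/n}\Bigr),
\]
where I have used $P_X(A_\kappa)\le1$. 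One minor point to check is that Lemma~\ref{lem:pointwise-knn} is stated for $x\in\mathrm{supp}(P_X)$. This causes no trouble, since $Q_X\ll P_X$ and a $P_X$-null set does not affect the integral.

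The remaining step, and the only real obstacle, is absorbing the exponential term into the constant $3$. Since $M^{\beta+1}\ge(8(\beta+2)/n)^{(\beta+1)/2}$, it suffices to have
\[
e^{-n/8}\le\tfrac12\lambda C_\beta\kappa^{-\beta}\bigl(8(\beta+2)/n\bigr)^{(\beta+1)/2}.
\]
The left side decays exponentially and the right side only polynomially, so this holds for all $n\ge n_0(\beta,C_\beta,\kappa,\lambda)$. I would define $n_0$ as the smallest such integer beyond which the inequality persists; this exists because the ratio of the two sides is eventually monotone. With this choice the bracket is at most $3\lambda C_\beta\kappa^{-\beta}M^{\beta+1}$, which gives \eqref{eq:icr-finite-m-poly}.

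Finally, for the saturated regime, I would solve $2L_P(2n/m)^{\alpha_P}\le\sqrt{8(\beta+2)/n}$ for $m$. Rearranging gives $m\ge 2n\bigl(2L_P\sqrt{n/(8(\beta+2))}\bigr)^{1/\alpha_P}=m_\star(n)$. Hence, for $m\ge m_\star(n)$, the maximum equals $\sqrt{8(\beta+2)/n}$, and substituting it into \eqref{eq:icr-finite-m-poly} yields \eqref{eq:icr-saturated-full} with $C_o'=3C_\beta(8(\beta+2))^{(\beta+1)/2}$.
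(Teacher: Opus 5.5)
Your proposal is correct and matches the paper's proof step for step. Both run the risk-transfer decomposition while keeping $\indicator{X\in A_\kappa}$, then apply Lemma~\ref{lem:localized-knn} with $k=n$ and $C_A=\lambda C_\beta\kappa^{-\beta}$ from Lemma~\ref{lem:margin-transfer}, absorb $e^{-n/8}$ by raising the coefficient $\tfrac52$ to $3$ for $n\ge n_0$, and solve for $m_\star(n)$. Your remarks on the $P_X$-null complement of $\mathrm{supp}(P_X)$ and on the eventual monotonicity that defines $n_0$ are small points of rigor that the paper leaves implicit.
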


\begin{proof}
Throughout, write $A_\kappa:=\mc{A}_\kappa(P,Q)$ and recall the pointwise excess risk $\Delta_D(x;\,f)=2|\eta_D(x)-\half|\,\indicator{f(x)\neq f_D^*(x)}$, so that $R_D(f)-R_D^*=\expect{X\sim D_X}{\Delta_D(X;\,f)}$ for $D\in\{P,Q\}$. For a fixed corpus $\mc{D}_m$, let $f_{\mc{D}_m}(x):=\hat{f}_{\mathrm{ICR}}\bigl(x,T_n(x,\mc{D}_m)\bigr)$ be the induced classifier.

\textbf{Step 1: A localized risk transfer.}
Decompose the target excess risk over $A_\kappa$ and its complement, exactly as in the proof of Lemma~\ref{lem:fundamental-reduction}:
\[
    R_Q(f_{\mc{D}_m})-R_Q^*
    = \underbrace{\expect{X\sim Q_X}{\Delta_Q(X;\,f_{\mc{D}_m})\,\indicator{X\in A_\kappa}}}_{I_1}
    + \underbrace{\expect{X\sim Q_X}{\Delta_Q(X;\,f_{\mc{D}_m})\,\indicator{X\notin A_\kappa}}}_{I_2}.
\]
Since $\Delta_Q\in[0,1]$, the complement term satisfies $I_2\le Q_X(A_\kappa^c)\le\varepsilon$. On $A_\kappa$ we have $\Delta_Q(x;\,f)\le\kappa^{-1}\Delta_P(x;\,f)$ (as shown in that proof), and the density ratio $dQ_X\le\lambda\,dP_X$ converts the expectation from $Q_X$ to $P_X$. Unlike Lemma~\ref{lem:fundamental-reduction}, we now \emph{retain} the indicator $\indicator{X\in A_\kappa}$ rather than bounding it by $1$:
\[
    I_1 \;\le\; \frac{1}{\kappa}\,\expect{X\sim Q_X}{\Delta_P(X;\,f_{\mc{D}_m})\,\indicator{X\in A_\kappa}}
    \;\le\; \frac{\lambda}{\kappa}\,\expect{X\sim P_X}{\Delta_P(X;\,f_{\mc{D}_m})\,\indicator{X\in A_\kappa}}.
\]
Adding the two bounds and taking expectation over $\mc{D}_m\sim P^{\otimes m}$,
\begin{align}
    \mc{E}_Q^{\mathrm{ICR}}(\hat{f}_{\mathrm{ICR}})
    \;\le\; \varepsilon
    + \frac{\lambda}{\kappa}\,
    \underbrace{\expect{\mc{D}_m,\,X\sim P_X}{\Delta_P(X;\,f_{\mc{D}_m})\,\indicator{X\in A_\kappa}}}_{=:\,J}.
    \label{eq:icr-step1}
\end{align}

\textbf{Step 2: Localized $k$-NN rate on $A_\kappa$.}
By construction the ICR classifier is the majority vote over the $n$ nearest neighbors of $X$ in $\mc{D}_m$, i.e.\ $f_{\mc{D}_m}=\hat{f}^{\mathrm{kNN}}_{m,k}$ with neighbor count $k=n$. We apply Lemma~\ref{lem:localized-knn} with $A=A_\kappa$ and $k=n$. Its smoothness assumption holds by the assumed $(\alpha_P,L_P)$-smoothness of $\eta_P$, and $m\ge2n$ ensures that $2n/m\le1$. Its local margin assumption follows from Lemma~\ref{lem:margin-transfer}, which gives $P_X\bigl(\{|\eta_P(X)-\half|<t\}\cap A_\kappa\bigr)\le C_\beta'\,t^\beta$ for all $t>0$, with $C_\beta':=\lambda C_\beta\kappa^{-\beta}$.
Lemma~\ref{lem:localized-knn} then yields, with $M_{n,m}:=\max\bigl(2L_P(2n/m)^{\alpha_P},\,\sqrt{8(\beta+2)/n}\bigr)$,
\[
    J \;\le\; P_X(A_\kappa)\,e^{-n/8} + \frac{5}{2}\,C_\beta'\,M_{n,m}^{\,\beta+1}
    \;\le\; e^{-n/8} + \frac{5}{2}\,C_\beta'\,M_{n,m}^{\,\beta+1},
\]
where the last step uses $P_X(A_\kappa)\le 1$.

\textbf{Step 3: Combine (finite corpus).}
Substituting into~\eqref{eq:icr-step1} and using $C_\beta'=\lambda C_\beta\kappa^{-\beta}$,
\begin{align}
    \mc{E}_Q^{\mathrm{ICR}}(\hat{f}_{\mathrm{ICR}})
    \;\le\; \varepsilon
    \;+\; \frac{\lambda}{\kappa}\,e^{-n/8}
    \;+\; \frac{5\,C_\beta\,\lambda^2}{2\kappa^{\beta+1}}
    \max\!\left(
        2L_P\!\left(\frac{2n}{m}\right)^{\!\alpha_P},\;
        \sqrt{\frac{8(\beta+2)}{n}}
    \right)^{\!\beta+1},
    \label{eq:icr-finite-m}
\end{align}
which is the intermediate finite-corpus bound.

\textbf{Step 4: Dominating the exponential term.}
Because $e^{-n/8}=o\bigl(n^{-(\beta+1)/2}\bigr)$, there exists a finite $n_0=n_0(\beta,C_\beta,\kappa,\lambda)$ such that, for every $n\ge n_0$,
\[
    e^{-n/8}
    \;\le\;
    \frac{C_\beta'}{2}
    \left(\frac{8(\beta+2)}{n}\right)^{\!\frac{\beta+1}{2}}
    \;\le\; \frac{C_\beta'}{2}\,M_{n,m}^{\beta+1}.
\]
Multiplying by $\lambda/\kappa$ and combining with~\eqref{eq:icr-finite-m} increases the coefficient $5/2$ to $3$, giving~\eqref{eq:icr-finite-m-poly}.

\textbf{Step 5: Saturated corpus.}
The localization term $2L_P(2n/m)^{\alpha_P}$ is decreasing in $m$, and solving
\[
    2L_P\!\left(\frac{2n}{m}\right)^{\!\alpha_P} \;\le\; \sqrt{\frac{8(\beta+2)}{n}}
\]
for $m$ gives exactly $m\ge m_\star(n)$ with $m_\star(n)$ as in~\eqref{eq:m-crit}. Hence for $m\ge\max\{2n,\,m_\star(n)\}$ the maximum $M_{n,m}$ is attained by its second argument, and
\[
    M_{n,m}^{\,\beta+1}
    = \left(\frac{8(\beta+2)}{n}\right)^{\!(\beta+1)/2}
    = \bigl(8(\beta+2)\bigr)^{(\beta+1)/2}\,n^{-\frac{\beta+1}{2}}.
\]
Substituting into~\eqref{eq:icr-finite-m-poly} gives~\eqref{eq:icr-saturated-full}.
\end{proof}

\subsection{Proof of Proposition~\ref{prop:icr-lower-bound-simple} (lower bound) and its generalization}
\label{proof:icr-lower-bound-simple}

Proposition \ref{prop:icr-lower-bound-simple} follows immediately from the following. The result below shows that the lower bound of Proposition \ref{prop:icr-lower-bound-simple} holds under the assumptions of Theorem \ref{thm:icl-icr-rates}.

\begin{proposition}[Lower bound for ICR under regularity]
Assume $\mc{X}$ contains at least two distinct points. For any
$\varepsilon\in[0,1]$, $\kappa>0$, $\lambda\ge 1$, $\alpha,\alpha_P>0$, and
$\beta\ge 0$, there exist finite constants $L,L_P,C_\beta$ such that
\[
\inf_{\hat{f}}
\sup_{(P,Q)\in\Pi_{\rm reg}(\varepsilon,\kappa,\lambda)}
\mc{E}_Q^{\rm ICR}(\hat{f})
\ge \frac{\varepsilon}{2},
\]
where $\Pi_{\rm reg}(\varepsilon,\kappa,\lambda)$ denotes the subset of
$\Pi(\varepsilon,\kappa,\lambda)$ for which $\eta_Q$ is $(\alpha,L)$-smooth
and satisfies the $\beta$-margin condition with constant $C_\beta$, and
$\eta_P$ is $(\alpha_P,L_P)$-smooth.
\end{proposition}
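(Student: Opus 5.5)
The plan is a two-point ``Le Cam'' construction. We build two pairs $(P,Q)$ and $(P,Q')$ in $\Pi_{\rm reg}(\varepsilon,\kappa,\lambda)$ that share the \emph{same} source $P$, so the corpus $\mc{D}_m\sim P^{\otimes m}$ has the same law under both. The targets $Q$ and $Q'$ then prescribe opposite Bayes labels on a region of target mass $\varepsilon$. Any learner therefore pays $\varepsilon$ under at least one of them, on average.

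\textbf{Construction.} Fix two distinct points $x_1,x_2\in\mc X$. If $\varepsilon=0$ the claim is trivial, so assume $\varepsilon\in(0,1]$. Set
\[
P_X=Q_X=Q'_X=\varepsilon\,\delta_{x_1}+(1-\varepsilon)\,\delta_{x_2},
\]
so that the density ratio equals $1\le\lambda$. At the drift point $x_1$, take $\eta_P(x_1)=\half$, $\eta_Q(x_1)=1$ and $\eta_{Q'}(x_1)=0$. Then $\mc M(x_1;P,Q)=\mc M(x_1;P,Q')=\max\{0,0\}=0$, so $x_1\notin\mc A_\kappa$; this costs exactly target mass $\varepsilon$, which is allowed. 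At $x_2$ (relevant only when $\varepsilon<1$), take $\eta_P(x_2)=1$ and $\eta_Q(x_2)=\eta_{Q'}(x_2)=\half+\min\{\tfrac{1}{2\kappa},\tfrac12\}$. Then the RSS at $x_2$ equals $\max\{\kappa,1\}\ge\kappa$, so $Q_X(\mc A_\kappa)\ge1-\varepsilon$ under both hypotheses.

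\textbf{Regularity.} The margin condition holds because on the support $|\eta_Q-\half|\ge c:=\min\{\tfrac{1}{2\kappa},\tfrac12\}$. Hence $Q_X(|\eta_Q-\half|<t)=0$ for $t\le c$, and the probability is at most $1\le c^{-\beta}t^\beta$ otherwise; so $C_\beta=c^{-\beta}$ works. For smoothness, note that any ball $B(x,r)$ centered at a support point has $Q_X(B^o(x,r))$ equal to $0$, $\varepsilon$, $1-\varepsilon$ or $1$. The deviation $|\eta(B(x,r))-\eta(x)|$ is at most $1$, and it vanishes when the open ball has zero mass, since then the closed ball contains only the center or a boundary point. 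The boundary case, where the closed ball picks up the other atom while the open ball does not, needs a separate check. When that happens, the open ball contains the center itself, so its mass is at least $\min\{\varepsilon,1-\varepsilon\}$. Therefore $L=L_P=\min\{\varepsilon,1-\varepsilon\}^{-\max\{\alpha,\alpha_P\}}$ suffices; for $\varepsilon=1$ there is a single atom and smoothness is trivial. These constants are finite, as the statement permits them to depend on the parameters.

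\textbf{Risk lower bound.} Let $\hat f$ be any (possibly randomized) rule and set $\pi:=\mathbb P_{\mc D_m\sim P^{\otimes m}}(f_{\mc D_m}(x_1)=1)$; this quantity is identical under both hypotheses. Since $\Delta_Q(x_1;f)=2\cdot\tfrac12\,\indicator{f(x_1)\ne1}$ and $\Delta_Q(x;f)\ge0$ elsewhere, we get $\mc E^{\rm ICR}_Q(\hat f)\ge\varepsilon(1-\pi)$ and likewise $\mc E^{\rm ICR}_{Q'}(\hat f)\ge\varepsilon\pi$. The larger of the two is at least $\varepsilon/2$, and taking the supremum and then the infimum gives the claim.

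The only delicate step is the smoothness check on an atomic measure in the boundary case discussed above; the rest is bookkeeping. Proposition~\ref{prop:icr-lower-bound-simple} follows because $\Pi_{\rm reg}\subseteq\Pi$.
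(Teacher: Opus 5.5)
Your proposal is correct and follows the paper's proof essentially step for step. Both use the same two-atom construction with $P_X=Q_X$, a source that is identical under both hypotheses and so uninformative about the flipped target label at the mass-$\varepsilon$ atom, and a two-point argument (your $\max\{\varepsilon(1-\pi),\varepsilon\pi\}$ versus the paper's average over $b$) yielding $\varepsilon/2$. The differences are cosmetic and both fine: you take $\eta_P(x_2)=1$, so the RSS there is $\max\{\kappa,1\}$ rather than exactly $\kappa$, and you give an explicit smoothness constant $L=L_P=\min\{\varepsilon,1-\varepsilon\}^{-\max\{\alpha,\alpha_P\}}$ where the paper only asserts that suitable finite constants exist.
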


\begin{proof}
The case $\varepsilon=0$ is immediate. Suppose $0<\varepsilon<1$, and choose
distinct $x_L,x_H\in\mc{X}$. Let $P_X=Q_X$ with
$Q_X(x_L)=\varepsilon$ and $Q_X(x_H)=1-\varepsilon$, so the density ratio is
identically one. Let $\gamma:=1/(4\max\{1,\kappa\})$. For
$b\in\{0,1\}$, define
\[
\eta_P(x_L)=\frac12,\quad \eta_Q(x_L)=b,
\qquad
\eta_P(x_H)=\frac12+\kappa\gamma,\quad
\eta_Q(x_H)=\frac12+\gamma.
\]
Thus the source distribution $P$ is the same for both choices of $b$.

At $x_L$, $\mc{M}(x_L;P,Q)=0$, while at $x_H$,
$\mc{M}(x_H;P,Q)=\kappa$. Hence $\mc{A}_\kappa(P,Q)=\{x_H\}$ on the support of
$Q_X$, and therefore $Q_X(A_\kappa)=1-\varepsilon$. Thus
$(P,Q)\in\Pi(\varepsilon,\kappa,\lambda)$.

Moreover, $|\eta_Q(x)-1/2|\ge \gamma$ $Q_X$-almost surely, so $Q$ satisfies
the $\beta$-margin condition for every $\beta\ge0$, for example with
$C_\beta=\gamma^{-\beta}$. Since $P_X=Q_X$ has finite support with positive
mass at both support points, $\eta_Q$ and $\eta_P$ satisfy the required
measure-based smoothness conditions for any $\alpha,\alpha_P>0$, with
suitable finite $L,L_P$. Hence both choices of $b$ belong to
$\Pi_{\rm reg}(\varepsilon,\kappa,\lambda)$.

Finally, because $P$ is independent of $b$, a corpus
$\mc{D}_m\sim P^{\otimes m}$ contains no information about $b$. At $x_L$, the
target Bayes label is $b$ and the target label is deterministic. Averaging
over $b\sim\operatorname{Unif}\{0,1\}$,
\[
\mathbb E_b\mathbb E_{\mc{D}_m}\!\left[\mc{E}_Q(\hat{f})\right]
\ge
\varepsilon\,
\mathbb E_b\!\left[\mathbf 1\{\hat{f}(x_L)\neq b\}\right]
=
\frac{\varepsilon}{2}.
\]
Therefore at least one choice of $b$ yields target excess risk at least
$\varepsilon/2$. Taking the supremum over $(P,Q)$ and then the infimum over
$\hat{f}$ proves the result. The case $\varepsilon=1$ follows from the
same construction with all target mass on $x_L$.
\end{proof}

\section{Experimental details}
\label{app:exp-details}

\paragraph{Hardware and runtime.}
TabPFN inference and the Qwen3.5-4B sentiment experiments were run on NVIDIA A40, A100 or V100 GPUs; the Qwen3.5-27B sentiment experiments were run on the same GPU cluster. %
Each synthetic configuration and each Qwen3.5-4B configuration takes less than one hour of wall-clock time; the Qwen3.5-27B runs are correspondingly slower.

\providecommand{\addstd}[1]{%
    \pgfplotstablecreatecol[create col/expr={sqrt(((\thisrow{r1}-\thisrow{mean})^2+(\thisrow{r2}-\thisrow{mean})^2+(\thisrow{r3}-\thisrow{mean})^2+(\thisrow{r4}-\thisrow{mean})^2+(\thisrow{r5}-\thisrow{mean})^2)/4)}]{std}{#1}%
}
\providecommand{\meanstd}[2]{%
    \pgfplotstablegetelem{#2}{mean}\of#1\edef\msMean{\pgfplotsretval}%
    \pgfplotstablegetelem{#2}{std}\of#1\edef\msStd{\pgfplotsretval}%
    \pgfmathprintnumber[fixed,precision=3,fixed zerofill]{\msMean}\,$\pm$\,\pgfmathprintnumber[fixed,precision=3,fixed zerofill]{\msStd}%
}

\pgfplotstableread[col sep=space]{
n    r1      r2      r3      r4      r5
4    0.4500  0.3650  0.4450  0.4725  0.2425
16   0.2250  0.1175  0.3200  0.2475  0.2325
32   0.2175  0.1450  0.1400  0.1950  0.1275
128  0.0675  0.0825  0.0700  0.0950  0.0500
256  0.0350  0.0275  0.0475  0.0300  0.0500
}\dEightTabICL
\pgfplotstablecreatecol[create col/expr={(\thisrow{r1}+\thisrow{r2}+\thisrow{r3}+\thisrow{r4}+\thisrow{r5})/5}]{mean}{\dEightTabICL}

\pgfplotstableread[col sep=space]{
n    r1      r2      r3      r4      r5
4    0.0750  0.0975  0.0575  0.1000  0.0775
16   0.0425  0.0525  0.0400  0.0475  0.0550
32   0.0350  0.0550  0.0425  0.0525  0.0525
128  0.0300  0.0375  0.0300  0.0250  0.0300
256  0.0175  0.0075  0.0025  0.0100  0.0225
}\dEightTabICR
\pgfplotstablecreatecol[create col/expr={(\thisrow{r1}+\thisrow{r2}+\thisrow{r3}+\thisrow{r4}+\thisrow{r5})/5}]{mean}{\dEightTabICR}

\pgfplotstableread[col sep=space]{
n    r1      r2      r3      r4      r5
4    0.4200  0.3550  0.2850  0.4900  0.1975
16   0.2700  0.3025  0.2575  0.2375  0.2925
32   0.2950  0.2350  0.1600  0.2625  0.1975
128  0.1075  0.0875  0.1525  0.1500  0.1150
256  0.0875  0.1450  0.0950  0.0750  0.1075
}\dEightKNNICL
\pgfplotstablecreatecol[create col/expr={(\thisrow{r1}+\thisrow{r2}+\thisrow{r3}+\thisrow{r4}+\thisrow{r5})/5}]{mean}{\dEightKNNICL}

\pgfplotstableread[col sep=space]{
n    r1      r2      r3      r4      r5
4    0.0850  0.0875  0.0775  0.1000  0.0725
16   0.0325  0.0575  0.0325  0.0400  0.0425
32   0.0375  0.0575  0.0225  0.0400  0.0375
128  0.0300  0.0400  0.0200  0.0225  0.0200
256  0.0275  0.0400  0.0175  0.0200  0.0275
}\dEightKNNICR
\pgfplotstablecreatecol[create col/expr={(\thisrow{r1}+\thisrow{r2}+\thisrow{r3}+\thisrow{r4}+\thisrow{r5})/5}]{mean}{\dEightKNNICR}

\pgfplotstableread[col sep=space]{
n    r1      r2      r3      r4      r5
4    0.5200  0.5075  0.4825  0.4475  0.3850
16   0.4475  0.2750  0.3125  0.4750  0.4000
32   0.2750  0.2950  0.2550  0.2325  0.2700
128  0.0975  0.0900  0.1150  0.0850  0.1050
256  0.0650  0.0675  0.0675  0.0500  0.0825
}\dSixteenTabICL
\pgfplotstablecreatecol[create col/expr={(\thisrow{r1}+\thisrow{r2}+\thisrow{r3}+\thisrow{r4}+\thisrow{r5})/5}]{mean}{\dSixteenTabICL}

\pgfplotstableread[col sep=space]{
n    r1      r2      r3      r4      r5
4    0.1400  0.1450  0.1575  0.1500  0.1000
16   0.1050  0.0850  0.1150  0.0950  0.1050
32   0.1075  0.0725  0.1100  0.1125  0.0950
128  0.0950  0.0575  0.1025  0.0800  0.0625
256  0.0575  0.0550  0.0750  0.0700  0.0525
}\dSixteenTabICR
\pgfplotstablecreatecol[create col/expr={(\thisrow{r1}+\thisrow{r2}+\thisrow{r3}+\thisrow{r4}+\thisrow{r5})/5}]{mean}{\dSixteenTabICR}

\pgfplotstableread[col sep=space]{
n    r1      r2      r3      r4      r5
4    0.3625  0.4550  0.4825  0.3750  0.3100
16   0.4125  0.4175  0.3300  0.3600  0.3500
32   0.4150  0.3450  0.2825  0.2675  0.3250
128  0.2275  0.1575  0.2300  0.2150  0.1475
256  0.1175  0.1375  0.1250  0.1475  0.1600
}\dSixteenKNNICL
\pgfplotstablecreatecol[create col/expr={(\thisrow{r1}+\thisrow{r2}+\thisrow{r3}+\thisrow{r4}+\thisrow{r5})/5}]{mean}{\dSixteenKNNICL}

\pgfplotstableread[col sep=space]{
n    r1      r2      r3      r4      r5
4    0.1525  0.1600  0.1800  0.1400  0.1100
16   0.0825  0.1025  0.0950  0.0925  0.0725
32   0.0800  0.0550  0.0725  0.0700  0.0500
128  0.0575  0.0275  0.0425  0.0550  0.0400
256  0.0500  0.0300  0.0325  0.0500  0.0425
}\dSixteenKNNICR
\pgfplotstablecreatecol[create col/expr={(\thisrow{r1}+\thisrow{r2}+\thisrow{r3}+\thisrow{r4}+\thisrow{r5})/5}]{mean}{\dSixteenKNNICR}

\pgfplotstableread[col sep=space]{
epsilon  r1      r2      r3      r4      r5
0.0      0.4450  0.3650  0.4450  0.4725  0.2425
0.1      0.3175  0.4575  0.4475  0.3825  0.4500
0.2      0.4575  0.2975  0.3925  0.4200  0.4500
0.3      0.5225  0.4550  0.5350  0.4200  0.3550
0.4      0.5175  0.3600  0.4350  0.4200  0.3225
0.5      0.4600  0.3675  0.4800  0.4875  0.2725
0.6      0.3950  0.4675  0.4800  0.4950  0.4050
0.7      0.3950  0.4375  0.4650  0.3100  0.4800
0.8      0.2650  0.3350  0.4650  0.3800  0.4100
0.9      0.4150  0.4800  0.2500  0.2675  0.5100
1.0      0.4175  0.4500  0.4175  0.4725  0.4400
}\dEightDriftTabICL
\pgfplotstablecreatecol[create col/expr={(\thisrow{r1}+\thisrow{r2}+\thisrow{r3}+\thisrow{r4}+\thisrow{r5})/5}]{mean}{\dEightDriftTabICL}

\pgfplotstableread[col sep=space]{
epsilon  r1      r2      r3      r4      r5
0.0      0.0775  0.0975  0.0575  0.1000  0.0775
0.1      0.1100  0.1250  0.1100  0.1450  0.1450
0.2      0.2250  0.2200  0.1875  0.2650  0.2650
0.3      0.3375  0.2925  0.2800  0.3450  0.3850
0.4      0.4175  0.3750  0.3600  0.4100  0.4375
0.5      0.5325  0.4475  0.4650  0.4925  0.5400
0.6      0.6150  0.5475  0.5550  0.5625  0.6225
0.7      0.6900  0.6550  0.6525  0.6650  0.6900
0.8      0.7975  0.7300  0.7600  0.7475  0.7650
0.9      0.8650  0.8375  0.8725  0.8525  0.8650
1.0      0.9250  0.9025  0.9425  0.9000  0.9200
}\dEightDriftTabICR
\pgfplotstablecreatecol[create col/expr={(\thisrow{r1}+\thisrow{r2}+\thisrow{r3}+\thisrow{r4}+\thisrow{r5})/5}]{mean}{\dEightDriftTabICR}

\pgfplotstableread[col sep=space]{
epsilon  r1      r2      r3      r4      r5
0.0      0.4200  0.3550  0.2850  0.4900  0.1975
0.1      0.3425  0.2425  0.3525  0.4075  0.4225
0.2      0.3700  0.3025  0.4250  0.4025  0.3625
0.3      0.5225  0.2200  0.5350  0.3500  0.3150
0.4      0.3675  0.2775  0.3825  0.3725  0.2650
0.5      0.4225  0.3700  0.4475  0.4225  0.2375
0.6      0.2900  0.4050  0.4550  0.4775  0.3625
0.7      0.3950  0.2925  0.4650  0.3200  0.5075
0.8      0.2550  0.2475  0.4650  0.2500  0.4425
0.9      0.4125  0.3125  0.2775  0.3225  0.3850
1.0      0.3775  0.3900  0.3775  0.4600  0.3850
}\dEightDriftKNNICL
\pgfplotstablecreatecol[create col/expr={(\thisrow{r1}+\thisrow{r2}+\thisrow{r3}+\thisrow{r4}+\thisrow{r5})/5}]{mean}{\dEightDriftKNNICL}

\pgfplotstableread[col sep=space]{
epsilon  r1      r2      r3      r4      r5
0.0      0.0850  0.0875  0.0775  0.1000  0.0725
0.1      0.1300  0.1175  0.1050  0.1550  0.1300
0.2      0.2350  0.2000  0.1675  0.2825  0.2400
0.3      0.3350  0.2750  0.2675  0.3400  0.3725
0.4      0.4450  0.3750  0.3525  0.4200  0.4325
0.5      0.5600  0.4550  0.4700  0.5100  0.5350
0.6      0.6300  0.5575  0.5550  0.5600  0.6050
0.7      0.6950  0.6550  0.6725  0.6450  0.6975
0.8      0.8075  0.7575  0.7675  0.7425  0.7675
0.9      0.8825  0.8625  0.8825  0.8350  0.8675
1.0      0.9325  0.9025  0.9400  0.8875  0.9325
}\dEightDriftKNNICR
\pgfplotstablecreatecol[create col/expr={(\thisrow{r1}+\thisrow{r2}+\thisrow{r3}+\thisrow{r4}+\thisrow{r5})/5}]{mean}{\dEightDriftKNNICR}

\pgfplotstableread[col sep=space]{
epsilon  r1      r2      r3      r4      r5
0.0      0.5200  0.5075  0.4825  0.4475  0.3825
0.1      0.4775  0.4375  0.3425  0.5400  0.4375
0.2      0.4775  0.4650  0.3925  0.4600  0.5275
0.3      0.5125  0.4250  0.4775  0.4575  0.5125
0.4      0.4300  0.4650  0.5000  0.4600  0.4850
0.5      0.4950  0.4825  0.3825  0.5325  0.5075
0.6      0.5275  0.4550  0.5175  0.4475  0.4050
0.7      0.3825  0.4100  0.5050  0.4875  0.4050
0.8      0.4800  0.3750  0.4250  0.4425  0.4500
0.9      0.4425  0.2675  0.4975  0.5100  0.4875
1.0      0.5200  0.2900  0.3600  0.4850  0.3275
}\dSixteenDriftTabICL
\pgfplotstablecreatecol[create col/expr={(\thisrow{r1}+\thisrow{r2}+\thisrow{r3}+\thisrow{r4}+\thisrow{r5})/5}]{mean}{\dSixteenDriftTabICL}

\pgfplotstableread[col sep=space]{
epsilon  r1      r2      r3      r4      r5
0.0      0.1400  0.1475  0.1575  0.1500  0.1000
0.1      0.1900  0.1950  0.2200  0.1875  0.1325
0.2      0.2750  0.2850  0.2825  0.2725  0.2050
0.3      0.3500  0.3450  0.3350  0.3575  0.2925
0.4      0.4125  0.4325  0.4100  0.4350  0.3900
0.5      0.5300  0.5200  0.4575  0.5100  0.4625
0.6      0.5925  0.6000  0.5325  0.5925  0.5450
0.7      0.6700  0.6675  0.6200  0.6675  0.6500
0.8      0.7500  0.7250  0.7475  0.7375  0.7450
0.9      0.8150  0.8225  0.8100  0.8000  0.8475
1.0      0.8575  0.8525  0.8425  0.8500  0.9025
}\dSixteenDriftTabICR
\pgfplotstablecreatecol[create col/expr={(\thisrow{r1}+\thisrow{r2}+\thisrow{r3}+\thisrow{r4}+\thisrow{r5})/5}]{mean}{\dSixteenDriftTabICR}

\pgfplotstableread[col sep=space]{
epsilon  r1      r2      r3      r4      r5
0.0      0.3625  0.4550  0.4825  0.3750  0.3100
0.1      0.4100  0.4650  0.3775  0.5400  0.4075
0.2      0.4775  0.4650  0.4000  0.4600  0.4525
0.3      0.4875  0.3375  0.4150  0.3950  0.5125
0.4      0.4475  0.4500  0.3875  0.4600  0.3975
0.5      0.5150  0.4475  0.4300  0.4575  0.4900
0.6      0.4975  0.3375  0.5175  0.3550  0.4375
0.7      0.3875  0.4650  0.4875  0.4350  0.4050
0.8      0.4375  0.2875  0.3725  0.3550  0.3975
0.9      0.4150  0.3200  0.4700  0.4925  0.4100
1.0      0.4650  0.2750  0.3725  0.3675  0.3850
}\dSixteenDriftKNNICL
\pgfplotstablecreatecol[create col/expr={(\thisrow{r1}+\thisrow{r2}+\thisrow{r3}+\thisrow{r4}+\thisrow{r5})/5}]{mean}{\dSixteenDriftKNNICL}

\pgfplotstableread[col sep=space]{
epsilon  r1      r2      r3      r4      r5
0.0      0.1525  0.1600  0.1800  0.1400  0.1100
0.1      0.1925  0.2000  0.2000  0.1850  0.1425
0.2      0.2775  0.2850  0.2700  0.2800  0.2225
0.3      0.3575  0.3800  0.3350  0.3750  0.3175
0.4      0.4175  0.4600  0.4075  0.4200  0.4325
0.5      0.5200  0.4900  0.4550  0.5000  0.4875
0.6      0.5925  0.5800  0.5525  0.5975  0.5825
0.7      0.6750  0.6525  0.6575  0.6550  0.6825
0.8      0.7725  0.7300  0.7625  0.7525  0.7775
0.9      0.8300  0.8150  0.8175  0.8075  0.8575
1.0      0.8675  0.8500  0.8425  0.8475  0.9025
}\dSixteenDriftKNNICR
\pgfplotstablecreatecol[create col/expr={(\thisrow{r1}+\thisrow{r2}+\thisrow{r3}+\thisrow{r4}+\thisrow{r5})/5}]{mean}{\dSixteenDriftKNNICR}

\xdef\dEightDriftTabICLFlatSum{0}
\foreach \rownum in {0,...,10}{%
    \pgfplotstablegetelem{\rownum}{mean}\of\dEightDriftTabICL
    \edef\tmpval{\pgfplotsretval}%
    \pgfmathparse{\dEightDriftTabICLFlatSum + \tmpval}%
    \xdef\dEightDriftTabICLFlatSum{\pgfmathresult}%
}
\pgfmathsetmacro\dEightDriftTabICLFlat{\dEightDriftTabICLFlatSum / 11}

\xdef\dEightDriftKNNICLFlatSum{0}
\foreach \rownum in {0,...,10}{%
    \pgfplotstablegetelem{\rownum}{mean}\of\dEightDriftKNNICL
    \edef\tmpval{\pgfplotsretval}%
    \pgfmathparse{\dEightDriftKNNICLFlatSum + \tmpval}%
    \xdef\dEightDriftKNNICLFlatSum{\pgfmathresult}%
}
\pgfmathsetmacro\dEightDriftKNNICLFlat{\dEightDriftKNNICLFlatSum / 11}

\xdef\dSixteenDriftTabICLFlatSum{0}
\foreach \rownum in {0,...,10}{%
    \pgfplotstablegetelem{\rownum}{mean}\of\dSixteenDriftTabICL
    \edef\tmpval{\pgfplotsretval}%
    \pgfmathparse{\dSixteenDriftTabICLFlatSum + \tmpval}%
    \xdef\dSixteenDriftTabICLFlatSum{\pgfmathresult}%
}
\pgfmathsetmacro\dSixteenDriftTabICLFlat{\dSixteenDriftTabICLFlatSum / 11}

\xdef\dSixteenDriftKNNICLFlatSum{0}
\foreach \rownum in {0,...,10}{%
    \pgfplotstablegetelem{\rownum}{mean}\of\dSixteenDriftKNNICL
    \edef\tmpval{\pgfplotsretval}%
    \pgfmathparse{\dSixteenDriftKNNICLFlatSum + \tmpval}%
    \xdef\dSixteenDriftKNNICLFlatSum{\pgfmathresult}%
}
\pgfmathsetmacro\dSixteenDriftKNNICLFlat{\dSixteenDriftKNNICLFlatSum / 11}

\addstd{\dEightTabICL}\addstd{\dEightTabICR}\addstd{\dEightKNNICL}\addstd{\dEightKNNICR}
\addstd{\dSixteenTabICL}\addstd{\dSixteenTabICR}\addstd{\dSixteenKNNICL}\addstd{\dSixteenKNNICR}
\addstd{\dEightDriftTabICL}\addstd{\dEightDriftTabICR}\addstd{\dEightDriftKNNICL}\addstd{\dEightDriftKNNICR}
\addstd{\dSixteenDriftTabICL}\addstd{\dSixteenDriftTabICR}\addstd{\dSixteenDriftKNNICL}\addstd{\dSixteenDriftKNNICR}

\subsection{Synthetic experiments: linear}
\label{app:linear}

\paragraph{Data-generating process.}
Features are drawn from $X\sim\mathrm{Uniform}\{-500,\ldots,500\}^d$ (integer grid). The target class-probability function is $\eta_Q(x)=\indicator{\vw^\top x>0}$ with $\vw\sim\mc{N}(0,I_d)$ fixed per trial, so $R^*_Q=0$. Posterior drift is induced by a region $B=\{x:x_1\le\tau\}$, with $\tau$ chosen so that $Q_X(B)\approx\varepsilon$; inside $B$ we set $\eta_P=1-\eta_Q$ (labels flipped), and $\eta_P=\eta_Q$ outside $B$. The parameter $\varepsilon\in[0,1]$ is therefore the fraction of $Q_X$-mass on which source and target labels disagree. The ICR corpus has $m=20{,}000$ i.i.d.\ samples from $P$; the test set has $n_{\mathrm{test}}=400$ samples from $Q$. ICL samples $n$ i.i.d.\ demonstrations from $Q$, shared across all test queries; ICR retrieves the $n$ nearest corpus neighbors of each query $X_q$ by Euclidean distance.

\paragraph{Models.}
\textbf{TabPFN}~\citep{hollmann2025accurate} is a pre-trained tabular in-context learner that takes labeled demonstrations as context. \textbf{$k$}-NN uses cross-validation on the demonstrations to pick $k$ in the ICL setting; in the ICR setting it takes a full majority vote over the retrieved neighbors (no inner $k$). All synthetic results are averaged over 5 independent trials with different draws of $\vw$ and the data.

We conduct two experiments.
Experiment~1 fixes $P=Q$ and varies the number of demonstrations $n$ (for $d=8,16$), and Experiment~2 fixes $n=4$ and varies the posterior drift $\varepsilon$. For each we show the summary figure and report the complete per-cell results (mean$\pm$std over 5 trials, test error on the target $Q$; lower is better); the mean of each cell is the value plotted in the corresponding figure, and the std columns quantify the variation across the 5 trials.

\subsubsection{Experiment 1: no domain shift ($P=Q$), varying $n$ and $d$}

\Cref{fig:icl_icr_tabpfn_app} shows that with no distributional shift, ICR outperforms ICL at every $n$, especially in the few-shot regime. The gap narrows as $n$ grows, which aligns with our finite-sample theory. TabPFN and $k$-NN track each other closely, supporting $k$-NN as a good proxy for the in-context learner.

\begin{figure}[htbp]
\centering
\vspace{-7pt}
\begin{tikzpicture}
\begin{groupplot}[
    group style={
        group size=2 by 1,
        horizontal sep=2.2cm,
        group name=myplots,
    },
    width=0.4\textwidth,
    height=0.32\textwidth,
    xmode=log,
    log basis x=2,
    xtick={4,16,32,128,256},
    xticklabels={4,16,32,128,256},
    ymin=0, ymax=0.6,
    ytick={0,0.1,0.2,0.3,0.4,0.5,0.6},
    xlabel={$n$ (demonstrations)},
    ylabel={Test error on $Q$},
    grid=major,
    grid style={gray!25},
    tick label style={font=\small},
    label style={font=\small},
]

\nextgroupplot[title={$d=8$},
    legend to name=sharedlegendLinOne,
    legend columns=4,
    legend style={font=\small, column sep=0.8em, legend cell align=left},
]

\addplot[blue, solid, thick, mark=*, mark size=2pt]
    table[x=n, y=mean] {\dEightTabICL};
\addlegendentry{TabPFN-ICL}

\addplot[orange!90!black, solid, thick, mark=*, mark size=2pt]
    table[x=n, y=mean] {\dEightTabICR};
\addlegendentry{TabPFN-ICR}

\addplot[blue, dashed, thick, mark=square, mark size=2pt]
    table[x=n, y=mean] {\dEightKNNICL};
\addlegendentry{$k$-NN-ICL}

\addplot[orange!90!black, dashed, thick, mark=square, mark size=2pt]
    table[x=n, y=mean] {\dEightKNNICR};
\addlegendentry{$k$-NN-ICR}

\nextgroupplot[title={$d=16$}, ylabel={}]

\addplot[blue, solid, thick, mark=*, mark size=2pt]
    table[x=n, y=mean] {\dSixteenTabICL};

\addplot[orange!90!black, solid, thick, mark=*, mark size=2pt]
    table[x=n, y=mean] {\dSixteenTabICR};

\addplot[blue, dashed, thick, mark=square, mark size=2pt]
    table[x=n, y=mean] {\dSixteenKNNICL};

\addplot[orange!90!black, dashed, thick, mark=square, mark size=2pt]
    table[x=n, y=mean] {\dSixteenKNNICR};

\end{groupplot}

\node[anchor=north, yshift=-0.8cm]
    at ($(myplots c1r1.south)!0.5!(myplots c2r1.south)$)
    {\pgfplotslegendfromname{sharedlegendLinOne}};

\end{tikzpicture}
\caption{Test error on $Q$ vs.\ number of demonstrations $n$, with $P=Q$
(no domain shift). \emph{Color}: ICL (blue) vs.\ ICR (orange).
\emph{Line style}: TabPFN (solid, \textbullet) vs.\ $k$-NN (dashed, $\square$).
Corpus size $m=20{,}000$, $n_{\text{test}}=400$. Results averaged over 5 independent runs.}
\label{fig:icl_icr_tabpfn_app}
\vspace{-5pt}
\end{figure}

\begin{table}[htbp]
\caption{Experiment 1, $d=8$: test error on $Q$, we report mean$\pm$std over 5 trials.}
\label{tab:exp1-d8}
\vskip 0.1in
\centering\small\setlength{\tabcolsep}{4pt}
\begin{tabular}{rcccc}
\toprule
$n$ & TabPFN-ICL & TabPFN-ICR & $k$-NN-ICL & $k$-NN-ICR \\
\midrule
4   & \meanstd{\dEightTabICL}{0} & \meanstd{\dEightTabICR}{0} & \meanstd{\dEightKNNICL}{0} & \meanstd{\dEightKNNICR}{0} \\
16  & \meanstd{\dEightTabICL}{1} & \meanstd{\dEightTabICR}{1} & \meanstd{\dEightKNNICL}{1} & \meanstd{\dEightKNNICR}{1} \\
32  & \meanstd{\dEightTabICL}{2} & \meanstd{\dEightTabICR}{2} & \meanstd{\dEightKNNICL}{2} & \meanstd{\dEightKNNICR}{2} \\
128 & \meanstd{\dEightTabICL}{3} & \meanstd{\dEightTabICR}{3} & \meanstd{\dEightKNNICL}{3} & \meanstd{\dEightKNNICR}{3} \\
256 & \meanstd{\dEightTabICL}{4} & \meanstd{\dEightTabICR}{4} & \meanstd{\dEightKNNICL}{4} & \meanstd{\dEightKNNICR}{4} \\
\bottomrule
\end{tabular}
\end{table}

\begin{table}[htbp]
\caption{Experiment 1, $d=16$: test error on $Q$, we report mean$\pm$std over 5 trials.}
\label{tab:exp1-d16}
\centering\small\setlength{\tabcolsep}{4pt}
\vskip 0.1in
\begin{tabular}{rcccc}
\toprule
$n$ & TabPFN-ICL & TabPFN-ICR & $k$-NN-ICL & $k$-NN-ICR \\
\midrule
4   & \meanstd{\dSixteenTabICL}{0} & \meanstd{\dSixteenTabICR}{0} & \meanstd{\dSixteenKNNICL}{0} & \meanstd{\dSixteenKNNICR}{0} \\
16  & \meanstd{\dSixteenTabICL}{1} & \meanstd{\dSixteenTabICR}{1} & \meanstd{\dSixteenKNNICL}{1} & \meanstd{\dSixteenKNNICR}{1} \\
32  & \meanstd{\dSixteenTabICL}{2} & \meanstd{\dSixteenTabICR}{2} & \meanstd{\dSixteenKNNICL}{2} & \meanstd{\dSixteenKNNICR}{2} \\
128 & \meanstd{\dSixteenTabICL}{3} & \meanstd{\dSixteenTabICR}{3} & \meanstd{\dSixteenKNNICL}{3} & \meanstd{\dSixteenKNNICR}{3} \\
256 & \meanstd{\dSixteenTabICL}{4} & \meanstd{\dSixteenTabICR}{4} & \meanstd{\dSixteenKNNICL}{4} & \meanstd{\dSixteenKNNICR}{4} \\
\bottomrule
\end{tabular}
\end{table}

\subsubsection{Experiment 2: posterior drift ($\eta_P \neq \eta_Q$), $n=4$, varying $\varepsilon$}

\Cref{fig:icl_icr_drift_n4_app} shows that, fixing $n=4$ and sweeping $\varepsilon\in[0,1]$, ICR error increases monotonically from $\approx 8\%$ at $\varepsilon=0$ to $\approx 93\%$ at $\varepsilon=1$, while ICL error does not depend on $\varepsilon$. The crossover $\varepsilon^*\approx 0.3$ for $d=8$ marks the bias--variance tipping point predicted by the theory: below $\varepsilon^*$ ICR's variance reduction outweighs the bias from posterior drift; above $\varepsilon^*$, the irreducible error dominates.

\begin{figure}[htbp]
\vspace{-7pt}
\centering
\begin{tikzpicture}
\begin{groupplot}[
    group style={
        group size=2 by 1,
        horizontal sep=2.2cm,
        group name=myplots,
    },
    width=0.4\textwidth,
    height=0.32\textwidth,
    xmin=-0.02, xmax=1.05,
    xtick={0,0.2,0.4,0.6,0.8,1.0},
    ymin=0, ymax=1.05,
    ytick={0,0.2,0.4,0.6,0.8,1.0},
    xlabel={Posterior drift parameter $\varepsilon$},
    ylabel={Test error on $Q$},
    grid=major,
    grid style={gray!25},
    tick label style={font=\small},
    label style={font=\small},
]

\nextgroupplot[title={$d=8$, $n=4$},
    legend to name=sharedlegendLinTwo,
    legend columns=4,
    legend style={font=\small, column sep=0.8em, legend cell align=left},
]

\addplot[blue, solid, thick, mark=none]
    coordinates {(-0.02,\dEightDriftTabICLFlat)(1.05,\dEightDriftTabICLFlat)};
\addlegendentry{TabPFN-ICL}

\addplot[orange!90!black, solid, thick, mark=*, mark size=2pt]
    table[x=epsilon, y=mean] {\dEightDriftTabICR};
\addlegendentry{TabPFN-ICR}

\addplot[blue, dashed, thick, mark=none]
    coordinates {(-0.02,\dEightDriftKNNICLFlat)(1.05,\dEightDriftKNNICLFlat)};
\addlegendentry{$k$-NN-ICL}

\addplot[orange!90!black, dashed, thick, mark=square, mark size=2pt]
    table[x=epsilon, y=mean] {\dEightDriftKNNICR};
\addlegendentry{$k$-NN-ICR}

\nextgroupplot[title={$d=16$, $n=4$}, ylabel={}]

\addplot[blue, solid, thick, mark=none]
    coordinates {(-0.02,\dSixteenDriftTabICLFlat)(1.05,\dSixteenDriftTabICLFlat)};

\addplot[orange!90!black, solid, thick, mark=*, mark size=2pt]
    table[x=epsilon, y=mean] {\dSixteenDriftTabICR};

\addplot[blue, dashed, thick, mark=none]
    coordinates {(-0.02,\dSixteenDriftKNNICLFlat)(1.05,\dSixteenDriftKNNICLFlat)};

\addplot[orange!90!black, dashed, thick, mark=square, mark size=2pt]
    table[x=epsilon, y=mean] {\dSixteenDriftKNNICR};

\end{groupplot}

\node[anchor=north, yshift=-0.8cm]
    at ($(myplots c1r1.south)!0.5!(myplots c2r1.south)$)
    {\pgfplotslegendfromname{sharedlegendLinTwo}};

\end{tikzpicture}
\caption{Test error on $Q$ vs.\ posterior drift parameter $\varepsilon$, with $n=4$ fixed. \emph{Color}: ICL (blue) vs.\ ICR (orange).
\emph{Line style}: TabPFN (solid) vs.\ $k$-NN (dashed).
ICL baselines (horizontal) are the means over 5 runs $\times$ 11 $\varepsilon$
values (since $\varepsilon$ only affects the ICR corpus; not affecting ICL). ICR mean error grows monotonically with $\varepsilon$. Results averaged over
5 independent runs.}
\label{fig:icl_icr_drift_n4_app}
\vspace{-14pt}
\end{figure}

\begin{table}[htbp]
\caption{Experiment 2, $d=8$, $n=4$: test error on $Q$ vs.\ posterior drift $\varepsilon$, we report mean$\pm$std over 5 trials.}
\label{tab:exp2-d8}
\vskip 0.1in
\centering\small\setlength{\tabcolsep}{4pt}
\begin{tabular}{ccccc}
\toprule
$\varepsilon$ & TabPFN-ICL & TabPFN-ICR & $k$-NN-ICL & $k$-NN-ICR \\
\midrule
0.0 & \meanstd{\dEightDriftTabICL}{0}  & \meanstd{\dEightDriftTabICR}{0}  & \meanstd{\dEightDriftKNNICL}{0}  & \meanstd{\dEightDriftKNNICR}{0}  \\
0.1 & \meanstd{\dEightDriftTabICL}{1}  & \meanstd{\dEightDriftTabICR}{1}  & \meanstd{\dEightDriftKNNICL}{1}  & \meanstd{\dEightDriftKNNICR}{1}  \\
0.2 & \meanstd{\dEightDriftTabICL}{2}  & \meanstd{\dEightDriftTabICR}{2}  & \meanstd{\dEightDriftKNNICL}{2}  & \meanstd{\dEightDriftKNNICR}{2}  \\
0.3 & \meanstd{\dEightDriftTabICL}{3}  & \meanstd{\dEightDriftTabICR}{3}  & \meanstd{\dEightDriftKNNICL}{3}  & \meanstd{\dEightDriftKNNICR}{3}  \\
0.4 & \meanstd{\dEightDriftTabICL}{4}  & \meanstd{\dEightDriftTabICR}{4}  & \meanstd{\dEightDriftKNNICL}{4}  & \meanstd{\dEightDriftKNNICR}{4}  \\
0.5 & \meanstd{\dEightDriftTabICL}{5}  & \meanstd{\dEightDriftTabICR}{5}  & \meanstd{\dEightDriftKNNICL}{5}  & \meanstd{\dEightDriftKNNICR}{5}  \\
0.6 & \meanstd{\dEightDriftTabICL}{6}  & \meanstd{\dEightDriftTabICR}{6}  & \meanstd{\dEightDriftKNNICL}{6}  & \meanstd{\dEightDriftKNNICR}{6}  \\
0.7 & \meanstd{\dEightDriftTabICL}{7}  & \meanstd{\dEightDriftTabICR}{7}  & \meanstd{\dEightDriftKNNICL}{7}  & \meanstd{\dEightDriftKNNICR}{7}  \\
0.8 & \meanstd{\dEightDriftTabICL}{8}  & \meanstd{\dEightDriftTabICR}{8}  & \meanstd{\dEightDriftKNNICL}{8}  & \meanstd{\dEightDriftKNNICR}{8}  \\
0.9 & \meanstd{\dEightDriftTabICL}{9}  & \meanstd{\dEightDriftTabICR}{9}  & \meanstd{\dEightDriftKNNICL}{9}  & \meanstd{\dEightDriftKNNICR}{9}  \\
1.0 & \meanstd{\dEightDriftTabICL}{10} & \meanstd{\dEightDriftTabICR}{10} & \meanstd{\dEightDriftKNNICL}{10} & \meanstd{\dEightDriftKNNICR}{10} \\
\bottomrule
\end{tabular}
\end{table}

\begin{table}[htbp]
\caption{Experiment 2, $d=16$, $n=4$: test error on $Q$ vs.\ posterior drift $\varepsilon$, we report mean$\pm$std over 5 trials.}
\label{tab:exp2-d16}
\vskip 0.1in
\centering\small\setlength{\tabcolsep}{4pt}
\begin{tabular}{ccccc}
\toprule
$\varepsilon$ & TabPFN-ICL & TabPFN-ICR & $k$-NN-ICL & $k$-NN-ICR \\
\midrule
0.0 & \meanstd{\dSixteenDriftTabICL}{0}  & \meanstd{\dSixteenDriftTabICR}{0}  & \meanstd{\dSixteenDriftKNNICL}{0}  & \meanstd{\dSixteenDriftKNNICR}{0}  \\
0.1 & \meanstd{\dSixteenDriftTabICL}{1}  & \meanstd{\dSixteenDriftTabICR}{1}  & \meanstd{\dSixteenDriftKNNICL}{1}  & \meanstd{\dSixteenDriftKNNICR}{1}  \\
0.2 & \meanstd{\dSixteenDriftTabICL}{2}  & \meanstd{\dSixteenDriftTabICR}{2}  & \meanstd{\dSixteenDriftKNNICL}{2}  & \meanstd{\dSixteenDriftKNNICR}{2}  \\
0.3 & \meanstd{\dSixteenDriftTabICL}{3}  & \meanstd{\dSixteenDriftTabICR}{3}  & \meanstd{\dSixteenDriftKNNICL}{3}  & \meanstd{\dSixteenDriftKNNICR}{3}  \\
0.4 & \meanstd{\dSixteenDriftTabICL}{4}  & \meanstd{\dSixteenDriftTabICR}{4}  & \meanstd{\dSixteenDriftKNNICL}{4}  & \meanstd{\dSixteenDriftKNNICR}{4}  \\
0.5 & \meanstd{\dSixteenDriftTabICL}{5}  & \meanstd{\dSixteenDriftTabICR}{5}  & \meanstd{\dSixteenDriftKNNICL}{5}  & \meanstd{\dSixteenDriftKNNICR}{5}  \\
0.6 & \meanstd{\dSixteenDriftTabICL}{6}  & \meanstd{\dSixteenDriftTabICR}{6}  & \meanstd{\dSixteenDriftKNNICL}{6}  & \meanstd{\dSixteenDriftKNNICR}{6}  \\
0.7 & \meanstd{\dSixteenDriftTabICL}{7}  & \meanstd{\dSixteenDriftTabICR}{7}  & \meanstd{\dSixteenDriftKNNICL}{7}  & \meanstd{\dSixteenDriftKNNICR}{7}  \\
0.8 & \meanstd{\dSixteenDriftTabICL}{8}  & \meanstd{\dSixteenDriftTabICR}{8}  & \meanstd{\dSixteenDriftKNNICL}{8}  & \meanstd{\dSixteenDriftKNNICR}{8}  \\
0.9 & \meanstd{\dSixteenDriftTabICL}{9}  & \meanstd{\dSixteenDriftTabICR}{9}  & \meanstd{\dSixteenDriftKNNICL}{9}  & \meanstd{\dSixteenDriftKNNICR}{9}  \\
1.0 & \meanstd{\dSixteenDriftTabICL}{10} & \meanstd{\dSixteenDriftTabICR}{10} & \meanstd{\dSixteenDriftKNNICL}{10} & \meanstd{\dSixteenDriftKNNICR}{10} \\
\bottomrule
\end{tabular}
\end{table}

\newpage

\subsection{Synthetic experiments: Weierstrass}
\label{app:weierstrass}

\paragraph{Overview.}
All experiments in this subsection replace the linear target with a Weierstrass-type class-probability function $\eta_\alpha$ of controlled smoothness $\alpha$ on $\mc{X}=[-1,1]$, which holds the margin exponent, the Bayes risk, and the Bayes classifier fixed while $\alpha$ varies; the full construction is given below. Experiments~1 and~2 sweep the number of demonstrations $n$ and the posterior drift $\varepsilon$ at fixed smoothness, while Experiments~3 and~4 sweep the smoothness $\alpha$ and the retrieval corpus size $m_{\mathrm{corpus}}$.

\paragraph{Data-generating process.}
We take $\mc{X} = [-1, 1]$ with $X \sim \mathrm{Uniform}[-1, 1]$, so $P_X = Q_X$ and, in one dimension, Chaudhuri--Dasgupta's measure-based smoothness exponent coincides with the pointwise H\"older exponent. The class-probability function is
\begin{align*}
    \eta_\alpha(x)
    &= \tfrac{1}{2} + A \sin(2\pi f x)\bigl(1 + b\, w_\alpha(x)\bigr), \qquad x \in [-1, 1],
\end{align*}
where the smoothness parameter $\alpha$ is encoded in the normalized Weierstrass-style odd sine series
\begin{align*}
    w_\alpha(x)
    &= \frac{\sum_{j=0}^{J} 2^{-j\alpha}\sin(2\pi\,2^{j} x)}{\displaystyle\max_{z \in [-1,1]}\bigl|\sum_{j=0}^{J} 2^{-j\alpha}\sin(2\pi\,2^{j} z)\bigr|}.
\end{align*}
The coefficients $2^{-j\alpha}$ on frequencies $2^j$ give $w_\alpha$ pointwise H\"older exponent $\alpha$ as $J \to \infty$; we truncate at $J = 12$ and normalize by the supremum so that $|w_\alpha(x)| \le 1$ for every $\alpha \in (0, 1]$.

We fix $A = 0.24$, $b = 0.85$, and $f = 2$ across $\alpha$; here $f$ denotes the frequency.

This construction of $\eta$ allows us to control the smoothness easily, see the figure below for examples of $\eta_\alpha$ for different $\alpha$. The Bayes risk is $R_Q^* = \frac{1}{2} - \frac{2A}{\pi} \approx 0.3472$ and the Bayes classifier is $f_Q^*(x) = \indicator{\eta_\alpha(x)\ge\half}$, which is independent of $\alpha$.

\begin{figure}[htbp]
    \centering
    \includegraphics[width=0.8\linewidth]{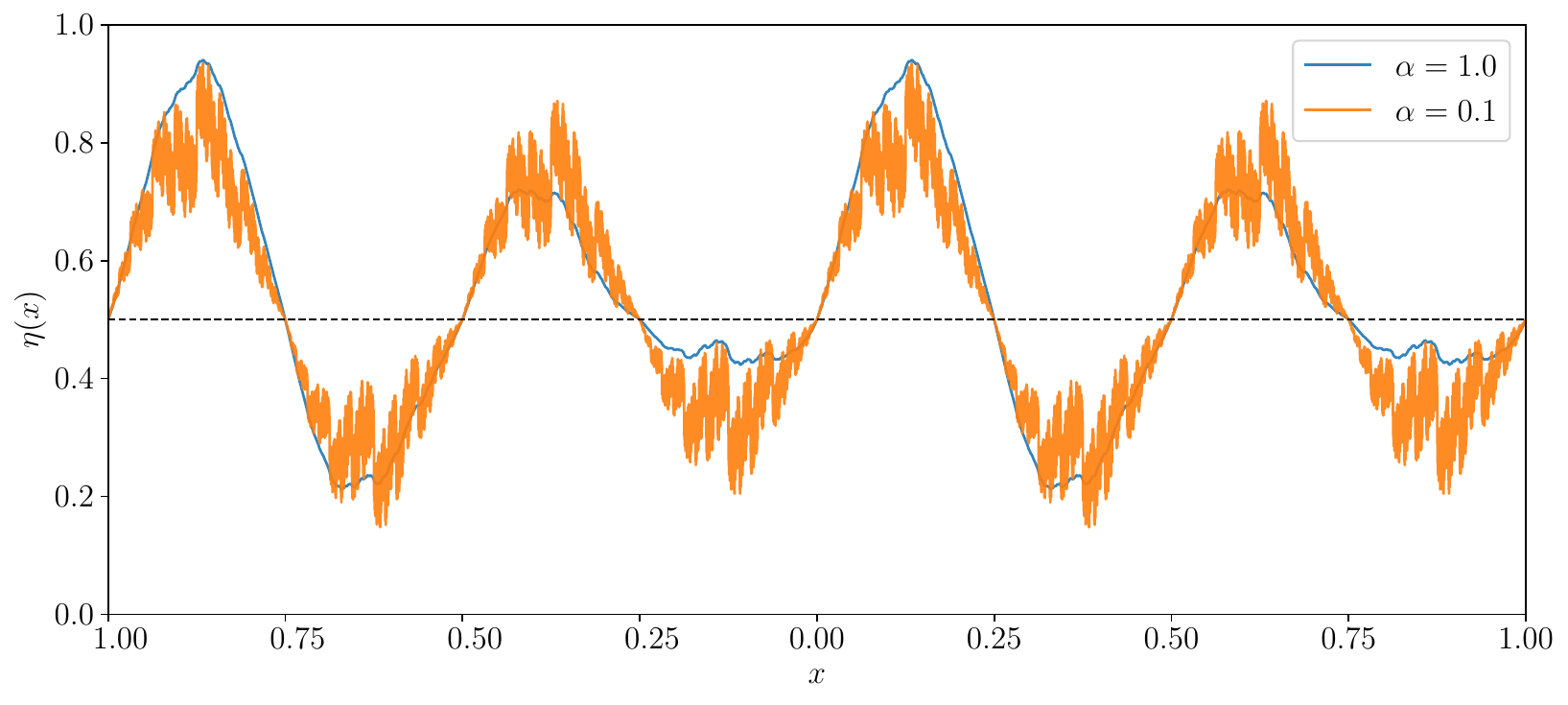}
    \caption{Examples of the Weierstrass class-probability function $\eta_\alpha(x)$ for different values of the smoothness parameter $\alpha$.}
\end{figure}

\paragraph{Models.}
TabPFN takes the labeled demonstrations in context; $k$-NN uses Euclidean distance with the theory-inspired neighborhood size
\begin{align*}
    k_\alpha &= \mathrm{round\_odd}\!\left(c_k \cdot n^{2\alpha/(2\alpha + 1)}\right), \qquad c_k = 0.30,
\end{align*}
which balances the bias/variance terms of \Cref{thm:sanjoy_knn}.
The retrieval corpus has $m_{\mathrm{corpus}} = 10^{5}$ i.i.d.\ samples from $P = Q$. The population excess risk over $Q$ is approximated by evaluating on a deterministic grid of $n_{\mathrm{test}} = 400$ equally spaced points on $[-1, 1]$, averaged over $5$ seeds.

\subsubsection{Experiment 1: no domain shift ($P=Q$), large $m$, varying $n$}
This experiment sweeps the number of demonstrations $n \in \{4,\dots,1024\}$ with no domain shift ($P=Q$), at two smoothness levels $\alpha = 1.0$ (smooth) and $\alpha = 0.5$ (rougher); all remaining parameters follow the common setup above ($R_Q^* = 0.3472$, $m_{\mathrm{corpus}} = 10^{5}$, $n_{\mathrm{test}} = 400$, oracle $k_\alpha$, 5 seeds 42--46). \Cref{fig:icl_icr_weier_n} shows that ICR outperforms ICL at every $n$ and both smoothness levels, and \Cref{tab:weier-n-alpha1,tab:weier-n-alpha05} report the per-cell values.

\providecommand{\addstd}[1]{%
    \pgfplotstablecreatecol[create col/expr={sqrt(((\thisrow{r1}-\thisrow{mean})^2+(\thisrow{r2}-\thisrow{mean})^2+(\thisrow{r3}-\thisrow{mean})^2+(\thisrow{r4}-\thisrow{mean})^2+(\thisrow{r5}-\thisrow{mean})^2)/4)}]{std}{#1}%
}
\providecommand{\meanstd}[2]{%
    \pgfplotstablegetelem{#2}{mean}\of#1\edef\msMean{\pgfplotsretval}%
    \pgfplotstablegetelem{#2}{std}\of#1\edef\msStd{\pgfplotsretval}%
    \pgfmathprintnumber[fixed,precision=4,fixed zerofill]{\msMean}\,$\pm$\,\pgfmathprintnumber[fixed,precision=4,fixed zerofill]{\msStd}%
}

\pgfplotstableread[col sep=space]{
n     r1      r2      r3      r4      r5
4     0.1907  0.1227  0.1813  0.0943  0.1468
16    0.1440  0.1079  0.1832  0.1469  0.0605
32    0.1173  0.0821  0.1066  0.0789  0.0746
64    0.0316  0.0857  0.0544  0.0443  0.0733
128   0.0148  0.0190  0.0259  0.0405  0.0436
256   0.0219  0.0144  0.0244  0.0054  0.0360
512   0.0137  0.0178  0.0304  0.0074  0.0196
1024  0.0131  0.0123  0.0071  0.0137  0.0184
}\WeierOneKNNICL
\pgfplotstablecreatecol[create col/expr={(\thisrow{r1}+\thisrow{r2}+\thisrow{r3}+\thisrow{r4}+\thisrow{r5})/5}]{mean}{\WeierOneKNNICL}
\addstd{\WeierOneKNNICL}

\pgfplotstableread[col sep=space]{
n     r1      r2      r3      r4      r5
4     0.0533  0.0589  0.0558  0.0566  0.0661
16    0.0205  0.0225  0.0254  0.0231  0.0240
32    0.0135  0.0143  0.0141  0.0154  0.0128
64    0.0066  0.0085  0.0086  0.0080  0.0105
128   0.0055  0.0035  0.0048  0.0043  0.0061
256   0.0030  0.0023  0.0022  0.0014  0.0022
512   0.0009  0.0008  0.0012  0.0011  0.0012
1024  0.0005  0.0006  0.0004  0.0003  0.0002
}\WeierOneKNNICR
\pgfplotstablecreatecol[create col/expr={(\thisrow{r1}+\thisrow{r2}+\thisrow{r3}+\thisrow{r4}+\thisrow{r5})/5}]{mean}{\WeierOneKNNICR}
\addstd{\WeierOneKNNICR}

\pgfplotstableread[col sep=space]{
n     r1      r2      r3      r4      r5
4     0.1907  0.1229  0.1866  0.1146  0.1410
16    0.1403  0.1416  0.1692  0.1428  0.1601
32    0.1357  0.1024  0.1624  0.1225  0.1460
64    0.1031  0.1124  0.1377  0.0444  0.1184
128   0.0487  0.0640  0.0722  0.0516  0.0599
256   0.0433  0.0162  0.0440  0.0145  0.0619
512   0.0150  0.0164  0.0421  0.0045  0.0173
1024  0.0054  0.0063  0.0099  0.0050  0.0147
}\WeierOneTabICL
\pgfplotstablecreatecol[create col/expr={(\thisrow{r1}+\thisrow{r2}+\thisrow{r3}+\thisrow{r4}+\thisrow{r5})/5}]{mean}{\WeierOneTabICL}
\addstd{\WeierOneTabICL}

\pgfplotstableread[col sep=space]{
n     r1      r2      r3      r4      r5
4     0.0702  0.0734  0.0670  0.0581  0.0635
16    0.0391  0.0418  0.0418  0.0315  0.0450
32    0.0193  0.0257  0.0216  0.0263  0.0255
64    0.0123  0.0170  0.0174  0.0218  0.0214
128   0.0123  0.0141  0.0084  0.0130  0.0100
256   0.0059  0.0079  0.0066  0.0072  0.0080
512   0.0037  0.0031  0.0026  0.0023  0.0036
1024  0.0010  0.0017  0.0016  0.0013  0.0011
}\WeierOneTabICR
\pgfplotstablecreatecol[create col/expr={(\thisrow{r1}+\thisrow{r2}+\thisrow{r3}+\thisrow{r4}+\thisrow{r5})/5}]{mean}{\WeierOneTabICR}
\addstd{\WeierOneTabICR}

\pgfplotstableread[col sep=space]{
n     r1      r2      r3      r4      r5
4     0.1889  0.1183  0.1742  0.1578  0.1460
16    0.1224  0.0201  0.1809  0.1421  0.1409
32    0.0974  0.0757  0.1204  0.1010  0.0856
64    0.0266  0.0687  0.1276  0.0719  0.0619
128   0.0558  0.0284  0.0774  0.0609  0.0517
256   0.0360  0.0316  0.0627  0.0253  0.0582
512   0.0400  0.0357  0.0484  0.0232  0.0400
1024  0.0271  0.0251  0.0317  0.0283  0.0240
}\WeierHalfKNNICL
\pgfplotstablecreatecol[create col/expr={(\thisrow{r1}+\thisrow{r2}+\thisrow{r3}+\thisrow{r4}+\thisrow{r5})/5}]{mean}{\WeierHalfKNNICL}
\addstd{\WeierHalfKNNICL}

\pgfplotstableread[col sep=space]{
n     r1      r2      r3      r4      r5
4     0.0570  0.0708  0.0649  0.0780  0.0662
16    0.0229  0.0246  0.0222  0.0224  0.0257
32    0.0112  0.0115  0.0148  0.0162  0.0151
64    0.0079  0.0081  0.0074  0.0068  0.0075
128   0.0046  0.0028  0.0034  0.0027  0.0038
256   0.0016  0.0020  0.0016  0.0015  0.0015
512   0.0006  0.0004  0.0010  0.0012  0.0009
1024  0.0003  0.0003  0.0003  0.0003  0.0002
}\WeierHalfKNNICR
\pgfplotstablecreatecol[create col/expr={(\thisrow{r1}+\thisrow{r2}+\thisrow{r3}+\thisrow{r4}+\thisrow{r5})/5}]{mean}{\WeierHalfKNNICR}
\addstd{\WeierHalfKNNICR}

\pgfplotstableread[col sep=space]{
n     r1      r2      r3      r4      r5
4     0.1889  0.1236  0.1828  0.1639  0.1413
16    0.1348  0.1414  0.1688  0.1671  0.1556
32    0.1314  0.0973  0.1644  0.1241  0.1158
64    0.0921  0.1134  0.1430  0.0562  0.1299
128   0.0498  0.0298  0.0842  0.0942  0.0863
256   0.0305  0.0138  0.0545  0.0162  0.0683
512   0.0236  0.0163  0.0399  0.0040  0.0144
1024  0.0042  0.0083  0.0105  0.0037  0.0093
}\WeierHalfTabICL
\pgfplotstablecreatecol[create col/expr={(\thisrow{r1}+\thisrow{r2}+\thisrow{r3}+\thisrow{r4}+\thisrow{r5})/5}]{mean}{\WeierHalfTabICL}
\addstd{\WeierHalfTabICL}

\pgfplotstableread[col sep=space]{
n     r1      r2      r3      r4      r5
4     0.0774  0.0828  0.0813  0.0679  0.0750
16    0.0431  0.0451  0.0426  0.0341  0.0515
32    0.0260  0.0245  0.0233  0.0324  0.0277
64    0.0151  0.0141  0.0212  0.0255  0.0200
128   0.0100  0.0142  0.0075  0.0121  0.0127
256   0.0042  0.0066  0.0068  0.0043  0.0058
512   0.0020  0.0012  0.0026  0.0023  0.0021
1024  0.0008  0.0010  0.0009  0.0015  0.0020
}\WeierHalfTabICR
\pgfplotstablecreatecol[create col/expr={(\thisrow{r1}+\thisrow{r2}+\thisrow{r3}+\thisrow{r4}+\thisrow{r5})/5}]{mean}{\WeierHalfTabICR}
\addstd{\WeierHalfTabICR}

\begin{figure}[htbp]
\centering
\vspace{-7pt}
\begin{tikzpicture}
\begin{groupplot}[
    group style={
        group size=2 by 1,
        horizontal sep=2.2cm,
        group name=weiernplots,
    },
    width=0.42\textwidth,
    height=0.34\textwidth,
    xmode=log,
    ymode=log,
    log basis x=2,
    xtick={4,16,32,64,128,256,512,1024},
    xticklabels={4,16,32,64,128,256,512,1024},
    x tick label style={font=\footnotesize, rotate=45, anchor=north east},
    ymin=1e-4, ymax=0.4,
    xlabel={$n$ (demonstrations)},
    ylabel={Target excess risk},
    grid=major,
    grid style={gray!25},
    tick label style={font=\small},
    label style={font=\small},
]

\nextgroupplot[title={$\alpha = 1.0$ (smoother $\eta$)},
    legend to name=sharedlegendWeier,
    legend columns=4,
    legend style={font=\small, column sep=0.8em, legend cell align=left},
]

\addplot[blue, solid, thick, mark=*, mark size=2pt]
    table[x=n, y=mean] {\WeierOneTabICL};
\addlegendentry{TabPFN-ICL}

\addplot[orange!90!black, solid, thick, mark=*, mark size=2pt]
    table[x=n, y=mean] {\WeierOneTabICR};
\addlegendentry{TabPFN-ICR}

\addplot[blue, dashed, thick, mark=square, mark size=2pt]
    table[x=n, y=mean] {\WeierOneKNNICL};
\addlegendentry{$k$-NN-ICL}

\addplot[orange!90!black, dashed, thick, mark=square, mark size=2pt]
    table[x=n, y=mean] {\WeierOneKNNICR};
\addlegendentry{$k$-NN-ICR}

\nextgroupplot[title={$\alpha = 0.5$ (rougher $\eta$)}, ylabel={}]

\addplot[blue, solid, thick, mark=*, mark size=2pt]
    table[x=n, y=mean] {\WeierHalfTabICL};

\addplot[orange!90!black, solid, thick, mark=*, mark size=2pt]
    table[x=n, y=mean] {\WeierHalfTabICR};

\addplot[blue, dashed, thick, mark=square, mark size=2pt]
    table[x=n, y=mean] {\WeierHalfKNNICL};

\addplot[orange!90!black, dashed, thick, mark=square, mark size=2pt]
    table[x=n, y=mean] {\WeierHalfKNNICR};

\end{groupplot}

\node[anchor=north, yshift=-1.5cm]
    at ($(weiernplots c1r1.south)!0.5!(weiernplots c2r1.south)$)
    {\pgfplotslegendfromname{sharedlegendWeier}};

\end{tikzpicture}
\caption{Target excess risk vs.\ number of demonstrations $n$ on the 1-D
Weierstrass family ($P = Q$; $\beta = 1$; oracle $k_\alpha$). \emph{Color}:
ICL (blue) vs.\ ICR (orange). \emph{Line style}: TabPFN (solid,
\textbullet) vs.\ $k$-NN (dashed, $\square$). Both axes are log-scaled.
Corpus size $m = 10^{5}$, $n_{\text{test}} = 400$; each curve is the mean
over 5 seeds (42--46).}
\label{fig:icl_icr_weier_n}
\vspace{-5pt}
\end{figure}

\begin{table}[htbp]
\caption{Weierstrass sample-size sweep, $\alpha = 1.0$: target excess risk, mean$\pm$std over 5 seeds.}
\label{tab:weier-n-alpha1}
\vskip 0.1in
\centering\small\setlength{\tabcolsep}{4pt}
\begin{tabular}{rcccc}
\toprule
$n$ & TabPFN-ICL & TabPFN-ICR & $k$-NN-ICL & $k$-NN-ICR \\
\midrule
4    & \meanstd{\WeierOneTabICL}{0} & \meanstd{\WeierOneTabICR}{0} & \meanstd{\WeierOneKNNICL}{0} & \meanstd{\WeierOneKNNICR}{0} \\
16   & \meanstd{\WeierOneTabICL}{1} & \meanstd{\WeierOneTabICR}{1} & \meanstd{\WeierOneKNNICL}{1} & \meanstd{\WeierOneKNNICR}{1} \\
32   & \meanstd{\WeierOneTabICL}{2} & \meanstd{\WeierOneTabICR}{2} & \meanstd{\WeierOneKNNICL}{2} & \meanstd{\WeierOneKNNICR}{2} \\
64   & \meanstd{\WeierOneTabICL}{3} & \meanstd{\WeierOneTabICR}{3} & \meanstd{\WeierOneKNNICL}{3} & \meanstd{\WeierOneKNNICR}{3} \\
128  & \meanstd{\WeierOneTabICL}{4} & \meanstd{\WeierOneTabICR}{4} & \meanstd{\WeierOneKNNICL}{4} & \meanstd{\WeierOneKNNICR}{4} \\
256  & \meanstd{\WeierOneTabICL}{5} & \meanstd{\WeierOneTabICR}{5} & \meanstd{\WeierOneKNNICL}{5} & \meanstd{\WeierOneKNNICR}{5} \\
512  & \meanstd{\WeierOneTabICL}{6} & \meanstd{\WeierOneTabICR}{6} & \meanstd{\WeierOneKNNICL}{6} & \meanstd{\WeierOneKNNICR}{6} \\
1024 & \meanstd{\WeierOneTabICL}{7} & \meanstd{\WeierOneTabICR}{7} & \meanstd{\WeierOneKNNICL}{7} & \meanstd{\WeierOneKNNICR}{7} \\
\bottomrule
\end{tabular}
\end{table}

\begin{table}[htbp]
\caption{Weierstrass sample-size sweep, $\alpha = 0.5$: target excess risk, mean$\pm$std over 5 seeds.}
\label{tab:weier-n-alpha05}
\vskip 0.1in
\centering\small\setlength{\tabcolsep}{4pt}
\begin{tabular}{rcccc}
\toprule
$n$ & TabPFN-ICL & TabPFN-ICR & $k$-NN-ICL & $k$-NN-ICR \\
\midrule
4    & \meanstd{\WeierHalfTabICL}{0} & \meanstd{\WeierHalfTabICR}{0} & \meanstd{\WeierHalfKNNICL}{0} & \meanstd{\WeierHalfKNNICR}{0} \\
16   & \meanstd{\WeierHalfTabICL}{1} & \meanstd{\WeierHalfTabICR}{1} & \meanstd{\WeierHalfKNNICL}{1} & \meanstd{\WeierHalfKNNICR}{1} \\
32   & \meanstd{\WeierHalfTabICL}{2} & \meanstd{\WeierHalfTabICR}{2} & \meanstd{\WeierHalfKNNICL}{2} & \meanstd{\WeierHalfKNNICR}{2} \\
64   & \meanstd{\WeierHalfTabICL}{3} & \meanstd{\WeierHalfTabICR}{3} & \meanstd{\WeierHalfKNNICL}{3} & \meanstd{\WeierHalfKNNICR}{3} \\
128  & \meanstd{\WeierHalfTabICL}{4} & \meanstd{\WeierHalfTabICR}{4} & \meanstd{\WeierHalfKNNICL}{4} & \meanstd{\WeierHalfKNNICR}{4} \\
256  & \meanstd{\WeierHalfTabICL}{5} & \meanstd{\WeierHalfTabICR}{5} & \meanstd{\WeierHalfKNNICL}{5} & \meanstd{\WeierHalfKNNICR}{5} \\
512  & \meanstd{\WeierHalfTabICL}{6} & \meanstd{\WeierHalfTabICR}{6} & \meanstd{\WeierHalfKNNICL}{6} & \meanstd{\WeierHalfKNNICR}{6} \\
1024 & \meanstd{\WeierHalfTabICL}{7} & \meanstd{\WeierHalfTabICR}{7} & \meanstd{\WeierHalfKNNICL}{7} & \meanstd{\WeierHalfKNNICR}{7} \\
\bottomrule
\end{tabular}
\end{table}

\subsubsection{Experiment 2: posterior drift ($\eta_P \neq \eta_Q$), varying $\varepsilon$}
\label{app:exp-drift-weier}

This experiment fixes the smoothness parameter at $\alpha=1.0$ and sweeps
the posterior drift $\varepsilon\in\{0,0.1,\ldots,1\}$ at two context sizes,
$n=16$ and $n=512$. The drift region $B$ is a union of $N=40$ periodic
subintervals, each covering the leftmost $\varepsilon$-fraction of one
width-$2/N$ interval of $[-1,1]$, so that $Q_X(B)=\varepsilon$ exactly.
We set $\eta_P=1-\eta_Q$ on $B$ and $\eta_P=\eta_Q$ elsewhere. This
construction generally introduces discontinuities in $\eta_P$, so the
source smoothness assumption of Theorem~\ref{thm:icl-icr-rates} does not hold. The experiment
illustrates the effect of posterior drift on ICR and the additive bias
term in Lemma~\ref{lem:fundamental-reduction}, which requires no smoothness assumption. ICL uses
$Q$-labels and is therefore unaffected by $\varepsilon$. \Cref{fig:icr_eps_sweep_weier} shows
the resulting bias--variance crossover, and \Cref{tab:exp2-eps-n16,tab:exp2-eps-n512} report the
per-cell values.

\pgfplotstableread[col sep=space]{
epsilon r1      r2      r3      r4      r5
0.0     0.1440  0.1079  0.1832  0.1469  0.0605
0.1     0.1440  0.1079  0.1832  0.1469  0.0605
0.2     0.1440  0.1079  0.1832  0.1469  0.0605
0.3     0.1440  0.1079  0.1832  0.1469  0.0605
0.4     0.1440  0.1079  0.1832  0.1469  0.0605
0.5     0.1440  0.1079  0.1832  0.1469  0.0605
0.6     0.1440  0.1079  0.1832  0.1469  0.0605
0.7     0.1440  0.1079  0.1832  0.1469  0.0605
0.8     0.1440  0.1079  0.1832  0.1469  0.0605
0.9     0.1440  0.1079  0.1832  0.1469  0.0605
1.0     0.1440  0.1079  0.1832  0.1469  0.0605
}\nSmKNNICL
\pgfplotstablecreatecol[create col/expr={(\thisrow{r1}+\thisrow{r2}+\thisrow{r3}+\thisrow{r4}+\thisrow{r5})/5}]{mean}{\nSmKNNICL}

\pgfplotstableread[col sep=space]{
epsilon r1      r2      r3      r4      r5
0.0     0.0205  0.0225  0.0254  0.0231  0.0240
0.1     0.0472  0.0483  0.0511  0.0510  0.0492
0.2     0.0716  0.0693  0.0741  0.0750  0.0752
0.3     0.0981  0.0953  0.0997  0.0996  0.0995
0.4     0.1239  0.1234  0.1230  0.1248  0.1262
0.5     0.1505  0.1525  0.1475  0.1527  0.1517
0.6     0.1764  0.1784  0.1705  0.1788  0.1791
0.7     0.2042  0.2038  0.1975  0.2047  0.1991
0.8     0.2309  0.2271  0.2246  0.2314  0.2228
0.9     0.2546  0.2547  0.2520  0.2558  0.2510
1.0     0.2828  0.2797  0.2806  0.2805  0.2788
}\nSmKNNICR
\pgfplotstablecreatecol[create col/expr={(\thisrow{r1}+\thisrow{r2}+\thisrow{r3}+\thisrow{r4}+\thisrow{r5})/5}]{mean}{\nSmKNNICR}

\pgfplotstableread[col sep=space]{
epsilon r1      r2      r3      r4      r5
0.0     0.1403  0.1416  0.1692  0.1439  0.1601
0.1     0.1403  0.1416  0.1692  0.1439  0.1601
0.2     0.1403  0.1416  0.1692  0.1439  0.1601
0.3     0.1403  0.1416  0.1692  0.1439  0.1601
0.4     0.1403  0.1416  0.1692  0.1439  0.1601
0.5     0.1403  0.1416  0.1692  0.1439  0.1601
0.6     0.1403  0.1416  0.1692  0.1439  0.1601
0.7     0.1403  0.1416  0.1692  0.1439  0.1601
0.8     0.1403  0.1416  0.1692  0.1439  0.1601
0.9     0.1403  0.1416  0.1692  0.1439  0.1601
1.0     0.1403  0.1416  0.1692  0.1439  0.1601
}\nSmTabICL
\pgfplotstablecreatecol[create col/expr={(\thisrow{r1}+\thisrow{r2}+\thisrow{r3}+\thisrow{r4}+\thisrow{r5})/5}]{mean}{\nSmTabICL}

\pgfplotstableread[col sep=space]{
epsilon r1      r2      r3      r4      r5
0.0     0.0391  0.0416  0.0416  0.0315  0.0450
0.1     0.0622  0.0660  0.0635  0.0591  0.0637
0.2     0.0804  0.0808  0.0875  0.0815  0.0845
0.3     0.1053  0.1040  0.1069  0.1075  0.1023
0.4     0.1325  0.1279  0.1258  0.1331  0.1253
0.5     0.1490  0.1549  0.1431  0.1578  0.1421
0.6     0.1726  0.1788  0.1649  0.1798  0.1656
0.7     0.1957  0.1991  0.1892  0.2029  0.1806
0.8     0.2210  0.2184  0.2111  0.2296  0.2053
0.9     0.2430  0.2416  0.2347  0.2499  0.2327
1.0     0.2654  0.2635  0.2615  0.2733  0.2578
}\nSmTabICR
\pgfplotstablecreatecol[create col/expr={(\thisrow{r1}+\thisrow{r2}+\thisrow{r3}+\thisrow{r4}+\thisrow{r5})/5}]{mean}{\nSmTabICR}

\pgfplotstableread[col sep=space]{
epsilon r1      r2      r3      r4      r5
0.0     0.0137  0.0178  0.0304  0.0074  0.0196
0.1     0.0137  0.0178  0.0304  0.0074  0.0196
0.2     0.0137  0.0178  0.0304  0.0074  0.0196
0.3     0.0137  0.0178  0.0304  0.0074  0.0196
0.4     0.0137  0.0178  0.0304  0.0074  0.0196
0.5     0.0137  0.0178  0.0304  0.0074  0.0196
0.6     0.0137  0.0178  0.0304  0.0074  0.0196
0.7     0.0137  0.0178  0.0304  0.0074  0.0196
0.8     0.0137  0.0178  0.0304  0.0074  0.0196
0.9     0.0137  0.0178  0.0304  0.0074  0.0196
1.0     0.0137  0.0178  0.0304  0.0074  0.0196
}\nLgKNNICL
\pgfplotstablecreatecol[create col/expr={(\thisrow{r1}+\thisrow{r2}+\thisrow{r3}+\thisrow{r4}+\thisrow{r5})/5}]{mean}{\nLgKNNICL}

\pgfplotstableread[col sep=space]{
epsilon r1      r2      r3      r4      r5
0.0     0.0009  0.0008  0.0012  0.0011  0.0012
0.1     0.0268  0.0154  0.0239  0.0194  0.0142
0.2     0.0604  0.0577  0.0592  0.0607  0.0596
0.3     0.0900  0.0884  0.0911  0.0895  0.0886
0.4     0.1182  0.1208  0.1221  0.1208  0.1194
0.5     0.1493  0.1482  0.1504  0.1514  0.1491
0.6     0.1803  0.1830  0.1810  0.1863  0.1788
0.7     0.2140  0.2112  0.2163  0.2129  0.2109
0.8     0.2437  0.2439  0.2464  0.2467  0.2407
0.9     0.2910  0.2876  0.2906  0.2915  0.2878
1.0     0.3039  0.3039  0.3036  0.3037  0.3036
}\nLgKNNICR
\pgfplotstablecreatecol[create col/expr={(\thisrow{r1}+\thisrow{r2}+\thisrow{r3}+\thisrow{r4}+\thisrow{r5})/5}]{mean}{\nLgKNNICR}

\pgfplotstableread[col sep=space]{
epsilon r1      r2      r3      r4      r5
0.0     0.0150  0.0164  0.0421  0.0045  0.0173
0.1     0.0150  0.0164  0.0421  0.0045  0.0173
0.2     0.0150  0.0164  0.0421  0.0045  0.0173
0.3     0.0150  0.0164  0.0421  0.0045  0.0173
0.4     0.0150  0.0164  0.0421  0.0045  0.0173
0.5     0.0150  0.0164  0.0421  0.0045  0.0173
0.6     0.0150  0.0164  0.0421  0.0045  0.0173
0.7     0.0150  0.0164  0.0421  0.0045  0.0173
0.8     0.0150  0.0164  0.0421  0.0045  0.0173
0.9     0.0150  0.0164  0.0421  0.0045  0.0173
1.0     0.0150  0.0164  0.0421  0.0045  0.0173
}\nLgTabICL
\pgfplotstablecreatecol[create col/expr={(\thisrow{r1}+\thisrow{r2}+\thisrow{r3}+\thisrow{r4}+\thisrow{r5})/5}]{mean}{\nLgTabICL}

\pgfplotstableread[col sep=space]{
epsilon r1      r2      r3      r4      r5
0.0     0.0040  0.0039  0.0026  0.0022  0.0036
0.1     0.0324  0.0322  0.0324  0.0314  0.0304
0.2     0.0616  0.0604  0.0598  0.0605  0.0599
0.3     0.0908  0.0919  0.0903  0.0903  0.0890
0.4     0.1203  0.1215  0.1213  0.1188  0.1201
0.5     0.1496  0.1529  0.1516  0.1497  0.1510
0.6     0.1799  0.1824  0.1812  0.1824  0.1816
0.7     0.2104  0.2137  0.2124  0.2128  0.2121
0.8     0.2412  0.2420  0.2432  0.2431  0.2432
0.9     0.2700  0.2734  0.2722  0.2721  0.2717
1.0     0.3008  0.3016  0.3024  0.3022  0.3012
}\nLgTabICR
\pgfplotstablecreatecol[create col/expr={(\thisrow{r1}+\thisrow{r2}+\thisrow{r3}+\thisrow{r4}+\thisrow{r5})/5}]{mean}{\nLgTabICR}

\addstd{\nSmKNNICL}\addstd{\nSmKNNICR}\addstd{\nSmTabICL}\addstd{\nSmTabICR}
\addstd{\nLgKNNICL}\addstd{\nLgKNNICR}\addstd{\nLgTabICL}\addstd{\nLgTabICR}

\begin{figure}[htbp]
\centering
\begin{tikzpicture}
\begin{groupplot}[
    group style={
        group size=2 by 1,
        horizontal sep=2.2cm,
        group name=expTwoEpsPdfPlots,
    },
    width=0.45\textwidth,
    height=0.36\textwidth,
    xmin=-0.02, xmax=1.02,
    ymin=0,     ymax=0.32,
    xtick={0,0.2,0.4,0.6,0.8,1.0},
    xticklabels={$0$,$0.2$,$0.4$,$0.6$,$0.8$,$1$},
    ytick={0,0.05,0.10,0.15,0.20,0.25,0.30},
    xlabel={posterior drift $\varepsilon = Q_X(B)$},
    ylabel={Target excess risk},
    grid=major,
    grid style={gray!25},
    tick label style={font=\small},
    label style={font=\small},
    title style={font=\small},
]

\nextgroupplot[title={$n = 16$},
    legend to name=expTwoEpsPdfLegend,
    legend columns=4,
    legend style={font=\small, column sep=0.8em, legend cell align=left},
]
\addplot+[blue, solid, thick, mark=*, mark size=2pt]
    table[x=epsilon, y=mean] {\nSmTabICL};
\addlegendentry{TabPFN-ICL}
\addplot+[orange!90!black, solid, thick, mark=*, mark size=2pt]
    table[x=epsilon, y=mean] {\nSmTabICR};
\addlegendentry{TabPFN-ICR}
\addplot+[blue, dashed, thick, mark=square, mark size=2pt]
    table[x=epsilon, y=mean] {\nSmKNNICL};
\addlegendentry{$k$-NN-ICL}
\addplot+[orange!90!black, dashed, thick, mark=square, mark size=2pt]
    table[x=epsilon, y=mean] {\nSmKNNICR};
\addlegendentry{$k$-NN-ICR}

\nextgroupplot[title={$n = 512$}, ylabel={}]
\addplot+[blue, solid, thick, mark=*, mark size=2pt]
    table[x=epsilon, y=mean] {\nLgTabICL};
\addplot+[orange!90!black, solid, thick, mark=*, mark size=2pt]
    table[x=epsilon, y=mean] {\nLgTabICR};
\addplot+[blue, dashed, thick, mark=square, mark size=2pt]
    table[x=epsilon, y=mean] {\nLgKNNICL};
\addplot+[orange!90!black, dashed, thick, mark=square, mark size=2pt]
    table[x=epsilon, y=mean] {\nLgKNNICR};

\end{groupplot}

\node[anchor=north, yshift=-0.8cm]
    at ($(expTwoEpsPdfPlots c1r1.south)!0.5!(expTwoEpsPdfPlots c2r1.south)$)
    {\pgfplotslegendfromname{expTwoEpsPdfLegend}};

\end{tikzpicture}
\caption{Target excess risk vs.\ posterior drift $\varepsilon = Q_X(B)$ on the Weierstrass family at $\alpha = 1.0$ (smooth $\eta$), for $n \in \{16, 512\}$ over 5 seeds. $B$ is a union of $N = 40$ periodic sub-brackets, each covering the leftmost $\varepsilon$-fraction of one width-$2/N$ interval of $[-1, 1]$, so $B$ samples every macro-cycle of $\eta_Q$ uniformly and $Q_X(B) = \varepsilon$ holds exactly. \emph{Color}: ICL (blue) vs.\ ICR (orange). \emph{Line style}: TabPFN (solid, \textbullet) vs.\ $k$-NN (dashed, $\square$). ICL curves are exactly flat because ICL uses $Q$ labels and does not see the drift; ICR curves rise almost linearly in $\varepsilon$ across the whole range, saturating near $1 - 2R_Q^\ast \approx 0.306$ at $\varepsilon = 1$. At small $n$ (left, $n=16$) ICR beats ICL up to $\varepsilon \approx 0.4$--$0.5$; at large $n$ (right, $n=512$) ICR is roughly an order of magnitude below ICL at $\varepsilon = 0$ (variance-limited regime), but ICL overtakes it already by $\varepsilon \approx 0.08$. In both regimes the ICR bias tracks $\varepsilon$ almost linearly, matching the $\varepsilon + (\lambda / \kappa) (R_P - R_P^\ast)$ reduction in \Cref{lem:fundamental-reduction}.}
\label{fig:icr_eps_sweep_weier}
\end{figure}

\begin{table}[htbp]
\caption{Experiment 2, $\alpha = 1.0$, $n = 16$: target excess risk vs.\ posterior drift $\varepsilon = Q_X(B)$, mean$\pm$std over $5$ seeds (Weierstrass family, $A = 0.24$, $b = 0.85$, $f = 2$, $J = 12$; oracle $k_\alpha = \mathrm{round\_odd}(0.30\cdot n^{2\alpha/(2\alpha+1)})$; $m_{\mathrm{corpus}} = 10^{5}$; $n_{\mathrm{test}} = 400$).  Region $B$ is the leftmost $\varepsilon$-fraction of each of $N = 40$ equal-width periodic brackets partitioning $[-1, 1]$, so $Q_X(B) = \varepsilon$ exactly and $B$ samples every macro-cycle of $\eta_Q$ uniformly; $\eta_P = 1 - \eta_Q$ on $B$; ICL uses $Q$ labels (so ICL columns are $\varepsilon$-constant).}
\label{tab:exp2-eps-n16}
\vskip 0.1in
\centering\small\setlength{\tabcolsep}{4pt}
\begin{tabular}{ccccc}
\toprule
$\varepsilon$ & TabPFN-ICL & TabPFN-ICR & $k$-NN-ICL & $k$-NN-ICR \\
\midrule
$0.0$ & \meanstd{\nSmTabICL}{0}  & \meanstd{\nSmTabICR}{0}  & \meanstd{\nSmKNNICL}{0}  & \meanstd{\nSmKNNICR}{0}  \\
$0.1$ & \meanstd{\nSmTabICL}{1}  & \meanstd{\nSmTabICR}{1}  & \meanstd{\nSmKNNICL}{1}  & \meanstd{\nSmKNNICR}{1}  \\
$0.2$ & \meanstd{\nSmTabICL}{2}  & \meanstd{\nSmTabICR}{2}  & \meanstd{\nSmKNNICL}{2}  & \meanstd{\nSmKNNICR}{2}  \\
$0.3$ & \meanstd{\nSmTabICL}{3}  & \meanstd{\nSmTabICR}{3}  & \meanstd{\nSmKNNICL}{3}  & \meanstd{\nSmKNNICR}{3}  \\
$0.4$ & \meanstd{\nSmTabICL}{4}  & \meanstd{\nSmTabICR}{4}  & \meanstd{\nSmKNNICL}{4}  & \meanstd{\nSmKNNICR}{4}  \\
$0.5$ & \meanstd{\nSmTabICL}{5}  & \meanstd{\nSmTabICR}{5}  & \meanstd{\nSmKNNICL}{5}  & \meanstd{\nSmKNNICR}{5}  \\
$0.6$ & \meanstd{\nSmTabICL}{6}  & \meanstd{\nSmTabICR}{6}  & \meanstd{\nSmKNNICL}{6}  & \meanstd{\nSmKNNICR}{6}  \\
$0.7$ & \meanstd{\nSmTabICL}{7}  & \meanstd{\nSmTabICR}{7}  & \meanstd{\nSmKNNICL}{7}  & \meanstd{\nSmKNNICR}{7}  \\
$0.8$ & \meanstd{\nSmTabICL}{8}  & \meanstd{\nSmTabICR}{8}  & \meanstd{\nSmKNNICL}{8}  & \meanstd{\nSmKNNICR}{8}  \\
$0.9$ & \meanstd{\nSmTabICL}{9}  & \meanstd{\nSmTabICR}{9}  & \meanstd{\nSmKNNICL}{9}  & \meanstd{\nSmKNNICR}{9}  \\
$1.0$ & \meanstd{\nSmTabICL}{10} & \meanstd{\nSmTabICR}{10} & \meanstd{\nSmKNNICL}{10} & \meanstd{\nSmKNNICR}{10} \\
\bottomrule
\end{tabular}
\end{table}

\begin{table}[htbp]
\caption{Experiment 2, $\alpha = 1.0$, $n = 512$: target excess risk vs.\ posterior drift $\varepsilon = Q_X(B)$, mean$\pm$std over $5$ seeds (same setup as \Cref{tab:exp2-eps-n16}).}
\label{tab:exp2-eps-n512}
\vskip 0.1in
\centering\small\setlength{\tabcolsep}{4pt}
\begin{tabular}{ccccc}
\toprule
$\varepsilon$ & TabPFN-ICL & TabPFN-ICR & $k$-NN-ICL & $k$-NN-ICR \\
\midrule
$0.0$ & \meanstd{\nLgTabICL}{0}  & \meanstd{\nLgTabICR}{0}  & \meanstd{\nLgKNNICL}{0}  & \meanstd{\nLgKNNICR}{0}  \\
$0.1$ & \meanstd{\nLgTabICL}{1}  & \meanstd{\nLgTabICR}{1}  & \meanstd{\nLgKNNICL}{1}  & \meanstd{\nLgKNNICR}{1}  \\
$0.2$ & \meanstd{\nLgTabICL}{2}  & \meanstd{\nLgTabICR}{2}  & \meanstd{\nLgKNNICL}{2}  & \meanstd{\nLgKNNICR}{2}  \\
$0.3$ & \meanstd{\nLgTabICL}{3}  & \meanstd{\nLgTabICR}{3}  & \meanstd{\nLgKNNICL}{3}  & \meanstd{\nLgKNNICR}{3}  \\
$0.4$ & \meanstd{\nLgTabICL}{4}  & \meanstd{\nLgTabICR}{4}  & \meanstd{\nLgKNNICL}{4}  & \meanstd{\nLgKNNICR}{4}  \\
$0.5$ & \meanstd{\nLgTabICL}{5}  & \meanstd{\nLgTabICR}{5}  & \meanstd{\nLgKNNICL}{5}  & \meanstd{\nLgKNNICR}{5}  \\
$0.6$ & \meanstd{\nLgTabICL}{6}  & \meanstd{\nLgTabICR}{6}  & \meanstd{\nLgKNNICL}{6}  & \meanstd{\nLgKNNICR}{6}  \\
$0.7$ & \meanstd{\nLgTabICL}{7}  & \meanstd{\nLgTabICR}{7}  & \meanstd{\nLgKNNICL}{7}  & \meanstd{\nLgKNNICR}{7}  \\
$0.8$ & \meanstd{\nLgTabICL}{8}  & \meanstd{\nLgTabICR}{8}  & \meanstd{\nLgKNNICL}{8}  & \meanstd{\nLgKNNICR}{8}  \\
$0.9$ & \meanstd{\nLgTabICL}{9}  & \meanstd{\nLgTabICR}{9}  & \meanstd{\nLgKNNICL}{9}  & \meanstd{\nLgKNNICR}{9}  \\
$1.0$ & \meanstd{\nLgTabICL}{10} & \meanstd{\nLgTabICR}{10} & \meanstd{\nLgKNNICL}{10} & \meanstd{\nLgKNNICR}{10} \\
\bottomrule
\end{tabular}
\end{table}

\subsubsection{Experiment 3: varying smoothness $\alpha$ (via the Weierstrass function)}
\label{app:exp-smoothness}

\paragraph{Motivation.}
\Cref{thm:icl-bound,thm:icr-bound} predicts that ICL pays a smoothness penalty at rate $n^{-\alpha(\beta+1)/(2\alpha+1)}$, while ICR attains the $\alpha$-independent rate $n^{-(\beta+1)/2}$ in the large-corpus limit.
To isolate this effect, we vary $\alpha$ while holding the margin exponent $\beta$, the Bayes risk $R^*_Q$, and the Bayes classifier fixed by construction.

\pgfplotstableread[col sep=space]{
alpha   r1      r2      r3      r4      r5
1.000   0.0219  0.0144  0.0244  0.0054  0.0360
0.100   0.0760  0.0592  0.0903  0.0897  0.0841
0.010   0.0907  0.0793  0.1015  0.0811  0.0918
0.001   0.1144  0.1113  0.1035  0.0945  0.0972
}\nNbKNNICL
\pgfplotstablecreatecol[create col/expr={(\thisrow{r1}+\thisrow{r2}+\thisrow{r3}+\thisrow{r4}+\thisrow{r5})/5}]{mean}{\nNbKNNICL}

\pgfplotstableread[col sep=space]{
alpha   r1      r2      r3      r4      r5
1.000   0.0030  0.0023  0.0022  0.0014  0.0022
0.100   0.0012  0.0008  0.0012  0.0008  0.0007
0.010   0.0007  0.0016  0.0016  0.0016  0.0017
0.001   0.0012  0.0015  0.0012  0.0006  0.0020
}\nNbKNNICR
\pgfplotstablecreatecol[create col/expr={(\thisrow{r1}+\thisrow{r2}+\thisrow{r3}+\thisrow{r4}+\thisrow{r5})/5}]{mean}{\nNbKNNICR}

\pgfplotstableread[col sep=space]{
alpha   r1      r2      r3      r4      r5
1.000   0.0423  0.0165  0.0443  0.0145  0.0623
0.100   0.0465  0.0183  0.0655  0.0275  0.0307
0.010   0.0763  0.0066  0.0709  0.0544  0.0194
0.001   0.0995  0.1030  0.0821  0.0407  0.0765
}\nNbTabICL
\pgfplotstablecreatecol[create col/expr={(\thisrow{r1}+\thisrow{r2}+\thisrow{r3}+\thisrow{r4}+\thisrow{r5})/5}]{mean}{\nNbTabICL}

\pgfplotstableread[col sep=space]{
alpha   r1      r2      r3      r4      r5
1.000   0.0059  0.0079  0.0066  0.0072  0.0080
0.100   0.0047  0.0022  0.0047  0.0038  0.0051
0.010   0.0073  0.0018  0.0053  0.0063  0.0064
0.001   0.0018  0.0055  0.0049  0.0022  0.0068
}\nNbTabICR
\pgfplotstablecreatecol[create col/expr={(\thisrow{r1}+\thisrow{r2}+\thisrow{r3}+\thisrow{r4}+\thisrow{r5})/5}]{mean}{\nNbTabICR}

\addstd{\nNbKNNICL}\addstd{\nNbKNNICR}\addstd{\nNbTabICL}\addstd{\nNbTabICR}

\begin{figure}[htbp]
\centering
\begin{tikzpicture}
\begin{axis}[
    width=0.55\textwidth,
    height=0.42\textwidth,
    xmode=log,
    xmin=0.0005, xmax=2,
    ymin=0, ymax=0.12,
    xtick={0.001,0.01,0.1,1},
    xticklabels={$10^{-3}$,$10^{-2}$,$10^{-1}$,$10^{0}$},
    x dir=reverse,                       %
    xlabel={smoothness $\alpha$ (log scale)},
    ylabel={Target excess risk},
    title={$n = 256$},
    grid=major,
    grid style={gray!25},
    tick label style={font=\small},
    label style={font=\small},
    title style={font=\small},
    legend columns=4,
    legend style={
        at={(0.5,-0.28)}, anchor=north,
        font=\small, column sep=0.8em, legend cell align=left,
    },
]
\addplot+[blue, solid, thick, mark=*, mark size=2pt]
    table[x=alpha, y=mean] {\nNbTabICL};
\addlegendentry{TabPFN-ICL}
\addplot+[orange!90!black, solid, thick, mark=*, mark size=2pt]
    table[x=alpha, y=mean] {\nNbTabICR};
\addlegendentry{TabPFN-ICR}
\addplot+[blue, dashed, thick, mark=square, mark size=2pt]
    table[x=alpha, y=mean] {\nNbKNNICL};
\addlegendentry{$k$-NN-ICL}
\addplot+[orange!90!black, dashed, thick, mark=square, mark size=2pt]
    table[x=alpha, y=mean] {\nNbKNNICR};
\addlegendentry{$k$-NN-ICR}
\end{axis}
\end{tikzpicture}
\caption{Target excess risk vs.\ smoothness exponent $\alpha$ on the Weierstrass family at $n = 256$, with $\alpha \in \{1,\,10^{-1},\,10^{-2},\,10^{-3}\}$ (log-spaced, 5 seeds). \emph{Color}: ICL (blue) vs.\ ICR (orange). \emph{Line style}: TabPFN (solid, \textbullet) vs.\ $k$-NN (dashed, $\square$). The horizontal axis is logarithmic; the $\alpha$-axis is reversed so rougher $\eta_\alpha$ is on the right. Both $k$-NN-ICL and TabPFN-ICL trend upward as $\alpha$ decreases, matching the $n^{-2\alpha/(2\alpha+1)}$ prediction of \Cref{thm:icl-bound}; the $k$-NN-ICL curve saturates near $\alpha \to 0$ at the finite-$n$ noise floor.
ICR sits about an order of magnitude below ICL and is essentially flat in $\alpha$, consistent with the $\alpha$-independent $n^{-(\beta+1)/2}$ rate of \Cref{thm:icr-bound}; the ICL/ICR gap widens as $\alpha \to 0$.}
\label{fig:icl_icr_alpha_eta_pdf}
\end{figure}

\begin{table}[htbp]
\caption{Experiment 3, $n = 256$: target excess risk vs.\ smoothness $\alpha$, mean$\pm$std over $5$ seeds (Weierstrass family, $A = 0.24$, $b = 0.85$, $f = 2$, $J = 12$; oracle $k_\alpha = \mathrm{round\_odd}(0.30\cdot n^{2\alpha/(2\alpha+1)})$; $m_{\mathrm{corpus}} = 10^{5}$; $n_{\mathrm{test}} = 400$).}
\label{tab:exp3-eta-n256}
\vskip 0.1in
\centering\small\setlength{\tabcolsep}{4pt}
\begin{tabular}{ccccc}
\toprule
$\alpha$ & TabPFN-ICL & TabPFN-ICR & $k$-NN-ICL & $k$-NN-ICR \\
\midrule
$1$        & \meanstd{\nNbTabICL}{0} & \meanstd{\nNbTabICR}{0} & \meanstd{\nNbKNNICL}{0} & \meanstd{\nNbKNNICR}{0} \\
$10^{-1}$  & \meanstd{\nNbTabICL}{1} & \meanstd{\nNbTabICR}{1} & \meanstd{\nNbKNNICL}{1} & \meanstd{\nNbKNNICR}{1} \\
$10^{-2}$  & \meanstd{\nNbTabICL}{2} & \meanstd{\nNbTabICR}{2} & \meanstd{\nNbKNNICL}{2} & \meanstd{\nNbKNNICR}{2} \\
$10^{-3}$  & \meanstd{\nNbTabICL}{3} & \meanstd{\nNbTabICR}{3} & \meanstd{\nNbKNNICL}{3} & \meanstd{\nNbKNNICR}{3} \\
\bottomrule
\end{tabular}
\end{table}

\subsubsection{Experiment 4: varying corpus size $m$}

\paragraph{Prediction from the finite-corpus bound.}
The ICR bound in~\eqref{eq:icr-finite-m-main} is governed by
\[
    \max\!\left\{
        2L_P(2n/m)^{\alpha_P},
        \sqrt{8(\beta+2)/n}
    \right\},
\]
whose two terms represent localization bias and label variance, respectively.
The bound enters its variance-limited regime when
\[
    m\geq m_\star(n)
    =C_\star n^{1+1/(2\alpha_P)}
\]
as given exactly in~\eqref{eq:icr-saturation}. Thus increasing $m$ improves
the bias-dependent part of the guarantee below this scale, whereas the bound
no longer improves with $m$ above it. This is a sufficient, order-level
saturation condition rather than an exact crossover between ICR and ICL:
the multiplicative constant depends on $(L_P,\beta)$, and the upper bound does
not imply that ICL must be better below the threshold.

\paragraph{Setup and results.}
We use the smooth Weierstrass family from Experiment~3 with no shift
($P=Q$), fix $\alpha_P=1$, and vary the corpus size over powers of two for
$n\in\{64,128\}$. Figure~\ref{fig:m_sweep} marks the unit-constant reference
$m=n^{3/2}$: $512$ for $n=64$ and approximately $1448$ for $n=128$.
For both $k$-NN and TabPFN, ICR improves sharply as the corpus becomes larger
than the context and then reaches a low-error plateau, while ICL is unaffected
by corpus size. The transition is sharper for $k$-NN and more gradual for
TabPFN; the small nonmonotone fluctuations across adjacent corpus sizes are
consistent with finite-trial variation. Table~\ref{tab:m_sweep_full} reports
all configurations and seeds rather than selecting individual runs.

\pgfplotstableread[col sep=space]{
r1      r2      r3      r4      r5
0.1010  0.0586  0.0828  0.0675  0.0852
}\mSaKNNICL
\pgfplotstablecreatecol[create col/expr={(\thisrow{r1}+\thisrow{r2}+\thisrow{r3}+\thisrow{r4}+\thisrow{r5})/5}]{mean}{\mSaKNNICL}
\pgfplotstablecreatecol[create col/expr={
    sqrt( ((\thisrow{r1}-\thisrow{mean})^2 + (\thisrow{r2}-\thisrow{mean})^2
         + (\thisrow{r3}-\thisrow{mean})^2 + (\thisrow{r4}-\thisrow{mean})^2
         + (\thisrow{r5}-\thisrow{mean})^2) / 4 )
}]{std}{\mSaKNNICL}

\pgfplotstableread[col sep=space]{
r1      r2      r3      r4      r5
0.1139  0.0698  0.1456  0.0694  0.0841
}\mSaTabICL
\pgfplotstablecreatecol[create col/expr={(\thisrow{r1}+\thisrow{r2}+\thisrow{r3}+\thisrow{r4}+\thisrow{r5})/5}]{mean}{\mSaTabICL}
\pgfplotstablecreatecol[create col/expr={
    sqrt( ((\thisrow{r1}-\thisrow{mean})^2 + (\thisrow{r2}-\thisrow{mean})^2
         + (\thisrow{r3}-\thisrow{mean})^2 + (\thisrow{r4}-\thisrow{mean})^2
         + (\thisrow{r5}-\thisrow{mean})^2) / 4 )
}]{std}{\mSaTabICL}

\pgfplotstableread[col sep=space]{
m       r1      r2      r3      r4      r5
64      0.1907  0.1142  0.1142  0.1142  0.1907
128     0.1718  0.1114  0.1142  0.1641  0.1364
256     0.0896  0.0943  0.1098  0.1644  0.0865
512     0.0075  0.0099  0.0273  0.0256  0.0224
1024    0.0116  0.0038  0.0086  0.0112  0.0077
2048    0.0049  0.0051  0.0066  0.0079  0.0151
4096    0.0109  0.0040  0.0051  0.0110  0.0094
8192    0.0068  0.0076  0.0085  0.0050  0.0078
16384   0.0094  0.0093  0.0089  0.0090  0.0043
}\mSaKNNICR
\pgfplotstablecreatecol[create col/expr={(\thisrow{r1}+\thisrow{r2}+\thisrow{r3}+\thisrow{r4}+\thisrow{r5})/5}]{mean}{\mSaKNNICR}
\pgfplotstablecreatecol[create col/expr={
    sqrt( ((\thisrow{r1}-\thisrow{mean})^2 + (\thisrow{r2}-\thisrow{mean})^2
         + (\thisrow{r3}-\thisrow{mean})^2 + (\thisrow{r4}-\thisrow{mean})^2
         + (\thisrow{r5}-\thisrow{mean})^2) / 4 )
}]{std}{\mSaKNNICR}

\pgfplotstableread[col sep=space]{
m       r1      r2      r3      r4      r5
64      0.1552  0.1198  0.1099  0.1343  0.1473
128     0.0706  0.0813  0.0674  0.1034  0.0433
256     0.0450  0.0293  0.0526  0.0348  0.0279
512     0.0184  0.0188  0.0291  0.0193  0.0160
1024    0.0257  0.0115  0.0245  0.0224  0.0190
2048    0.0271  0.0178  0.0179  0.0148  0.0270
4096    0.0241  0.0117  0.0148  0.0157  0.0204
8192    0.0193  0.0142  0.0178  0.0154  0.0157
16384   0.0187  0.0144  0.0167  0.0232  0.0159
}\mSaTabICR
\pgfplotstablecreatecol[create col/expr={(\thisrow{r1}+\thisrow{r2}+\thisrow{r3}+\thisrow{r4}+\thisrow{r5})/5}]{mean}{\mSaTabICR}
\pgfplotstablecreatecol[create col/expr={
    sqrt( ((\thisrow{r1}-\thisrow{mean})^2 + (\thisrow{r2}-\thisrow{mean})^2
         + (\thisrow{r3}-\thisrow{mean})^2 + (\thisrow{r4}-\thisrow{mean})^2
         + (\thisrow{r5}-\thisrow{mean})^2) / 4 )
}]{std}{\mSaTabICR}

\pgfplotstableread[col sep=space]{
r1      r2      r3      r4      r5
0.0422  0.0995  0.0542  0.0678  0.0593
}\mSbKNNICL
\pgfplotstablecreatecol[create col/expr={(\thisrow{r1}+\thisrow{r2}+\thisrow{r3}+\thisrow{r4}+\thisrow{r5})/5}]{mean}{\mSbKNNICL}
\pgfplotstablecreatecol[create col/expr={
    sqrt( ((\thisrow{r1}-\thisrow{mean})^2 + (\thisrow{r2}-\thisrow{mean})^2
         + (\thisrow{r3}-\thisrow{mean})^2 + (\thisrow{r4}-\thisrow{mean})^2
         + (\thisrow{r5}-\thisrow{mean})^2) / 4 )
}]{std}{\mSbKNNICL}

\pgfplotstableread[col sep=space]{
r1      r2      r3      r4      r5
0.0624  0.1317  0.0904  0.0891  0.0832
}\mSbTabICL
\pgfplotstablecreatecol[create col/expr={(\thisrow{r1}+\thisrow{r2}+\thisrow{r3}+\thisrow{r4}+\thisrow{r5})/5}]{mean}{\mSbTabICL}
\pgfplotstablecreatecol[create col/expr={
    sqrt( ((\thisrow{r1}-\thisrow{mean})^2 + (\thisrow{r2}-\thisrow{mean})^2
         + (\thisrow{r3}-\thisrow{mean})^2 + (\thisrow{r4}-\thisrow{mean})^2
         + (\thisrow{r5}-\thisrow{mean})^2) / 4 )
}]{std}{\mSbTabICL}

\pgfplotstableread[col sep=space]{
m       r1      r2      r3      r4      r5
128     0.1142  0.1907  0.1142  0.1142  0.1142
256     0.2118  0.1441  0.1142  0.1595  0.0893
512     0.1061  0.1326  0.0894  0.1033  0.1057
1024    0.0094  0.0127  0.0050  0.0038  0.0099
2048    0.0015  0.0109  0.0083  0.0006  0.0015
4096    0.0041  0.0053  0.0029  0.0033  0.0040
8192    0.0038  0.0037  0.0051  0.0030  0.0060
16384   0.0036  0.0063  0.0060  0.0050  0.0043
32768   0.0038  0.0043  0.0043  0.0028  0.0049
}\mSbKNNICR
\pgfplotstablecreatecol[create col/expr={(\thisrow{r1}+\thisrow{r2}+\thisrow{r3}+\thisrow{r4}+\thisrow{r5})/5}]{mean}{\mSbKNNICR}
\pgfplotstablecreatecol[create col/expr={
    sqrt( ((\thisrow{r1}-\thisrow{mean})^2 + (\thisrow{r2}-\thisrow{mean})^2
         + (\thisrow{r3}-\thisrow{mean})^2 + (\thisrow{r4}-\thisrow{mean})^2
         + (\thisrow{r5}-\thisrow{mean})^2) / 4 )
}]{std}{\mSbKNNICR}

\pgfplotstableread[col sep=space]{
m       r1      r2      r3      r4      r5
128     0.0304  0.0786  0.0951  0.0820  0.0911
256     0.0354  0.0559  0.0513  0.0500  0.0439
512     0.0207  0.0442  0.0218  0.0129  0.0280
1024    0.0131  0.0153  0.0146  0.0066  0.0175
2048    0.0103  0.0165  0.0084  0.0068  0.0181
4096    0.0107  0.0129  0.0081  0.0083  0.0156
8192    0.0124  0.0093  0.0114  0.0104  0.0117
16384   0.0121  0.0096  0.0119  0.0122  0.0078
32768   0.0110  0.0113  0.0117  0.0118  0.0100
}\mSbTabICR
\pgfplotstablecreatecol[create col/expr={(\thisrow{r1}+\thisrow{r2}+\thisrow{r3}+\thisrow{r4}+\thisrow{r5})/5}]{mean}{\mSbTabICR}
\pgfplotstablecreatecol[create col/expr={
    sqrt( ((\thisrow{r1}-\thisrow{mean})^2 + (\thisrow{r2}-\thisrow{mean})^2
         + (\thisrow{r3}-\thisrow{mean})^2 + (\thisrow{r4}-\thisrow{mean})^2
         + (\thisrow{r5}-\thisrow{mean})^2) / 4 )
}]{std}{\mSbTabICR}

\pgfplotstablegetelem{0}{mean}\of\mSaKNNICL
\pgfmathsetmacro\mSaKNNICLFlat{\pgfplotsretval}

\pgfplotstablegetelem{0}{mean}\of\mSaTabICL
\pgfmathsetmacro\mSaTabICLFlat{\pgfplotsretval}

\pgfplotstablegetelem{0}{mean}\of\mSbKNNICL
\pgfmathsetmacro\mSbKNNICLFlat{\pgfplotsretval}

\pgfplotstablegetelem{0}{mean}\of\mSbTabICL
\pgfmathsetmacro\mSbTabICLFlat{\pgfplotsretval}

\begin{figure}[htbp]
\centering
\begin{tikzpicture}
\begin{groupplot}[
    group style={
        group size=2 by 1,
        horizontal sep=2.2cm,
        group name=mSweepPlots,
    },
    width=0.45\textwidth,
    height=0.36\textwidth,
    xmode=log,
    log basis x=2,
    xmin=48, xmax=45000,
    ymin=-0.005, ymax=0.22,
    ytick={0,0.05,0.10,0.15,0.20},
    xlabel={corpus size $m$},
    ylabel={Excess error on $Q$},
    tick label style={font=\small},
    label style={font=\small},
    title style={font=\small},
]

\nextgroupplot[title={$n = 64$, $\alpha_P = 1$ \;(5 seeds)},
    legend to name=mSweepLegend,
    legend columns=4,
    legend style={font=\small, column sep=0.8em, legend cell align=left},
]
\addplot[blue, solid, thick, mark=none]
    coordinates {(48,\mSaTabICLFlat)(45000,\mSaTabICLFlat)};
\addlegendentry{TabPFN-ICL}
\addplot[orange!90!black, solid, thick, mark=*, mark size=2pt]
    table[x=m, y=mean] {\mSaTabICR};
\addlegendentry{TabPFN-ICR}
\addplot[blue, dashed, thick, mark=none]
    coordinates {(48,\mSaKNNICLFlat)(45000,\mSaKNNICLFlat)};
\addlegendentry{$k$-NN-ICL}
\addplot[orange!90!black, dashed, thick, mark=square, mark size=2pt]
    table[x=m, y=mean] {\mSaKNNICR};
\addlegendentry{$k$-NN-ICR}
\draw[gray!70, dotted, very thick]
    (axis cs:512,-0.005) -- (axis cs:512,0.22);
\node[gray!70, font=\scriptsize, anchor=south east]
    at (axis cs:512,0.20) {$n^{3/2}{=}512$};

\nextgroupplot[title={$n = 128$, $\alpha_P = 1$ \;(5 seeds)}, ylabel={}]
\addplot[blue, solid, thick, mark=none]
    coordinates {(48,\mSbTabICLFlat)(45000,\mSbTabICLFlat)};
\addplot[orange!90!black, solid, thick, mark=*, mark size=2pt]
    table[x=m, y=mean] {\mSbTabICR};
\addplot[blue, dashed, thick, mark=none]
    coordinates {(48,\mSbKNNICLFlat)(45000,\mSbKNNICLFlat)};
\addplot[orange!90!black, dashed, thick, mark=square, mark size=2pt]
    table[x=m, y=mean] {\mSbKNNICR};
\draw[gray!70, dotted, very thick]
    (axis cs:1448,-0.005) -- (axis cs:1448,0.22);
\node[gray!70, font=\scriptsize, anchor=south east]
    at (axis cs:1448,0.20) {$n^{3/2}{\approx}1448$};

\end{groupplot}

\node[anchor=north, yshift=-0.8cm]
    at ($(mSweepPlots c1r1.south)!0.5!(mSweepPlots c2r1.south)$)
    {\pgfplotslegendfromname{mSweepLegend}};

\end{tikzpicture}
\caption{Excess error on $Q$ versus corpus size $m$, with $n$ fixed and
$\alpha_P = 1$.  \emph{Color}: ICL (blue) versus ICR (orange).
\emph{Line style}: TabPFN (solid, \textbullet) vs.\ $k$-NN (dashed,
$\square$).  Per $(n, \text{seed})$ we draw one corpus of size $m_{\max}$
and use nested prefixes $\mathcal{D}_m = \mathcal{D}_{m_{\max}}[:m]$; the
ICL demo set is drawn once per $n$ from that corpus and reused across
every $m$.  Consequently ICL is exactly constant along the $m$ axis for
each seed, and the horizontal ICL baselines are the across-seed means of
those per-seed constants.  Both ICR curves collapse sharply around the reference scale and
remain near their asymptotic value thereafter.  The dotted grey line marks the
unit-constant reference $n^{1+1/(2\alpha_P)}=n^{3/2}$ ($512$ for $n=64$,
$\approx 1448$ for $n=128$); the actual saturation threshold is
$C_\star\,n^{1+1/(2\alpha_P)}$ for some unknown constant.
All 5 seeds (42--46).  Setup: 1-D Weierstrass $\eta$-family with
$A = 0.24$, $b = 0.85$, $f = 2$, $J = 12$, oracle
$k_\alpha = \mathrm{round\_odd}(0.30 \cdot n^{2\alpha/(2\alpha+1)})$,
linspace $n_\text{test} = 400$, and joint demonstrations.}
\label{fig:m_sweep}
\end{figure}

\begin{table}[htbp]
\caption{Corpus-size sweep: excess error on $Q$, mean $\pm$ sample
standard deviation over five seeds.  ICL entries are constant along the
$m$ axis by construction (see Fig.~\ref{fig:m_sweep} caption) and are shown
once per $n$; ICR entries vary per $m$.}
\label{tab:m_sweep_full}
\centering
\scriptsize
\setlength{\tabcolsep}{4pt}
\begin{tabular}{rrcccc}
\toprule
$n$ & $m$ & TabPFN-ICL & TabPFN-ICR & $k$-NN-ICL & $k$-NN-ICR \\
\midrule
\multirow{9}{*}{64}
& 64    & \multirow{9}{*}{\meanstd{\mSaTabICL}{0}} & \meanstd{\mSaTabICR}{0}
        & \multirow{9}{*}{\meanstd{\mSaKNNICL}{0}} & \meanstd{\mSaKNNICR}{0} \\
& 128   & & \meanstd{\mSaTabICR}{1} & & \meanstd{\mSaKNNICR}{1} \\
& 256   & & \meanstd{\mSaTabICR}{2} & & \meanstd{\mSaKNNICR}{2} \\
& 512   & & \meanstd{\mSaTabICR}{3} & & \meanstd{\mSaKNNICR}{3} \\
& 1024  & & \meanstd{\mSaTabICR}{4} & & \meanstd{\mSaKNNICR}{4} \\
& 2048  & & \meanstd{\mSaTabICR}{5} & & \meanstd{\mSaKNNICR}{5} \\
& 4096  & & \meanstd{\mSaTabICR}{6} & & \meanstd{\mSaKNNICR}{6} \\
& 8192  & & \meanstd{\mSaTabICR}{7} & & \meanstd{\mSaKNNICR}{7} \\
& 16384 & & \meanstd{\mSaTabICR}{8} & & \meanstd{\mSaKNNICR}{8} \\
\midrule
\multirow{9}{*}{128}
& 128   & \multirow{9}{*}{\meanstd{\mSbTabICL}{0}} & \meanstd{\mSbTabICR}{0}
        & \multirow{9}{*}{\meanstd{\mSbKNNICL}{0}} & \meanstd{\mSbKNNICR}{0} \\
& 256   & & \meanstd{\mSbTabICR}{1} & & \meanstd{\mSbKNNICR}{1} \\
& 512   & & \meanstd{\mSbTabICR}{2} & & \meanstd{\mSbKNNICR}{2} \\
& 1024  & & \meanstd{\mSbTabICR}{3} & & \meanstd{\mSbKNNICR}{3} \\
& 2048  & & \meanstd{\mSbTabICR}{4} & & \meanstd{\mSbKNNICR}{4} \\
& 4096  & & \meanstd{\mSbTabICR}{5} & & \meanstd{\mSbKNNICR}{5} \\
& 8192  & & \meanstd{\mSbTabICR}{6} & & \meanstd{\mSbKNNICR}{6} \\
& 16384 & & \meanstd{\mSbTabICR}{7} & & \meanstd{\mSbKNNICR}{7} \\
& 32768 & & \meanstd{\mSbTabICR}{8} & & \meanstd{\mSbKNNICR}{8} \\
\bottomrule
\end{tabular}
\end{table}

\newpage

\subsection{Sentiment classification: full setup}

We use the sentiment classification dataset of \citet{diaz2018addressing}, a language benchmark with substantial annotator-induced posterior drift: each sentence is labeled by 4--5 individuals with $\approx 40\%$ label disagreement. The training split has 59{,}235 annotated examples from 1{,}481 annotators (up to 40 per annotator); the test split has 1{,}419 annotated examples from 878 annotators. This five-class classification ($\mc{Y} = \{\text{very negative, somewhat negative, neutral, somewhat positive, very positive}\}$) experiment qualitatively tests the implications of our binary classification theory. Each test annotator is treated as a target distribution $Q$. Sentences are encoded with BAAI/bge-base-en-v1.5~\citep{bge_embedding} and retrieved by Euclidean distance; the in-context learner is Qwen3.5-4B~\citep{qwen3.5}. ICL demonstrations are sampled uniformly without replacement from the annotator's own training data. Errors are averaged over test annotators and over 5 independent retrieval/sampling runs.

\paragraph{Posterior drift depends on the choice of source corpus.}
The same dataset can yield very different source distributions $P$ depending on how the retrieval corpus is assembled, and this choice directly controls the drift relative to each test annotator's $Q$:
\emph{(1) self} --- only the test annotator's own training data ($P=Q$, no drift, but the corpus is small, $\le 40$ examples);
\emph{(2) self $+$ 100 random others} --- mild drift, moderately larger corpus;
\emph{(3) all 1{,}481 annotators} --- the entire training set, severe drift but the largest corpus;
\emph{(4) others only} --- the test annotator deliberately excluded, extreme drift.
This sweep makes explicit a tradeoff that practitioners face whenever labels come from heterogeneous sources: a tightly curated corpus minimizes posterior drift at the cost of size, while pooling all available data maximizes corpus size at the cost of drift. Notably, much of the existing ICR literature treats the entire training set as the retrieval corpus by default, with no explicit accounting for posterior drift---which, as our results show, is exactly the regime where ICR can fail catastrophically.

\paragraph{Results.}
ICR strongly outperforms ICL when $P=Q$ and remains preferable under mild posterior drift for $n\le 16$. Under severe drift the ICR error stays near $50\%$ while ICL keeps improving with $n$; under extreme drift, ICR error rises to $\approx 63\%$, well above the ICL baseline. This finding not only verifies our theory, but also has a practical takeaway: na\"ively pooling all available data is not a safe default---a smaller, well-aligned corpus can dominate a larger, drifted one.

\paragraph{Effect of model scale.}
\Cref{fig:exp-sentiment-scale} repeats the experiment with the stronger Qwen3.5-27B model alongside Qwen3.5-4B. The 27B model is markedly better in the few-shot regime where the corpus is aligned with the target (e.g., $0.12$ vs.\ $0.26$ at $n=4$ under $P=Q$), yet its error under severe and extreme drift is essentially unchanged from the 4B model: model capacity does not remove the irreducible drift-induced bias, consistent with Proposition~\ref{prop:icr-lower-bound-simple}.

\begin{figure}[htbp]
\centering
\begin{tikzpicture}
\begin{groupplot}[
    group style={
        group size=2 by 1,
        horizontal sep=1.6cm,
        group name=sentpanels,
    },
    width=0.42\textwidth, height=0.3\textwidth,
    xmode=log, log basis x=2,
    xtick={4,8,16,32}, xticklabels={4,8,16,32},
    ymin=0, ymax=0.7,
    ytick={0,0.2,0.4,0.6},
    xlabel={$n$ (demonstrations)},
    ylabel={Test error},
    grid=major, grid style={gray!25},
    tick label style={font=\small}, label style={font=\small},
]
\nextgroupplot[title={Qwen3.5-4B}, title style={font=\small},
    legend to name=appSentLegend,
    legend columns=3,
    legend style={font=\small, column sep=0.8em, legend cell align=left, draw=none},
]
\addplot[blue, solid, thick, mark=*, mark size=1.8pt]
    coordinates {(4,0.5383) (8,0.4806) (16,0.3480) (32,0.1168)};
\addlegendentry{ICL}
\addplot[orange!50!yellow, solid, thick, mark=square*, mark size=1.8pt]
    coordinates {(4,0.2630) (8,0.1693) (16,0.0985) (32,0.0724)};
\addlegendentry{ICR, no drift ($P=Q$)}
\addplot[orange!90!black, solid, thick, mark=triangle*, mark size=2.2pt]
    coordinates {(4,0.2653) (8,0.2250) (16,0.1622) (32,0.1420)};
\addlegendentry{ICR, mild drift}
\addplot[red!80!black, solid, thick, mark=diamond*, mark size=2.2pt]
    coordinates {(4,0.5347) (8,0.5039) (16,0.4910) (32,0.4800)};
\addlegendentry{ICR, severe drift}
\addplot[red!40!black, solid, thick, mark=pentagon*, mark size=2.2pt]
    coordinates {(4,0.6334) (8,0.6277) (16,0.6149) (32,0.6190)};
\addlegendentry{ICR, extreme drift}
\nextgroupplot[title={Qwen3.5-27B}, title style={font=\small}, ylabel={}]
\addplot[blue, solid, thick, mark=*, mark size=1.8pt]
    coordinates {(4,0.5140) (8,0.4593) (16,0.3414) (32,0.1319)};
\addplot[orange!50!yellow, solid, thick, mark=square*, mark size=1.8pt]
    coordinates {(4,0.1179) (8,0.0939) (16,0.0833) (32,0.0974)};
\addplot[orange!90!black, solid, thick, mark=triangle*, mark size=2.2pt]
    coordinates {(4,0.1647) (8,0.1543) (16,0.1548) (32,0.1632)};
\addplot[red!80!black, solid, thick, mark=diamond*, mark size=2.2pt]
    coordinates {(4,0.5223) (8,0.5067) (16,0.5058) (32,0.5022)};
\addplot[red!40!black, solid, thick, mark=pentagon*, mark size=2.2pt]
    coordinates {(4,0.6189) (8,0.6198) (16,0.6064) (32,0.6121)};
\end{groupplot}
\node[anchor=north, yshift=-0.9cm]
    at ($(sentpanels c1r1.south)!0.5!(sentpanels c2r1.south)$)
    {\pgfplotslegendfromname{appSentLegend}};
\end{tikzpicture}
\caption{Test error vs.\ number of demonstrations $n$ on the age-biased sentiment dataset (878 test annotators, mean over 5 seeds), for Qwen3.5-4B (left) and Qwen3.5-27B (right). \emph{Blue}: ICL with demonstrations sampled uniformly without replacement from the annotator's own data. \emph{Yellow $\to$ red}: ICR under four corpus designs with increasing posterior drift. The stronger model improves ICR only where the corpus is aligned with the target; under severe and extreme drift its error is essentially unchanged---the drift-induced bias is irreducible.
Per-cell mean values are in \Cref{tab:exp3-sentiment-4b,tab:exp3-sentiment-27b}.
}
\label{fig:exp-sentiment-scale}
\end{figure}

\paragraph{Reproducibility.}
\emph{Data.} We use the age-biased sentiment re-annotation of Sentiment140 by \citet{diaz2018addressing}: 59{,}235 training rows (1{,}481 annotators, $\le 40$ each) and 1{,}419 test rows (878 annotators, 1--4 each). Each rating is an ordinal label in $\{1,\dots,5\}$, verbalized as \emph{Very negative}, \emph{Somewhat negative}, \emph{Neutral}, \emph{Somewhat positive}, \emph{Very positive}; test error is $1-{}$accuracy (exact match). Every test annotator with at least $5$ of their own training examples defines a target $Q$; results are averaged over all 878 test annotators and 5 seeds (42--46).

{\sloppy \emph{In-context learner.} Qwen3.5-4B and Qwen3.5-27B with greedy decoding (\texttt{do\_sample=False}), \texttt{max\_new\_tokens}${}=10$, and thinking disabled (\texttt{enable\_thinking=False}), using the system prompt: ``You are a sentiment classifier. Output only the label (Very negative, Somewhat negative, Neutral, Somewhat positive, Very positive) and nothing else. Do not explain, do not add punctuation, do not repeat the input.'' Each demonstration is formatted as \texttt{Input: \{sentence\}} $\backslash$n \texttt{Output: \{label\}}; demonstrations are concatenated, followed by the query \texttt{Input: \{sentence\}} $\backslash$n \texttt{Output:}. The completion is mapped to a rating by case-insensitive substring matching, longest label first.\par}

\emph{Retrieval.} Sentences are embedded with BAAI/bge-base-en-v1.5~\citep{bge_embedding} (unit-normalized); the $n$ nearest neighbors are retrieved by Euclidean distance (equivalently, cosine distance on unit vectors). ICL samples $n$ demonstrations uniformly without replacement from the annotator's own training data, fixed across that annotator's queries; ICR retrieves $n$ per-query nearest neighbors from the source corpus.

\paragraph{Per-cell mean (sentiment).}
We report means (averaged across 878 test annotators and 5 independent runs):

\pgfplotstableread[col sep=space]{
n     r1      r2      r3      r4      r5
4     0.5486  0.5371  0.5312  0.5326  0.5421
8     0.4755  0.4876  0.4838  0.4866  0.4693
16    0.3423  0.3495  0.3615  0.3337  0.3528
32    0.1190  0.1198  0.1122  0.1196  0.1132
}\ICLFourB
\pgfplotstablecreatecol[create col/expr={(\thisrow{r1}+\thisrow{r2}+\thisrow{r3}+\thisrow{r4}+\thisrow{r5})/5}]{mean}{\ICLFourB}

\pgfplotstableread[col sep=space]{
n     r1      r2      r3      r4      r5
4     0.2626  0.2632  0.2632  0.2626  0.2632
8     0.1720  0.1686  0.1686  0.1686  0.1686
16    0.1020  0.0975  0.0975  0.0980  0.0975
32    0.0735  0.0723  0.0723  0.0718  0.0723
}\ICRSelfFourB
\pgfplotstablecreatecol[create col/expr={(\thisrow{r1}+\thisrow{r2}+\thisrow{r3}+\thisrow{r4}+\thisrow{r5})/5}]{mean}{\ICRSelfFourB}

\pgfplotstableread[col sep=space]{
n     r1      r2      r3      r4      r5
4     0.2681  0.2756  0.2566  0.2561  0.2700
8     0.2204  0.2340  0.2128  0.2240  0.2340
16    0.1685  0.1636  0.1493  0.1520  0.1776
32    0.1471  0.1463  0.1276  0.1353  0.1538
}\ICRMildFourB
\pgfplotstablecreatecol[create col/expr={(\thisrow{r1}+\thisrow{r2}+\thisrow{r3}+\thisrow{r4}+\thisrow{r5})/5}]{mean}{\ICRMildFourB}

\pgfplotstableread[col sep=space]{
n     r1      r2      r3      r4      r5
4     0.5372  0.5322  0.5299  0.5393  0.5349
8     0.5065  0.5026  0.5033  0.5009  0.5064
16    0.4905  0.4953  0.4921  0.4885  0.4888
32    0.4810  0.4790  0.4776  0.4805  0.4821
}\ICRSevereFourB
\pgfplotstablecreatecol[create col/expr={(\thisrow{r1}+\thisrow{r2}+\thisrow{r3}+\thisrow{r4}+\thisrow{r5})/5}]{mean}{\ICRSevereFourB}

\pgfplotstableread[col sep=space]{
n     r1      r2      r3      r4      r5
4     0.6382  0.6277  0.6357  0.6344  0.6308
8     0.6315  0.6194  0.6300  0.6352  0.6223
16    0.6100  0.6105  0.6122  0.6218  0.6201
32    0.6290  0.6112  0.6107  0.6253  0.6187
}\ICRExtremeFourB
\pgfplotstablecreatecol[create col/expr={(\thisrow{r1}+\thisrow{r2}+\thisrow{r3}+\thisrow{r4}+\thisrow{r5})/5}]{mean}{\ICRExtremeFourB}

\addstd{\ICLFourB}\addstd{\ICRSelfFourB}\addstd{\ICRMildFourB}\addstd{\ICRSevereFourB}\addstd{\ICRExtremeFourB}

\begin{figure}[htbp]
\centering
\begin{tikzpicture}
\begin{axis}[
    width=0.42\textwidth,
    height=0.4\textwidth,
    xmode=log,
    log basis x=2,
    xtick={4,8,16,32},
    xticklabels={4,8,16,32},
    ymin=0, ymax=0.8,
    ytick={0,0.2,0.4,0.6,0.8},
    xlabel={$n$ (demonstrations)},
    ylabel={Test error},
    grid=major,
    grid style={gray!25},
    tick label style={font=\small},
    label style={font=\small},
    legend style={font=\small, at={(1.02,0.5)}, anchor=west, draw=none},
    legend cell align=left,
]

\addplot[blue, solid, thick, mark=*, mark size=2pt]
    table[x=n, y=mean] {\ICLFourB};
\addlegendentry{Qwen-ICL (baseline)}

\addplot[orange!50!yellow, solid, thick, mark=square*, mark size=2pt]
    table[x=n, y=mean] {\ICRSelfFourB};
\addlegendentry{Qwen-ICR, no drift ($P=Q$, self)}

\addplot[orange!90!black, solid, thick, mark=triangle*, mark size=2.5pt]
    table[x=n, y=mean] {\ICRMildFourB};
\addlegendentry{Qwen-ICR, mild drift}

\addplot[red!80!black, solid, thick, mark=diamond*, mark size=2.5pt]
    table[x=n, y=mean] {\ICRSevereFourB};
\addlegendentry{Qwen-ICR, severe drift}

\addplot[red!40!black, solid, thick, mark=pentagon*, mark size=2.5pt]
    table[x=n, y=mean] {\ICRExtremeFourB};
\addlegendentry{Qwen-ICR, extreme drift}

\end{axis}
\end{tikzpicture}
\caption{Test error vs.\ number of demonstrations $n$ on the age-biased
sentiment dataset (878 test annotators, Qwen3.5-4B). Each point is the
mean over $5$ independent runs (seeds 42--46). \emph{Blue}: Qwen-ICL with
demonstrations sampled uniformly without replacement from the annotator's own training data.
\emph{Yellow $\to$ red}: Qwen-ICR under four levels of posterior drift,
as a result of expanding the source corpus: no drift ($P=Q$, self only),
mild (self $+$ 100 random others), severe (all 1{,}481 annotators),
and extreme (all other annotators, self excluded). ICR sharply outperforms
ICL when the source corpus is well-aligned with the target distribution
and degrades as posterior drift increases. Uncertainty estimates are
reported in \Cref{tab:exp3-sentiment-4b}.}
\label{fig:icl_icr_sentiment_4b}
\end{figure}

\begin{table}[htbp]
\caption{Sentiment classification with Qwen3.5-4B: test error (mean $\pm$ std over $5$ seeds, $878$ test annotators) vs.\ number of demonstrations $n$ across the four corpus designs.}
\label{tab:exp3-sentiment-4b}
\centering\small\setlength{\tabcolsep}{4pt}
\begin{tabular}{rccccc}
\toprule
$n$ & Qwen-ICL & ICR ($P=Q$) & ICR (mild) & ICR (severe) & ICR (extreme) \\
\midrule
4   & \meanstd{\ICLFourB}{0} & \meanstd{\ICRSelfFourB}{0} & \meanstd{\ICRMildFourB}{0} & \meanstd{\ICRSevereFourB}{0} & \meanstd{\ICRExtremeFourB}{0} \\
8   & \meanstd{\ICLFourB}{1} & \meanstd{\ICRSelfFourB}{1} & \meanstd{\ICRMildFourB}{1} & \meanstd{\ICRSevereFourB}{1} & \meanstd{\ICRExtremeFourB}{1} \\
16  & \meanstd{\ICLFourB}{2} & \meanstd{\ICRSelfFourB}{2} & \meanstd{\ICRMildFourB}{2} & \meanstd{\ICRSevereFourB}{2} & \meanstd{\ICRExtremeFourB}{2} \\
32  & \meanstd{\ICLFourB}{3} & \meanstd{\ICRSelfFourB}{3} & \meanstd{\ICRMildFourB}{3} & \meanstd{\ICRSevereFourB}{3} & \meanstd{\ICRExtremeFourB}{3} \\
\bottomrule
\end{tabular}
\end{table}

\pgfplotstableread[col sep=space]{
n     r1      r2      r3      r4      r5
4     0.5072  0.5108  0.5185  0.5166  0.5169
8     0.4656  0.4513  0.4599  0.4577  0.4621
16    0.3430  0.3434  0.3372  0.3339  0.3497
32    0.1346  0.1340  0.1281  0.1399  0.1227
}\ICLBigB
\pgfplotstablecreatecol[create col/expr={(\thisrow{r1}+\thisrow{r2}+\thisrow{r3}+\thisrow{r4}+\thisrow{r5})/5}]{mean}{\ICLBigB}

\pgfplotstableread[col sep=space]{
n     r1      r2      r3      r4      r5
4     0.1169  0.1181  0.1181  0.1181  0.1181
8     0.0939  0.0939  0.0939  0.0939  0.0939
16    0.0835  0.0831  0.0835  0.0835  0.0831
32    0.0978  0.0974  0.0966  0.0979  0.0972
}\ICRSelfBigB
\pgfplotstablecreatecol[create col/expr={(\thisrow{r1}+\thisrow{r2}+\thisrow{r3}+\thisrow{r4}+\thisrow{r5})/5}]{mean}{\ICRSelfBigB}

\pgfplotstableread[col sep=space]{
n     r1      r2      r3      r4      r5
4     0.1669  0.1641  0.1600  0.1644  0.1682
8     0.1505  0.1567  0.1460  0.1556  0.1627
16    0.1549  0.1550  0.1554  0.1479  0.1608
32    0.1661  0.1661  0.1547  0.1665  0.1628
}\ICRMildBigB
\pgfplotstablecreatecol[create col/expr={(\thisrow{r1}+\thisrow{r2}+\thisrow{r3}+\thisrow{r4}+\thisrow{r5})/5}]{mean}{\ICRMildBigB}

\pgfplotstableread[col sep=space]{
n     r1      r2      r3      r4      r5
4     0.5248  0.5184  0.5345  0.5215  0.5124
8     0.5027  0.5038  0.5074  0.5088  0.5109
16    0.5079  0.5049  0.5030  0.5092  0.5041
32    0.4974  0.5032  0.5019  0.5065  0.5018
}\ICRSevereBigB
\pgfplotstablecreatecol[create col/expr={(\thisrow{r1}+\thisrow{r2}+\thisrow{r3}+\thisrow{r4}+\thisrow{r5})/5}]{mean}{\ICRSevereBigB}

\pgfplotstableread[col sep=space]{
n     r1      r2      r3      r4      r5
4     0.6241  0.6223  0.6218  0.6152  0.6111
8     0.6158  0.6232  0.6239  0.6227  0.6135
16    0.5987  0.5979  0.6159  0.6117  0.6076
32    0.6096  0.6019  0.6197  0.6185  0.6109
}\ICRExtremeBigB
\pgfplotstablecreatecol[create col/expr={(\thisrow{r1}+\thisrow{r2}+\thisrow{r3}+\thisrow{r4}+\thisrow{r5})/5}]{mean}{\ICRExtremeBigB}

\addstd{\ICLBigB}\addstd{\ICRSelfBigB}\addstd{\ICRMildBigB}\addstd{\ICRSevereBigB}\addstd{\ICRExtremeBigB}

\begin{figure}[htbp]
\centering
\begin{tikzpicture}
\begin{axis}[
    width=0.42\textwidth,
    height=0.4\textwidth,
    xmode=log,
    log basis x=2,
    xtick={4,8,16,32},
    xticklabels={4,8,16,32},
    ymin=0, ymax=0.8,
    ytick={0,0.2,0.4,0.6,0.8},
    xlabel={$n$ (demonstrations)},
    ylabel={Test error},
    grid=major,
    grid style={gray!25},
    tick label style={font=\small},
    label style={font=\small},
    legend style={font=\small, at={(1.02,0.5)}, anchor=west, draw=none},
    legend cell align=left,
]

\addplot[blue, solid, thick, mark=*, mark size=2pt]
    table[x=n, y=mean] {\ICLBigB};
\addlegendentry{Qwen-ICL (baseline)}

\addplot[orange!50!yellow, solid, thick, mark=square*, mark size=2pt]
    table[x=n, y=mean] {\ICRSelfBigB};
\addlegendentry{Qwen-ICR, no drift ($P=Q$, self)}

\addplot[orange!90!black, solid, thick, mark=triangle*, mark size=2.5pt]
    table[x=n, y=mean] {\ICRMildBigB};
\addlegendentry{Qwen-ICR, mild drift}

\addplot[red!80!black, solid, thick, mark=diamond*, mark size=2.5pt]
    table[x=n, y=mean] {\ICRSevereBigB};
\addlegendentry{Qwen-ICR, severe drift}

\addplot[red!40!black, solid, thick, mark=pentagon*, mark size=2.5pt]
    table[x=n, y=mean] {\ICRExtremeBigB};
\addlegendentry{Qwen-ICR, extreme drift}

\end{axis}
\end{tikzpicture}
\caption{Test error vs.\ number of demonstrations $n$ on the age-biased
sentiment dataset (878 test annotators, Qwen3.5-27B). Each point is the
mean over $5$ independent runs (seeds 42--46). \emph{Blue}: Qwen-ICL with
demonstrations sampled uniformly without replacement from the annotator's own training data.
\emph{Yellow $\to$ red}: Qwen-ICR under four levels of posterior drift,
as a result of expanding the source corpus: no drift ($P=Q$, self only),
mild (self $+$ 100 random others), severe (all 1{,}481 annotators),
and extreme (all other annotators, self excluded). Compared to Qwen3.5-4B
(\Cref{fig:icl_icr_sentiment_4b}), the stronger model lowers the ICR
error under no/mild drift but does not remove the irreducible bias under
severe or extreme drift, matching the theory. Uncertainty estimates are
reported in \Cref{tab:exp3-sentiment-27b}.}
\label{fig:icl_icr_sentiment_27b}
\end{figure}

\begin{table}[htbp]
\caption{Sentiment classification with Qwen3.5-27B: test error (mean $\pm$ std over $5$ seeds, $878$ test annotators) vs.\ number of demonstrations $n$ across the four corpus designs.}
\label{tab:exp3-sentiment-27b}
\centering\small\setlength{\tabcolsep}{4pt}
\begin{tabular}{rccccc}
\toprule
$n$ & Qwen-ICL & ICR ($P=Q$) & ICR (mild) & ICR (severe) & ICR (extreme) \\
\midrule
4   & \meanstd{\ICLBigB}{0} & \meanstd{\ICRSelfBigB}{0} & \meanstd{\ICRMildBigB}{0} & \meanstd{\ICRSevereBigB}{0} & \meanstd{\ICRExtremeBigB}{0} \\
8   & \meanstd{\ICLBigB}{1} & \meanstd{\ICRSelfBigB}{1} & \meanstd{\ICRMildBigB}{1} & \meanstd{\ICRSevereBigB}{1} & \meanstd{\ICRExtremeBigB}{1} \\
16  & \meanstd{\ICLBigB}{2} & \meanstd{\ICRSelfBigB}{2} & \meanstd{\ICRMildBigB}{2} & \meanstd{\ICRSevereBigB}{2} & \meanstd{\ICRExtremeBigB}{2} \\
32  & \meanstd{\ICLBigB}{3} & \meanstd{\ICRSelfBigB}{3} & \meanstd{\ICRMildBigB}{3} & \meanstd{\ICRSevereBigB}{3} & \meanstd{\ICRExtremeBigB}{3} \\
\bottomrule
\end{tabular}
\end{table}

\end{document}